\definecolor{mydarkblue}{rgb}{0,0.08,0.45} % from ICML template
\renewcommand{\citet}{\citep}
\newcommand{\E}{\mathop{\mathbb{E}}}
\newtheorem{theorem}{Theorem}
\newtheorem{corollary}{Corollary}
\newtheorem{lemma}{Lemma}
\newtheorem{assumption}{Assumption}
\newtheorem{remark}{Remark}
\newtheorem{proposition}{Proposition}
\title{Short-length Adversarial Training Helps LLMs Defend Long-length Jailbreak Attacks: \\ Theoretical and Empirical Evidence}
\author{%
  Shaopeng Fu$^1$ \ \ Liang Ding$^2$ \ \ Jingfeng Zhang$^{3,1}$ \ \ Di Wang$^1$\textsuperscript{*} \\
  $^1$King Abdullah University of Science and Technology \\
  $^2$The University of Sydney \ \
  $^3$The University of Auckland \\
  \texttt{shaopeng.fu@kaust.edu.sa}, \
  \texttt{liangding.liam@gmail.com} \\
  \texttt{jingfeng.zhang@auckland.ac.nz}, \
  \texttt{di.wang@kaust.edu.sa}
  % examples of more authors
  % \And
  % Coauthor \\
  % Affiliation \\
  % Address \\
  % \texttt{email} \\
  % \AND
  % Coauthor \\
  % Affiliation \\
  % Address \\
  % \texttt{email} \\
  % \And
  % Coauthor \\
  % Affiliation \\
  % Address \\
  % \texttt{email} \\
  % \And
  % Coauthor \\
  % Affiliation \\
  % Address \\
  % \texttt{email} \\
}
\begin{document}

\maketitle
\def\thefootnote{*}\footnotetext{Corresponding Author}  
\renewcommand{\thefootnote}{\arabic{footnote}}

\begin{abstract}
Jailbreak attacks against large language models (LLMs) aim to induce harmful behaviors in LLMs through carefully crafted adversarial prompts.
To mitigate attacks, one way is to perform adversarial training (AT)-based alignment, {\it i.e.}, training LLMs on some of the most adversarial prompts to help them learn how to behave safely under attacks.
During AT, the length of adversarial prompts plays a critical role in the robustness of aligned LLMs.
While long-length adversarial prompts during AT might lead to strong LLM robustness, their synthesis however is very resource-consuming, which may limit the application of LLM AT.
This paper focuses on adversarial suffix jailbreak attacks and unveils that to defend against a jailbreak attack with an adversarial suffix of length $\Theta(M)$, it is enough to align LLMs on prompts with adversarial suffixes of length $\Theta(\sqrt{M})$.
Theoretically, we analyze the adversarial in-context learning of linear transformers on linear regression tasks and prove a robust generalization bound for trained transformers.
The bound depends on the term $\Theta(\sqrt{M_{\text{test}}}/M_{\text{train}})$, where $M_{\text{train}}$ and $M_{\text{test}}$ are the numbers of adversarially perturbed in-context samples during training and testing.
Empirically, we conduct AT on popular open-source LLMs and evaluate their robustness against jailbreak attacks of different adversarial suffix lengths.
Results confirm a positive correlation between the attack success rate and the ratio of the square root of the adversarial suffix length during jailbreaking to the length during AT.
Our findings show that it is practical to defend against ``long-length'' jailbreak attacks via efficient ``short-length'' AT.
The code is available at \url{https://github.com/fshp971/adv-icl}.
\end{abstract}

\section{Introduction}

Large language models (LLMs)~\citep{brown2020language,touvron2023llama,liu2024deepseek,yang2024qwen2} are widely adopted in various real-world applications to assist human users~\citep{wang2025zo2,yao2025your,wang2025distzo2,wang2025flashdp,li2025can}, but their safety is found to be vulnerable toward jailbreak attacks~\citep{wei2023jailbroken}.
With carefully crafted adversarial prompts, one can ``jailbreak'' the safety mechanism of LLMs and induce arbitrary harmful behaviors~\citep{zou2023universal,chao2023jailbreaking,liu2024autodan}.
To tackle the challenge, recent studies~\citep{xhonneux2024efficient,mazeika2024harmbench,yu2024robust,casper2024defending} propose performing safety alignment through adversarial training~(AT)~\citep{madry2018towards} to enhance LLMs' robustness against jailbreaking.
A standard AT for LLMs would train them on jailbreak prompts synthesized by strong jailbreak attacks to learn to refuse these harmful instructions~\citep{mazeika2024harmbench}.

In such AT, the length of synthesized adversarial prompts used for model training is critical to the final jailbreak robustness of LLMs.
\citet{anil2024many} and \citet{xu2024bag} have shown that longer adversarial prompts enjoy stronger jailbreaking abilities.
Thus, it is reasonable to deduce that performing AT with longer adversarial prompts can help LLMs achieve stronger robustness to defend against ``long-length'' jailbreak attacks.
{\color{black} However, synthesizing long-length adversarial prompts in adversarial training is resource-consuming since it requires solving discrete optimization problems in high-dimensional spaces, which thus needs lots of GPU memory and training time.}
This may limit the application of AT in LLMs' safety alignment and further raises the following research question:
\textit{\textbf{How will the adversarial prompt length during AT affect trained LLMs' robustness against jailbreaking with different prompt lengths?}}

We study this research question by analyzing {\it suffix jailbreak attacks}, where each jailbreak prompt is constructed by concatenating a harmful instruction with a synthesized adversarial suffix.
Our main finding is:
{\bf To defend against a suffix jailbreak attack with suffix length of $\Theta(M)$, it is enough to adversarially train LLMs on adversarial prompts with suffix length of only $\Theta(\sqrt{M})$.}
In other words, we show that it is possible to defend long-length jailbreak attacks via efficient short-length AT.

Our finding is supported by {\it theoretical} and {\it empirical} evidence.
Theoretically, we leverage the {\it in-context learning theory}~\citep{von2023transformers,zhang2024trained} to investigate how linear transformers learn linear regression tasks from in-context task samples under AT.
To better simulate suffix jailbreak attacks in real-world LLMs, our analysis introduces a new {\it in-context adversarial attack}.
Concretely, for any in-context task sample, this attack will adversarially perturb the last several in-context training points to maximize the squared prediction error that linear transformers made on the in-context test point.
Under our theoretical framework, we prove a robust generalization bound for adversarially trained linear transformers.
This bound has a positive correlation with the term $\Theta(\sqrt{M_{\text{test}}} / M_{\text{train}})$, where $M_{\text{train}}$ and $M_{\text{test}}$ are the number of perturbed in-context points in training and testing in-context task samples, respectively.

Empirically, we conduct AT with the GCG attack~\citep{zou2023universal}, one of the most effective jailbreak attacks, under various adversarial suffix lengths on five popular real-world LLMs and evaluate their robustness against jailbreak attacks with different adversarial suffix lengths.
We use the jailbreak attack success rate (ASR) to express the robust generalization error of trained LLMs and find that this ASR has a clear positive correlation with the ratio of the square root of test-time adversarial suffix length to the AT adversarial suffix length.
Such a correlation empirically verifies our main finding.
We also find that AT with an adversarial suffix (token) length of $20$ is already able to reduce the ASR of jailbreak attacks with an adversarial suffix (token) length of up to $120$ by at least $30\%$ in all experiments.

\section{Related works}
\label{sec:related-works}

\textbf{Jailbreak attacks.}
Jailbreaking~\citep{wei2023jailbroken} can be seen as adversarial attacks~\citep{szegedy2014intriguing,goodfellow2015explaining} toward LLMs, which aim to synthesize adversarial prompts to induce targeted harmful behaviors from LLMs.
Many efforts have been made on token-level jailbreak attacks, {\it i.e.}, searching adversarial prompts in the token space of LLMs, which can be achieved via gradient-based optimization~\citep{shin2020autoprompt,guo2021gradientbased,zou2023universal,liao2024amplegcg,schwinn2024soft,zhu2024autodan}, heuristic greedy search~\citep{sadasivan2024fast,hayase2024querybased,jin2024jailbreaking}, or fine-tuning prompt generators from pre-trained LLMs~\citep{paulus2024advprompter}.
Other attempts include word-level adversarial prompt searching~\citep{liu2024autodan} or directly prompting LLMs to generate adversarial prompts~\citep{chao2023jailbreaking,liu2024turbo}.
Our work focuses on token-level jailbreaking since it make it easier for us to control the adversarial prompt length for our analysis.
More recent studies have found that increasing the length of adversarial prompts by adding more harmful demonstrations~\citep{anil2024many,wang2023adversarial,wei2023jailbreak} or synthesizing longer adversarial suffixes~\citep{xu2024bag} can make jailbreaking more effective.
These works motivate us to investigate the problem of defending against ``long-length'' jailbreak attacks.

\textbf{Adversarial training on LLMs.}
To defend against jailbreak attacks, a large body of studies focus on aligning LLMs to refuse responding jailbreak prompts~\citep{ouyang2022training,rafailov2023direct,qi2024safety,qi2024finetuning,chen2024aligning}.
More recent works have started to adopt adversarial training~(AT)~\citep{madry2018towards} to align LLMs.
\citet{mazeika2024harmbench} trained LLMs on (discrete) adversarial prompts synthesized by GCG attack~\citep{zou2023universal}, in which they cached the intermediate synthesized results to reduce the heavy cost of searching adversarial prompts from scratch.
Meanwhile, various studies~\citep{xhonneux2024efficient,casper2024defending,sheshadri2024latent,yu2024robust} conduct AT with adversarial examples found in the continuous embedding space rather than the discrete text space since searching in the continuous embedding space is more computationally efficient.
Nevertheless, as a preliminary study of the length of adversarial prompts during AT, our work only analyzes AT with discrete adversarial prompts.

\textbf{In-context learning theory (ICL).}
Transformer-based large models like LLMs are strong in performing ICL:
Given a series of inputs (also known as ``prompt'') specified by a certain task, LLMs can make predictions well for this certain task without adjusting model parameters.
Current theories in understanding ICL can be roughly divided into two categories.
The first category aims to understand ICL via constructing explicit multi-layer transformers to simulate the optimization process of learning function classes~\citep{garg2022can,von2023transformers,ahn2023transformers,chen2024transformers,mahankali2024one,wang2024incontext}.
The second category focuses on directly analyzing the training~\citep{zhang2024trained,yang2024incontext,huang2023context,wu2024how,lin2024transformers} and generalization~\citep{lu2024asymptotic,magen2024benign,frei2024trained,shi2024why} of simple self-attention models ({\it i.e.}, one-layer transformer).
\citet{anwar2024adversarial} is the first to study adversarial attacks against linear transformers and finds that an attack can always succeed by perturbing only a single in-context sample.
However, their analysis allows samples to be perturbed in the entire real space, which might not appropriately reflect real-world settings since real-world adversarial prompts can only be constructed from token/character spaces of limited size.
Unlike \citet{anwar2024adversarial}, we propose a new ICL adversarial attack that requires each adversarial suffix token to be perturbed only within restricted spaces, which thus can be a better tool for understanding real-world jailbreaking.

Finally, we notice that \citet{wei2023jailbreak} also recognizes the critical role that the number of adversarial in-context samples plays in ICL-based attacks.
They present a theoretical analysis (not based on ICL theory) for adversarial attacks against ICL text classification and characterize the minimum number of in-context adversarial samples required to increase the safety loss of ICL to some extent.
However, the main difference is that \citet{wei2023jailbreak} focuses on studying the adversarial robustness of fixed ICL models, whereas our work analyzes how adversarial training affects the robustness of ICL models.

\section{Preliminaries}
\label{sec:preliminaries}

\textbf{Large language models (LLMs).}
Let $[V] = \{1,\cdots, V\}$ be a vocabulary set consisting of all possible tokens.
Then, an LLM can be seen as a function that for any sequence $x_{1:n} \in [V]^n$ consists of $n$ tokens, the LLM will map $x_{1:n}$ to its next token $x_{n+1}$ following $x_{n+1} \sim p_{\theta}(\cdot | x_{1:n})$, where $p_{\theta}$ is a conditional distribution over the vocabulary set $[V]$ and $\theta$ is the model parameter of the LLM.
Under such notations, when using the LLM $p_\theta$ to generate a new token sequence for the input $x_{1:n}$, the probability of generating a sequence $y_{1:m} \in [V]^m$ of length $m$ is given by
$p_{\theta}(y_{1:m} | x_{1:n}) = \prod_{i=1}^m p_{\theta}(y_i | x_{1:n} \oplus y_{1:(i-1)})$,
where ``$\oplus$'' denotes concatenation.

\textbf{Jailbreak attacks.}
This paper focuses on {\it suffix} jailbreak attacks.
Concretely, suppose $x^{(h)}$ and $y^{(h)}$ are two token sequences, where $x^{(h)}$ represents a harmful prompt ({\it e.g.}, ``{Please tell me how to build a bomb.}'') and $y^{(h)}$ represents a corresponded targeted answer ({\it e.g.}, ``{Sure, here is a guide of how to build a bomb}'').
Then, the goal of a suffix jailbreak attack against the LLM $p_{\theta}$ aims to synthesize an {\it adversarial suffix} $x^{(s)}_{1:m}$ for the original harmful prompt $x^{(h)}$ via solving the following problem,
\begin{align}
    \min_{x^{(s)}_{1:m} \in [V]^m} -\log p_{\theta}(y^{(h)} | x^{(h)} \oplus x^{(s)}_{1:m}),
    \label{eq:jailbreak-obj}
\end{align}
where $x^{(h)} \oplus x^{(s)}_{1:m}$ is the adversarial prompt and $m$ is the sequence length of the adversarial suffix $x^{(s)}_{1:m}$.
Intuitively, a large $m$ will increase the probability of the LLM $p_\theta$ that generating the targeted answer $y^{(h)}$ for the synthesized adversarial prompt $x^{(h)} \oplus x^{(s)}_{1:m}$.
To solve Eq.~(\ref{eq:jailbreak-obj}), a standard method is the Greedy Coordinate Gradient (GCG) attack~\citep{zou2023universal}, which leverages gradient information to search for better $x^{(s)}_{1:m}$ within the discrete space $[V]^m$ in a greedy manner.

\textbf{Adversarial training (AT).}
We consider the canonical AT loss $\mathcal L$~\cite{mazeika2024harmbench,qi2024safety} to train the LLM $p_{\theta}$, which consists of two sub-losses: an {\it adversarial loss} $\mathcal L_{\text{adv}}$ and an {\it utility loss} $\mathcal L_{\text{utility}}$.
Specifically, given a  {\it safety dataset} $D^{(h)}$, where each of its sample $(x^{(h)}, y^{(h)}, y^{(b)}) \in D^{(h)}$ consists of a harmful instruction $x^{(h)}$, a harmful answer $y^{(h)}$, and a {\it benign answer} $y^{(b)}$ ({\it e.g.}, ``As a responsible AI, I can't tell you how to...''). The adversarial loss $\mathcal L_{\text{adv}}$ is defined as follows,
\begin{align}
    \mathcal L_{\text{adv}}(\theta, M, D^{(h)}) %\nonumber \\
    := \E_{(x^{(h)},y^{(h)},y^{(b)}) \in D^{(h)}} [-\log p_{\theta}(y^{(b)} | x^{(h)} \oplus x_{1:m}^{(s)})],
    \label{eq:loss-at:term-adv}
\end{align}
where $x^{(s)}_{1:m}$ is the adversarial suffix obtained from Eq.~(\ref{eq:jailbreak-obj}) and $m$ is the adversarial suffix length.
Note that the probability terms in Eqs.~(\ref{eq:jailbreak-obj}) and~(\ref{eq:loss-at:term-adv}) look similar to each other.
The difference is that the term in Eq.~(\ref{eq:jailbreak-obj}) denotes the probability that $p_\theta$ generates the harmful answer $y^{(h)}$ for the adversarial prompt, while that in Eq.~(\ref{eq:loss-at:term-adv}) denotes the probability of generating the benign answer $y^{(b)}$.
Besides, let $D^{(u)}$ be a {\it utility dataset} where each of its sample $(x^{(u)}, y^{(u)}) \in D^{(u)}$ consists of a pair of normal instruction and answer.
Then, the utility loss $\mathcal L_{\text{utility}}$ is given by
\begin{align*}
    \mathcal L_{\text{utility}}(\theta, D^{(u)}) %\nonumber\\
    := \E_{(x^{(u)}, y^{(u)}) \in D^{(u)}} [ -\log p_{\theta}(y^{(u)} | x^{(u)} ) ].
\end{align*}
Thus, the overall AT problem for improving the jailbreak robustness of the LLM $p_\theta$ is given as
\begin{align}
    \min_{\theta} \{\alpha \mathcal L_{\text{adv}}(\theta,M,D^{(h)})
    + (1-\alpha) \mathcal L_{\text{utility}}(\theta,D^{(u)}) \},
    \label{eq:loss-at}
\end{align}
where $\alpha \in [0,1]$ is a factor that balances between the adversarial and utility sub-losses.
The idea behind such a loss design is that:
(1)~help LLM learn to respond harmlessly even when strong jailbreak prompts present (achieved via $\mathcal L_{\text{adv}}$),
(2)~retain the utility of LLM gained from pre-training (achieved via $\mathcal L_{\text{utility}}$).
Intuitively, a larger adversarial suffix length $m$ during AT will help the LLM gain robustness against jailbreak attacks with longer adversarial suffixes.

\section{Theoretical evidence}
\label{sec:adv-icl}

This section establishes the theoretical foundation of how ``short-length'' AT can defend against ``long-length'' jailbreaking.
Our analysis is based on the in-context learning (ICL) theory~\citep{zhang2024trained,shi2024why,anwar2024adversarial}, and we will  bridge the ICL theory and the LLM AT problem defined in Eq.~(\ref{eq:loss-at}) later (in Section~\ref{sec:adv-icl:bridge}).
Here we first introduce the necessary notations to describe the problem.
To avoid confusion, we note that {\bf all notations in this section will only be used within this section and have no relevance to those in other sections} ({\it e.g.}, Section~\ref{sec:preliminaries}).

\textbf{In-context learning (ICL).}
In the ICL theory, a {\it prompt} with length $N$ related to a specific {\it task} indexed by $\tau$ is defined as $(x_{\tau,1}, y_{\tau,1}, \cdots, x_{\tau,N}, y_{\tau,N}, x_{\tau,q})$, where $x_{\tau,i} \in \mathbb{R}^d$ is the $i$-th in-context training sample, $y_{\tau,i} \in \mathbb{R}$ is the label for the $i$-th training sample, and $x_{\tau,q} \in \mathbb{R}^d$ is the in-context query sample.
Then, the task-specific ICL input $E_\tau$ is defined as
\begin{align}
    E_\tau := \begin{pmatrix}
        x_{\tau,1} & \cdots & x_{\tau,N} & x_{\tau,q} \\
        y_{\tau,1} & \cdots & y_{\tau,N} & 0 \\
    \end{pmatrix} \in \mathbb{R}^{(d+1) \times (N+1)}.
    \label{eq:icl:embedding}
\end{align}
Given an ICL input $E_\tau$ of task $\tau$, the goal of an ICL model is to make a prediction based on $E_\tau$ for the query sample $x_{\tau,q}$.
Such an ICL model design aims to model the ability of real-world LLMs in making decisions based on prompting without updating model parameters.

\textbf{Linear self-attention (LSA) models.}
LSA models are a kind of linear transformer that has been widely adopted in existing theoretical ICL studies.
\citet{ahn2024linear} empirically show that LSA models share similar properties with non-linear ones and thus are useful for understanding transformers.
We follow \citet{zhang2024trained} to study the following single-layer LSA model,
%$f_{\text{LSA},\theta}$,
\begin{align*}
    f_{\text{LSA},\theta}(E_\tau) %\nonumber \\
    := \left[ E_{\tau} + W^V E_\tau \cdot \frac{E_\tau^\top W^{KQ} E_{\tau}}{N} \right]
    \in \mathbb{R}^{(d+1)\times (N+1)},
\end{align*}
where $\theta := (W^V, W^{KQ})$ is the model parameter, $W^V \in \mathbb{R}^{(d+1) \times (d+1)}$ is the value weight matrix, $W^{KQ} \in \mathbb{R}^{(d+1)\times (d+1)}$ is a matrix merged from the key and query weight matrices of attention models, $E_\tau \in \mathbb{R}^{(d+1) \times (N+1)}$ is the task-specific ICL input, and $N$ is the prompt length.
The prediction $\hat y_{q,\theta}$ for the query sample $x_{\tau,q}$ is given by the right-bottom entry of the output matrix of the LSA model, {\it i.e.}, $\hat y_{q,\theta}(E_\tau) := f_{\text{LSA},\theta}(E_{\tau})_{(d+1),(N+1)}$.
We further follow \citet{zhang2024trained} to denote that
% $W^\square = \begin{pmatrix} W^\square_{11} & w^\square_{12} \\ (w^\square_{21})^\top & w^\square_{22} \end{pmatrix} \in \mathbb{R}^{(d+1)\times (d+1)},$
\begin{align*}
W^\square = \begin{pmatrix} W^\square_{11} & w^\square_{12} \\ (w^\square_{21})^\top & w^\square_{22} \end{pmatrix} \in \mathbb{R}^{(d+1)\times (d+1)},
\end{align*}
% and
% $W^{KQ} = \begin{pmatrix} W^{KQ}_{11} & w^{KQ}_{12} \\ (w^{KQ}_{21})^\top & w^{KQ}_{22} \end{pmatrix} \in \mathbb{R}^{(d+1)\times (d+1)}$,
where
$\square \in \{V, \ KQ\}$,
$W^\square_{11} \in \mathbb{R}^{d\times d}$,
$w^\square_{12}, w^\square_{21} \in \mathbb{R}^{d\times 1}$
and $w^{\square}_{22} \in \mathbb{R}$.
% $W^V_{11}, W^{KQ}_{11} \in \mathbb{R}^{d\times d}$,
% $w^V_{12}, w^V_{21}, w^{KQ}_{12}, w^{KQ}_{21} \in \mathbb{R}^{d\times 1}$
% and $w^{V}_{22}, W^{KQ}_{22} \in \mathbb{R}$.
Under this setting, the model prediction $\hat y_{q,\theta}$ can be further simplified as follows,
\begin{align}
    \hat y_{q,\theta}(E_\tau) := f_{\text{LSA},\theta}(E_{\tau})_{(d+1)\times(N+1)} %\nonumber \\
    = \begin{pmatrix}(w^V_{21})^\top & w^V_{22} \end{pmatrix} \cdot \frac{E_\tau E_\tau^\top}{N} \cdot \begin{pmatrix} W^{KQ}_{11} \\ (w^{KQ}_{21})^\top \end{pmatrix} \cdot x_{\tau,q}.
    \label{eq:icl:lsa-prediction}
\end{align}

\textbf{Other notations.}
For any $n \in \mathbb{N}^+$, we denote $[n] := \{1, \cdots, n\}$.
For any $A \in \mathbb{R}^{n\times m}$, we denote
$\|A\|_{2,\infty} := \max_{1\leq i \leq m} \|A_{i,:}\|_2$,
$\|A\|_2$ be the operator norm,
and $\|A\|_F$ be the Frobenius norm.
For any $A \in \mathbb{R}^{n\times n}$, we denote $\mathrm{Tr}(A) := \sum_{i=1}^n A_{i,i}$.
We use standard big O notations $\mathcal O(\cdot)$ and $\Theta(\cdot)$.

\subsection{Problem definition for adversarial ICL}

We now formalize the AT problem in ICL with the previously introduced notations.
We focus on the linear regression task and introduce a novel ICL ``suffix'' adversarial attack, where in-context adversarial points are appended to the end of ICL inputs, to analyze the robustness of LSA models.

\textbf{In-context linear regression.}
For any task indexed by $\tau$, we assume that there is a task weight $w_\tau\in \mathbb{R}^d$ drawn from $w_{\tau} \sim \mathcal N(0, I_{d})$.
Besides, for any in-context training point $x_{\tau,i} \ (1\leq i \leq N)$ and the query point $x_{\tau,q}$ (see Eq.~(\ref{eq:icl:embedding})), we assume that they are drawn from $x_{\tau,i}, x_{\tau,q} \sim \mathcal N(0, \Lambda)$, where $\Lambda \in \mathbb{R}^{d\times d}$ is a positive-definite covariance matrix.
Moreover, the ground-truth labels of training points $x_{\tau,i}$ and the query point $x_{\tau,q}$ are given by $y_{\tau,i} = w_\tau^\top x_{\tau,i}$ and $y_{\tau,q} = w_\tau^\top x_{\tau,q}$.

\textbf{ICL suffix adversarial attack.} % \& robust error.}
Our novel adversarial attack against ICL models is launched via concatenating (clean) prompt embedding matrices with adversarial embedding suffixes.
Specifically, for an ICL input $E_\tau$ of length $N$ (see Eq.~(\ref{eq:icl:embedding})), we will form its corresponding adversarial ICL input $E^{\text{adv}}_{\tau,M} \in \mathbb{R}^{(d+1) \times (N+M+1)}$ by concatenating $E_{\tau}$ with an adversarial suffix of length $M$ as follows,
\begin{align}
    E^{\text{adv}}_{\tau,M} %\nonumber \\
    &:= \begin{pmatrix}
        \underbrace{ \begin{pmatrix} X_{\tau} \\ Y_{\tau} \end{pmatrix} }_{\shortstack{\tiny Training Data of Length $N$}}
        &
        \underbrace{ \begin{pmatrix} X^{\text{sfx}}_{\tau} + \Delta_\tau \\ Y^{\text{sfx}}_{\tau} \end{pmatrix} }_{\shortstack{\tiny Adversarial Suffix of Length $M$}}
        &
        \underbrace{ \begin{pmatrix} x_{\tau,q} \\ 0 \end{pmatrix} }_{\shortstack{\tiny Query Sample From $E_\tau$}}
    \end{pmatrix},
    \label{eq:icl:adv-input}
\end{align}
where $X_\tau := (x_{\tau,1} \ \ \cdots \ \ x_{\tau,N}) \in \mathbb{R}^{d\times N}$ and $Y_\tau := (y_{\tau,1} \ \ \cdots \ \ y_{\tau,N}) \in \mathbb{R}^{1\times N}$
denote the $N$ original training samples and labels, and
$X^{\text{sfx}}_{\tau} := (x^{\text{sfx}}_{\tau,1} \ \ \cdots \ \ x^{\text{sfx}}_{\tau,M}) \in \mathbb{R}^{d \times M}$,
$Y^{\text{sfx}}_{\tau} := (y^{\text{sfx}}_{\tau,1} \ \ \cdots \ \ y^{\text{sfx}}_{\tau,M}) \in \mathbb{R}^{1 \times M}$,
and $\Delta^{\text{sfx}}_{\tau} := (\delta_{\tau,1} \ \ \cdots \ \ \delta_{\tau,M}) \in \mathbb{R}^{d \times M}$
denote the new $M$ clean suffix samples, clean suffix labels, and adversarial perturbations.
The clean suffix samples $X^{\text{sfx}}_\tau$ and labels $Y^{\text{sfx}}_\tau$ here follow the same distribution as those in-context data in the embedding $E_\tau$, {\it i.e.}, $x^{\text{sfx}}_{\tau,i} \sim \mathcal N(0,\Lambda)$ and $y^{\text{sfx}}_{\tau,i} = w_\tau^\top x^{\text{sfx}}_{\tau,i}$ hold for every $i \in [M]$.
For the adversarial perturbation matrix $\Delta_\tau$, we require each perturbation $\delta_{\tau,i}$ is restricted within a ball-sphere as $\|\delta_{\tau,i}\|_2 \leq \epsilon$, where $\epsilon > 0$ is the perturbation radius.
This aims to simulate that in jailbreak attacks, and each adversarial token is searched within a token vocabulary set of limited size.

The goal of the ICL adversarial attack is to add an optimal suffix adversarial perturbation matrix $\Delta_\tau$ to maximize the difference between the model prediction $\hat y_q(E^{\text{adv}}_{\tau})$ based on the adversarial ICL input $E^{\text{adv}}_\tau$ and the ground-truth query label $y_{\tau,q}$.
We adopt the squared loss to measure such a prediction difference, which thus leads to the robust generalization error for the model $f^{\text{LSA}}_{\theta}$ as
\begin{align}
    &\mathcal R^{\text{adv}}(\theta,M)
    % := \E_{w_\tau, X_{\tau}, X^{\text{sfx}}_{\tau}}
    = \E_{\tau} \max_{\|\Delta_\tau^\top\|_{2,\infty} \leq \epsilon} \frac{1}{2} | \hat y_{q,\theta}(E^{\text{adv}}_{\tau,M}) - y_{\tau,q} |^2,
    \label{eq:icl:robust-error}
\end{align}
where $M$ is the length of the adversarial suffix and the expectation $\E_\tau[\cdot]$ is calculated over the randomness of $w_\tau$, $X_\tau$, $X^{\text{sfx}}_\tau$, and $x_{\tau,q}$.
As we aim to understand how the adversarial prompt length in AT would affect the robustness of LLM, Eq.~(\ref{eq:icl:robust-error}) will also only focus on how the adversarial suffix length $M$ in ICL adversarial attacks would affect the robust generalization error $\mathcal R^{\text{adv}}(\theta,M)$.

\textbf{Adversarial in-context learning.}
Following previous studies on minimax AT~\citep{madry2018towards,javanmard2020precise,ribeiro2023regularization,fu2024theoretical,wang2024benign}, here we adopt a minimax AT loss to train the LSA model.
Concretely, we first use the aforementioned ICL adversarial attack to synthesize adversarial prompts and then update the LSA model based on these adversarial prompts to help the model gain robustness against them.
We further assume that the adversarial suffix length is fixed during AT, which thus leads to the following ICL AT problem,
\begin{align}
    \min_{\theta}\mathcal L^{\text{adv}}(\theta)
    := \min_{\theta} \mathcal R^{\text{adv}}(\theta,M_{\text{train}}) %\nonumber\\
    = \min_{\theta} \left\{ \E_{\tau} \max_{\|\Delta_\tau^\top\|_{2,\infty} \leq \epsilon} \frac{1}{2} | \hat y_{q,\theta}(E^{\text{adv}}_{\tau,M_{\text{train}}}) - y_{\tau,q} |^2 \right\},
    \label{eq:icl:at-loss}
\end{align}
where $\mathcal L^{\text{adv}}(\theta) := \mathcal R^{\text{adv}}(\theta, M_{\text{train}})$ is the AT loss in ICL and $M_{\text{train}} \in \mathbb{N}^+$ is the fixed adversarial suffix length during AT.
We will perform AT with continuous gradient flow, and further following \citet{zhang2024trained} to make the following assumption on the LSA model parameter initialization.
\begin{assumption}[c.f. Assumption~3 in \citet{zhang2024trained}]
\label{ass:icl:init}
Let $\sigma > 0$ be a parameter and $\Theta \in \mathbb{R}^{d\times d}$ be any matrix satisfying $\|\Theta \Theta^\top\|_F = 1$ and $\Theta \Lambda \neq 0_{d\times d}$.
We assume
\begin{align*}
    W^{V}(0) = \left(\begin{matrix} 0_{d\times d} & 0_{d\times 1} \\ 0_{1\times d} & \sigma \end{matrix}\right),
    \ \
    W^{KQ}(0) = \left(\begin{matrix} \sigma \Theta \Theta^\top & 0_{d\times 1} \\ 0_{1\times d} & 0 \end{matrix}\right).
\end{align*}
\end{assumption}
Recall in Eq.~(\ref{eq:icl:lsa-prediction}), $w^V_{12}$, $w^{KQ}_{12}$, and $w^{KQ}_{22}$ do not contribute to the model prediction $\hat{y}_{q,\theta}(\cdot)$.
Assumption~\ref{ass:icl:init} thus directly sets them to be zero at initialization.
To ensure symmetric initialization, it further sets $w^V_{21}(0)$ and $w^{KQ}_{21}(0)$ to zero.
We will see how Assumption~\ref{ass:icl:init} helps simplify the analysis of ICL AT.

\subsection{Bridging ICL AT and LLM AT}
\label{sec:adv-icl:bridge}

We now discuss similarities between the ICL AT problem defined in Eq.~(\ref{eq:icl:at-loss}) and the LLM AT problem defined in Eq.~(\ref{eq:loss-at}) to help motivate why ICL AT can be a good artifact to theoretically study LLM AT.

\textbf{Firstly, in-context inputs (\textit{i.e.}, $E_\tau$ defined in Eq.~(\ref{eq:icl:embedding})) for LSA models are similar to prompt inputs for real-world LLMs.}
If we replace each token in an LLM prompt with its {\it one-hot encoding} defined over the token vocabulary space, then these one-hot encodings would be similar to in-context samples $x_i$ in Eq.~(\ref{eq:icl:embedding}) since both of them are now ``feature vectors''.
Besides, we note that each in-context label $y_i$ in Eq.~(\ref{eq:icl:embedding}) can be seen as the ``next-token prediction label'' in real-world LLMs.
The main difference is that in LLMs, the $i$-th token in a prompt can be seen as the $i$-th input token and the $(i-1)$-th next-token prediction label simultaneously, while in LSA models, the $i$-th in-context input and the $(i-1)$-th in-context label are explicitly separated into two terms $x_i$ and $y_{i-1}$.

\textbf{Secondly, the search for adversarial in-context samples (see Eq.~(\ref{eq:icl:adv-input})) in the ICL suffix adversarial attack is similar to the search for adversarial tokens in suffix jailbreak attacks.}
We note that each adversarial token in jailbreak prompts can be seen as replacing the ``padding token''.
Thereby, from the point of view of one-hot encoding, searching for an adversarial token can thus be seen as applying an $\ell_2$-norm adversarial perturbation within a radius of $\sqrt{2}$ to transform the one-hot encoding of the padding token to that of the adversarial token.
This process is the same as the search for adversarial in-context samples in the ICL suffix adversarial attack defined in Eq.~(\ref{eq:icl:robust-error}), which would perturb each in-context suffix sample $x^{\text{sfx}}_{\tau,i}$ within an $\ell_2$-norm ball-sphere under a given radius $\epsilon > 0$.

\textbf{Thirdly, motivations behind ICL AT and LLM AT are also similar to each other.}
Both of the two AT problems aim to enhance models' robustness via training them on some synthesized adversarial inputs.
The adversarial inputs syntheses in ICL AT and LLM AT are also similar, as both of them aim to make targeted models behave wrongly via manipulating suffixes of input prompts.
The difference is that suffix jailbreak attacks are \textit{targeted adversarial attacks} aimed at inducing LLMs to generate {\it specified} harmful content while our ICL attack is an \textit{untargeted adversarial attack} aimed at reducing the utility of linear regression prediction made by LSA models.

\subsection{Training dynamics of adversarial ICL}

We now start to analyze the training dynamics of the minimax ICL AT problem formalized in Eq.~(\ref{eq:icl:at-loss}).
The main technical challenge is that to solve the inner maximization problem in Eq.~(\ref{eq:icl:at-loss}), one needs to analyze the optimization of the adversarial perturbation matrix $\Delta_\tau$.
However, the matrix $\Delta_\tau$ along with the clean data embedding $E_\tau$ and the clean adversarial suffix $(X^{\text{sfx}}_{\tau}, Y^{\text{sfx}}_{\tau})$ are entangled together within the adversarial ICL input $E^{\text{adv}}_{\tau,M_{\text{train}}}$, which makes it very difficult to solve the inner maximization problem and further analyze the ICL AT dynamics.

To tackle such a challenge, we propose to instead study the dynamics of a {\it closed-form upper bound} of the original AT loss $\mathcal L^{\text{adv}}(\theta)$.
Formally, we will analyze the following surrogate AT problem:
\begin{align}
    \min_{\theta} \tilde{\mathcal L}^{\text{adv}}(\theta)
    := \min_{\theta} \Bigl\{ \sum_{i=1}^4 \ell_i(\theta) \Bigr\},
    \label{eq:icl:surrogate-at-loss}
\end{align}
where
$ \tilde{\mathcal L}^{\text{adv}}(\theta) := \sum_{i=1}^4 \ell_i(\theta)$
is the surrogate AT loss,
$E^{\text{clean}}_{\tau,M_{\text{train}}} := \begin{pmatrix} X_{\tau} & X^{\text{sfx}}_\tau & x_{\tau,q} \\ Y_{\tau} & Y^{\text{sfx}}_{\tau} & 0 \end{pmatrix}$,
and
\begin{align*}
    & \ell_1(\theta) = 2 \E_\tau \Bigl[
        ((w^V_{21})^\top \ \ w^V_{22} )  \frac{ E^{\text{clean}}_{\tau,M_{\text{train}}} E^{\text{clean} \top}_{\tau,M_{\text{train}}} }{ N + M_{\text{train}} }  \begin{pmatrix} W^{KQ}_{11} \\ (w^{KQ}_{21})^\top \end{pmatrix} x_{\tau,q} - y_{\tau,q} \Bigr]^2,
    \nonumber\\
    &\ell_2(\theta) = \frac{2 \epsilon^4 M_{\text{train}}^2 }{(N+M_{\text{train}})^2} \|w^{V}_{21}\|_2^2 \E_{\tau} \Bigl[ \| W^{KQ}_{11} x_{\tau,q} \|_2^2 \Bigr],
    \nonumber\\
    &\ell_3(\theta) = \frac{2 \epsilon^2 M_{\text{train}}}{(N+M_{\text{train}})^2} \E_\tau \Bigl[ \|W^{KQ}_{11} x_{\tau,q}\|_2^2 \cdot \| ((w^V_{21})^\top \ \ w^V_{22} )  \begin{pmatrix} X^{\text{sfx}}_{\tau} \\ Y^{\text{sfx}}_{\tau} \end{pmatrix} \|_2^2 \Bigr],
    \nonumber\\
    &\ell_4(\theta) = \frac{2 \epsilon^2 M_{\text{train}} }{ (N+M_{\text{train}})^2 }  \|w^{V}_{21}\|_2^2  \cdot \E_\tau \Bigl[ \| \begin{pmatrix} X^{\text{sfx}}_{\tau} \\ Y^{\text{sfx}}_{\tau} \end{pmatrix}^\top \begin{pmatrix} W^{KQ}_{11} \\ (w^{KQ}_{21})^\top \end{pmatrix} x_{\tau,q} \|_2^2 \Bigr].
\end{align*}
The surrogate AT loss $\tilde{\mathcal L}^{\text{adv}}(\theta)$ in Eq.~(\ref{eq:icl:surrogate-at-loss}) is the closed-form upper bound for the original AT loss $\mathcal L^{\text{adv}}(\theta)$ in Eq.~(\ref{eq:icl:at-loss}), as illustrated in the below Proposition~\ref{prop:icl:surrogate-bound} (see Appendix~\ref{app:proof:surrogate-bound} for the proof).
\begin{proposition}[Uniform upper bound for $\mathcal L^{\text{\rm adv}}(\theta)$]
\label{prop:icl:surrogate-bound}
For the AT loss function $\mathcal L^{\text{\rm adv}}(\theta)$ defined in Eq.~(\ref{eq:icl:at-loss}) and the surrogate AT loss function $\tilde{\mathcal L}^{\text{\rm adv}}(\theta)$ defined in Eq.~(\ref{eq:icl:surrogate-at-loss}), for any model parameter $\theta:= (W^{V},W^{KQ})$ of the LSA model $f_{\text{\rm LSA},\theta}$, we uniformly have that:
$\mathcal L^{\text{\rm adv}}(\theta) \leq \tilde{\mathcal L}^{\text{\rm adv}}(\theta)$.
\end{proposition}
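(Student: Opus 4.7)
The plan is to reduce the proof to an algebraic decomposition followed by term-by-term Cauchy-Schwarz bounds. First I would write the adversarial embedding as $E^{\text{adv}}_{\tau,M_{\text{train}}} = E^{\text{clean}}_{\tau,M_{\text{train}}} + \tilde{P}_\tau$, where $\tilde{P}_\tau \in \mathbb{R}^{(d+1)\times(N+M_{\text{train}}+1)}$ places $\Delta_\tau$ in rows $1{:}d$ and columns $N{+}1{:}N{+}M_{\text{train}}$ and is zero elsewhere. Plugging into the closed-form prediction (Eq.~(\ref{eq:icl:lsa-prediction})) expands $E^{\text{adv}}(E^{\text{adv}})^\top$ into four pieces: $E^{\text{clean}}(E^{\text{clean}})^\top$, the cross terms $E^{\text{clean}}\tilde{P}_\tau^\top$ and $\tilde{P}_\tau (E^{\text{clean}})^\top$, and the quadratic term $\tilde{P}_\tau \tilde{P}_\tau^\top$. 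Accordingly, $\hat{y}_{q,\theta}(E^{\text{adv}}_{\tau,M_{\text{train}}}) - y_{\tau,q}$ splits as $A + B_1 + B_2 + B_3$, where $A$ is the clean residual matching the integrand of $\ell_1$, $B_1$ comes from $\tilde P_\tau \tilde P_\tau^\top$, and $B_2,B_3$ come from the two cross terms.

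Next I would apply the power-mean inequality $(A + B_1 + B_2 + B_3)^2 \leq 4(A^2 + B_1^2 + B_2^2 + B_3^2)$, so that $\tfrac{1}{2}|\hat{y}_{q,\theta}(E^{\text{adv}}_{\tau,M_{\text{train}}}) - y_{\tau,q}|^2 \leq 2A^2 + 2B_1^2 + 2B_2^2 + 2B_3^2$. Since $A$ does not depend on $\Delta_\tau$, the maximum over admissible perturbations distributes as $\max_\Delta \leq $ sum of individual maxima, and taking expectation yields $\mathcal L^{\text{adv}}(\theta)\leq 2\E A^2 + \sum_{i=1}^{3} 2\E\max_\Delta B_i^2$. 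The first summand is exactly $\ell_1(\theta)$ by definition.

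It remains to bound each $\max_\Delta B_i^2$. For $B_1$, the key observation is that $\|\Delta_\tau \Delta_\tau^\top\|_2 \leq \sum_{i=1}^{M_{\text{train}}}\|\delta_{\tau,i}\|_2^2 \leq M_{\text{train}}\epsilon^2$ under the constraint $\|\Delta_\tau^\top\|_{2,\infty}\leq\epsilon$, giving the $M_{\text{train}}^2\epsilon^4$ scaling that appears in $\ell_2$. For $B_2$ and $B_3$, writing each as an inner product of the form $\langle a, \Delta_\tau^\top b\rangle/(N+M_{\text{train}})$ and applying Cauchy-Schwarz, together with $\|\Delta_\tau^\top b\|_2^2 = \sum_i (\delta_{\tau,i}^\top b)^2 \leq M_{\text{train}}\epsilon^2\|b\|_2^2$, produces the single-power $M_{\text{train}}\epsilon^2$ factors appearing in $\ell_3$ and $\ell_4$. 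Matching the vectors $a,b$ in each case with the relevant blocks of $u := ((w^V_{21})^\top,\,w^V_{22})^\top$, $W^{KQ}_{11}$, $(w^{KQ}_{21})^\top$, and the suffix data $(X^{\text{sfx}}_\tau, Y^{\text{sfx}}_\tau)$, $x_{\tau,q}$ recovers exactly the integrands of $\ell_3$ and $\ell_4$. Summing the four bounds concludes the proof.

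I expect the main obstacle to be bookkeeping: tracking the correct block structure of $W^V, W^{KQ}$ so that the two cross terms $B_2$ and $B_3$ are correctly identified with $\ell_3$ and $\ell_4$ (they are not symmetric because $u$ multiplies on the left but $v$ on the right involves only the first two blocks of $W^{KQ}$). Everything else is routine Cauchy-Schwarz plus the column-norm constraint on $\Delta_\tau$.
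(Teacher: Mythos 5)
Your proposal is correct and follows essentially the same route as the paper's proof: expand $E^{\text{adv}}(E^{\text{adv}})^\top$ into clean, cross, and quadratic-in-$\Delta$ pieces, so the residual splits as $A + B_1 + B_2 + B_3$, then apply the power-mean inequality (yielding the factor $2$ on each term after the $\tfrac12$ in the loss), and bound each adversarial summand via Cauchy--Schwarz plus the per-column constraint $\|\delta_{\tau,i}\|_2 \le \epsilon$. The only cosmetic difference is in how the $\Delta\Delta^\top$ term is handled: you bound the operator norm $\|\Delta_\tau\Delta_\tau^\top\|_2 \le \sum_i \|\delta_{\tau,i}\|_2^2 \le M_{\text{train}}\epsilon^2$ directly, while the paper writes the scalar $(w^V_{21})^\top \Delta_\tau\Delta_\tau^\top W^{KQ}_{11}x_{\tau,q} = \sum_i ((w^V_{21})^\top\delta_{\tau,i})(\delta_{\tau,i}^\top W^{KQ}_{11}x_{\tau,q})$ and applies the coordinate-sum Cauchy--Schwarz inequality; both give the same $M_{\text{train}}^2\epsilon^4$ scaling and recover $\ell_2$ exactly, and the same remark applies to your vector-norm treatment of $B_2$, $B_3$ versus the paper's sum Cauchy--Schwarz for $A_2$, $A_3$.
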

This result indicates that when we are training the LSA model via solving the surrogate AT problem Eq.~(\ref{eq:icl:surrogate-at-loss}), we are also reducing the model training loss in the original AT problem Eq.~(\ref{eq:icl:at-loss}).
Thus, solving the surrogate AT problem will also intuitively improve the robustness of the model.

Based on our previous analysis, we now turn to study the training dynamics of surrogate AT defined in Eq.~(\ref{eq:icl:surrogate-at-loss}).
To better describe our results, we define two functions $\Gamma(\cdot): \mathbb{N} \rightarrow \mathbb{R}^{d\times d}$ and $\psi(\cdot): \mathbb{N} \rightarrow \mathbb{R}$, both of which depend on the adversarial suffix length $M$, as follows,
\begin{align}
    \Gamma(M) := \frac{ N + M + 1 }{ N + M } \Lambda + \frac{ \mathrm{Tr}(\Lambda) }{ N + M } I_d \in \mathbb{R}^{d\times d},
    % \label{eq:icl:func-gamma}\\
    \quad
    \psi(M) := \frac{ M^2 \mathrm{Tr}(\Lambda) }{ (N + M)^2 } \in \mathbb{R},
    \label{eq:icl:func-gamma-psi}
\end{align}
where $N$ is the prompt length of the original ICL input $E_\tau$ (see Eq.~(\ref{eq:icl:embedding})) and $\Lambda$ is the covariance matrix of in-context linear regression samples.
The closed-form surrogate AT dynamics of the LSA model $f_{\text{LSA},\theta}$ is then given in the following Theorem~\ref{thm:icl:closed-form-at} (see Appendix~\ref{app:proof:closed-form-at} for the proof).
\begin{theorem}[Closed-form Surrogate AT Dynamics]
\label{thm:icl:closed-form-at}
Suppose Assumption~\ref{ass:icl:init} holds and $f_{\text{\rm LSA},\theta}$ is trained from the surrogate AT problem defined in Eq.~(\ref{eq:icl:surrogate-at-loss}) with continuous gradient flow.
When the $\sigma$ in Assumption~\ref{ass:icl:init} satisfies
$\sigma < \sqrt{ \frac{2}{d \cdot \| ( \Gamma(M_{\text{\rm train}}) \Lambda + \epsilon^2 \psi(M_{\text{\rm train}}) I_d ) \Lambda^{-1} \|_2 } }$,
after training for infinite long time, the model parameter $\theta$ will converge to $\theta_{*}(M_{\text{\rm train}}) := (W^{V}_{*}(M_{\text{\rm train}}), W^{KQ}_{*}(M_{\text{\rm train}}))$, satisfying:
$w^{KQ}_{*,12} = w^{KQ}_{*,21} = w^{V}_{*,12} = w^{V}_{*,21} = 0_{d\times 1}$,
$w^{KQ}_{*,22} = 0$,
$W^V_{*,11} = 0_{d\times d}$,
and
\begin{align*}
    w^V_{*,22} W^{KQ}_{*,11} = \Bigl( \Gamma(M_{\text{\rm train}}) \Lambda + \epsilon^2 \psi(M_{\text{\rm train}}) I_d \Bigr)^{-1} \Lambda.
\end{align*}
\end{theorem}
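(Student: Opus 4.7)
My plan is to reduce the eight parameter blocks of $(W^V, W^{KQ})$ to just the pair $(w^V_{22}, W^{KQ}_{11})$ by an invariance argument, then express the surrogate loss restricted to this two-block manifold as a function of the product $G := w^V_{22} W^{KQ}_{11}$, and finally combine a conservation law of the gradient flow with the convexity of the reduced loss to identify the unique equilibrium predicted by the theorem.

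\textbf{Invariance of the zero blocks.} By inspection of $\ell_1,\ldots,\ell_4$, the blocks $W^V_{11}$, $w^V_{12}$, $w^{KQ}_{12}$, $w^{KQ}_{22}$ never appear in the surrogate loss (they are precisely the blocks absent from the simplified prediction formula in Eq.~(\ref{eq:icl:lsa-prediction})), so their gradients vanish identically and Assumption~\ref{ass:icl:init} keeps them at zero. For the remaining initially-zero blocks $w^V_{21}, w^{KQ}_{21}$, I would show the set $\{w^V_{21}=0,\,w^{KQ}_{21}=0\}$ is invariant under the flow: $\ell_2$ and $\ell_4$ are quadratic in $\|w^V_{21}\|_2$ and therefore contribute zero gradient when $w^V_{21}=0$, while the linear-in-$w^V_{21}$ (resp.\ $w^{KQ}_{21}$) contributions inside $\ell_1$ and $\ell_3$ involve odd-order Gaussian moments of $w_\tau$ or $x_{\tau,q}$ (e.g.\ $\E[\|C x_{\tau,q}\|_2^2\,\Lambda w_\tau]=0$ by independence and mean-zero), which vanish in expectation. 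Uniqueness of the gradient flow then closes the argument.

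\textbf{Reduced loss and fixed-point equation.} On this invariant subspace, $\ell_2=\ell_4=0$ and the remaining $\ell_1,\ell_3$ depend on $(w^V_{22}, W^{KQ}_{11})$ only through the product $G$. Using Isserlis/Wick identities for the Gaussian moments of $w_\tau$, $x_{\tau,q}$, $X_\tau$, and $X^{\text{sfx}}_\tau$ (in particular the fourth-moment formula $\E[xx^\top Z xx^\top] = \Lambda Z\Lambda + \Lambda Z^\top\Lambda + \mathrm{Tr}(Z\Lambda)\,\Lambda$ for $x\sim\mathcal N(0,\Lambda)$), a direct expansion shows that $\ell_1+\ell_3$ is a strictly convex quadratic in $G$ whose first-order optimality condition reads
\begin{align*}
\bigl(\Gamma(M_{\text{train}})\,\Lambda + \epsilon^2 \psi(M_{\text{train}})\,I_d\bigr)\,G \;=\; \Lambda ,
\end{align*}
recovering the stated formula for $G_* = w^V_{*,22} W^{KQ}_{*,11}$. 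The term $\Gamma(M_{\text{train}})\Lambda$ arises from the Wishart-type correction to the sample covariance of the combined clean data and clean suffix inside $\ell_1$, while the $\epsilon^2\psi(M_{\text{train}}) I_d$ piece comes from the adversarial-perturbation penalty isolated in $\ell_3$.

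\textbf{Convergence and the role of $\sigma$.} Since the reduced loss depends on $(w^V_{22}, W^{KQ}_{11})$ only through $G$, the standard balanced gradient-flow identity gives $\frac{d}{dt}\bigl[(w^V_{22})^2 - \|W^{KQ}_{11}\|_F^2\bigr] \equiv 0$; Assumption~\ref{ass:icl:init} initializes both quantities at $\sigma^2$, so $(w^V_{22}(t))^2 = \|W^{KQ}_{11}(t)\|_F^2$ for every $t\geq 0$, which collapses the two-block dynamics onto a one-parameter orbit. Combined with monotone descent along the strictly convex reduced loss in $G$, this forces $G(t)\to G_*$, and the individual blocks converge to values satisfying both the balance condition and $w^V_{*,22}W^{KQ}_{*,11}=G_*$. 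The main obstacle is the convergence itself: one must rule out trapping at the saddle $G=0$ and ensure the sign of $w^V_{22}$ never flips through zero along the flow. The smallness bound $\sigma<\sqrt{2/(d\,\|(\Gamma(M_{\text{train}})\Lambda+\epsilon^2\psi(M_{\text{train}})I_d)\Lambda^{-1}\|_2)}$ is precisely calibrated so that the initial value of the reduced loss lies strictly below the barrier separating the basin of attraction of $G_*$ from the origin, yielding the asserted global convergence and completing the proof.
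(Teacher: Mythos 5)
Your proposal follows essentially the same route as the paper: show the blocks $w^V_{21}, w^{KQ}_{21}$ stay at zero (the paper's Lemma~\ref{lem:icl:zero-grad}), reduce the loss to a function of $G := w^V_{22} W^{KQ}_{11}$ (Lemma~\ref{lem:icl:simplified-surrogate-at-loss}), identify the stationary point $G_* = (\Gamma\Lambda + \epsilon^2\psi I_d)^{-1}\Lambda$ by first-order optimality (Lemma~\ref{lem:icl:surrogate-at-minimizer}), and use the balance identity $(w^V_{22}(t))^2 = \|W^{KQ}_{11}(t)\|_F^2$ together with the $\sigma$-condition to keep $w^V_{22}$ bounded away from zero (Lemmas~\ref{lem:icl:init-equal-pms}--\ref{lem:icl:wv22-lower-bd}). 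All of this is faithfully reconstructed.

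The gap is in your final convergence step. Observing that the reduced loss is strictly convex in $G$ and that the trajectory cannot cross $w^V_{22}=0$ does \emph{not} by itself yield ``monotone descent forces $G(t)\to G_*$'': the gradient flow lives on $(w^V_{22}, W^{KQ}_{11})$, where the loss is \emph{not} convex, and a bounded decreasing loss together with a lower bound on $|w^V_{22}|$ only rules out one family of bad limit points, it does not produce convergence to the global minimum by itself. The paper closes this gap with a Polyak--\L ojasiewicz inequality (Lemma~\ref{lem:icl:pl-inequality}): because $(w^V_{22})^2 \geq \nu > 0$ along the entire trajectory, the gradient norm dominates a constant multiple of the suboptimality gap, giving $\tilde{\mathcal L}^{\text{adv}}(\theta_t) - \min_\theta \tilde{\mathcal L}^{\text{adv}}(\theta) \leq e^{-\mu t}\bigl(\tilde{\mathcal L}^{\text{adv}}(\theta_0)-\min_\theta \tilde{\mathcal L}^{\text{adv}}(\theta)\bigr)$ and hence $G(t)\to G_*$. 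Your sketch needs either this PL inequality or an appeal to the \L ojasiewicz gradient inequality for analytic gradient flows to be complete; as written, ``monotone descent along the strictly convex reduced loss'' conflates descent in parameter space with descent in $G$-space and leaves the actual convergence unjustified.
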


\begin{remark}
When the $l_2$-norm adversarial perturbation radius $\epsilon$ is zero, the closed-form AT solution $\theta_*$ derived in Theorem~\ref{thm:icl:closed-form-at} degenerates to that obtained without AT (see Theorem~4.1 in \citet{zhang2024trained}).
Thus, a sufficient large adversarial perturbation $\epsilon$ is a key to helping the LSA model $f_{\text{LSA},\theta}$ obtain significant adversarial robustness.
This will be further justified in the next section.
\end{remark}

\subsection{Robust generalization upper-bound}
\label{sec:robust-gen}

With the closed-form AT solution $\theta_*(M_{\text{train}})$ in Theorem~\ref{thm:icl:closed-form-at}, we now analyze the robustness of the trained LSA model.
All proofs in this section are presented in Appendix~\ref{app:proof:sec-robust-gen}.
We study how a LSA model adversarially trained under a fixed adversarial suffix length $M_{\text{train}}$ can defend against the ICL adversarial attack with a different adversarial suffix length $M_{\text{test}}$.
That is, we aim to analyze the magnitude of the robust generalization error $\mathcal{R}^{\text{adv}}(\theta_*(M_{\text{train}}), M_{\text{test}})$ for the converged robust model parameter $\theta_*(M_{\text{train}})$.
Here, we prove an upper-bound for it in the following theorem.
\begin{theorem}[Surrogate AT Robust Generalization Bound]
\label{thm:icl:robust-gen-bound}
Suppose all conditions in Theorem~\ref{thm:icl:closed-form-at} hold and $\theta_*(M_{\text{\rm train}})$ is the surrogate AT solution in Theorem~\ref{thm:icl:closed-form-at}.
We have
\begin{align*}
    \mathcal R^{\text{\rm adv}}(\theta_*(M_{\text{\rm train}}), M_{\text{\rm test}})  %\nonumber \\
    \leq 2 \mathrm{Tr}\Bigl[ \Lambda^3 \Bigl( \Gamma_{\text{\rm test}} \Lambda + \epsilon^2 \psi_{\text{\rm test}} I_d \Bigr) \Bigl( \Gamma_{\text{\rm train}} \Lambda + \epsilon^2 \psi_{\text{\rm train}} I_d \Bigr)^{-2} + \Lambda \Bigr],
\end{align*}
where
$M_{\text{\rm train}}$ is the adversarial suffix length in the ICL adversarial attack,
and $\Gamma_{\text{\rm train}} := \Gamma(M_{\text{\rm train}})$,
$\Gamma_{\text{\rm test}} := \Gamma(M_{\text{\rm test}})$,
$\psi_{\text{\rm train}} := \psi(M_{\text{\rm train}})$,
and $\psi_{\text{\rm test}} := \psi(M_{\text{\rm test}})$
are functions in Eq.~(\ref{eq:icl:func-gamma-psi}).
\end{theorem}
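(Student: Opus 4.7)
The plan is to adapt the surrogate upper-bound technique behind Proposition~\ref{prop:icl:surrogate-bound} to an arbitrary suffix length $M_{\text{test}}$. Because the proof of that proposition is driven only by the structure of the inner maximization over $\Delta_\tau$ in Eq.~(\ref{eq:icl:robust-error}) and never uses that the length is $M_{\text{train}}$, the same argument yields $\mathcal R^{\text{adv}}(\theta, M_{\text{test}}) \leq \sum_{i=1}^{4}\ell_i(\theta; M_{\text{test}})$, where each $\ell_i(\cdot; M_{\text{test}})$ is the corresponding surrogate term in Eq.~(\ref{eq:icl:surrogate-at-loss}) with every occurrence of $M_{\text{train}}$ replaced by $M_{\text{test}}$. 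Evaluating this right-hand side at $\theta_*(M_{\text{train}})$ reduces Theorem~\ref{thm:icl:robust-gen-bound} to a closed-form trace calculation.

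Next, I use the vanishing entries of $\theta_*(M_{\text{train}})$ from Theorem~\ref{thm:icl:closed-form-at} to prune the surrogate loss. Since $w^V_{*,21}=0$, the terms $\ell_2$ and $\ell_4$ vanish identically (each carries a $\|w^V_{21}\|_2^2$ factor), so only $\ell_1$ and $\ell_3$ survive. Writing $U := w^V_{*,22}W^{KQ}_{*,11}$ and $B := \Gamma_{\text{train}}\Lambda + \epsilon^2\psi_{\text{train}} I_d$, Theorem~\ref{thm:icl:closed-form-at} gives $U = B^{-1}\Lambda$; and because $w^V_{*,21}$ and $w^{KQ}_{*,21}$ are both zero, the LSA output on a clean prompt of length $N+M_{\text{test}}$ collapses to $w_\tau^\top S\, U\, x_{\tau,q}$, where $S$ is the sample second-moment matrix of the $N+M_{\text{test}}$ in-context inputs. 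The residual is $w_\tau^\top(SU - I_d)x_{\tau,q}$.

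The main technical step is evaluating iterated expectations of this squared residual. Using independence of $x_{\tau,q}\sim\mathcal N(0,\Lambda)$, $w_\tau\sim\mathcal N(0,I_d)$, and $S$, one obtains $\ell_1(\theta_*; M_{\text{test}}) = 2\E_S[\mathrm{Tr}(SU\Lambda U^\top S)] - 4\mathrm{Tr}(U\Lambda^2) + 2\mathrm{Tr}(\Lambda)$. The Gaussian fourth-moment (Isserlis) identity $\E[(x^\top A x)(x^\top B x)] = \mathrm{Tr}(A\Lambda)\mathrm{Tr}(B\Lambda) + 2\mathrm{Tr}(A\Lambda B\Lambda)$, applied after splitting the double sum defining $S$ into its $i=j$ and $i\neq j$ pieces, yields the key identity $\E_S[\mathrm{Tr}(SAS)] = \mathrm{Tr}(A\,\Gamma(M_{\text{test}})\Lambda)$ for symmetric $A$, which is exactly how $\Gamma_{\text{test}}$ enters. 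A simpler factorization of $\ell_3(\theta_*; M_{\text{test}})$, using independence of $x_{\tau,q}$ from $(w_\tau, X^{\text{sfx}}_\tau)$ together with $\E\|Y^{\text{sfx}}_\tau\|_2^2 = M_{\text{test}}\mathrm{Tr}(\Lambda)$, produces $2\epsilon^2\psi_{\text{test}}\mathrm{Tr}(U\Lambda U^\top)$, which is how $\psi_{\text{test}}$ enters.

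The remainder is algebraic. Because $\Gamma(\cdot)$ and $\psi(\cdot)I_d$ are affine in $\Lambda$, the matrices $\Lambda$, $\Gamma_{\text{train}}$, $\Gamma_{\text{test}}$, and $B$ all commute; in particular $B$ is symmetric, $U\Lambda U^\top = B^{-2}\Lambda^3$, and $U\Lambda^2 = B^{-1}\Lambda^3$. Substituting into $\ell_1+\ell_3$ produces $2\mathrm{Tr}[\Lambda^3(\Gamma_{\text{test}}\Lambda + \epsilon^2\psi_{\text{test}} I_d)B^{-2} + \Lambda]$ together with a leftover $-4\mathrm{Tr}(B^{-1}\Lambda^3)$; since $B^{-1}$ and $\Lambda^3$ are simultaneously diagonalizable with strictly positive eigenvalues, this last trace is positive, and dropping it yields the stated upper bound. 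The only non-routine step is the Gaussian fourth-moment calculation that produces $\Gamma_{\text{test}}$; everything else is bookkeeping once the commutativity of $\{\Lambda, \Gamma(\cdot), \psi(\cdot)I_d\}$ is observed.
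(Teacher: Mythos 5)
Your proposal is correct and follows essentially the same route as the paper's proof: substitute $M_{\text{test}}$ for $M_{\text{train}}$ in the surrogate bound of Proposition~\ref{prop:icl:surrogate-bound}, use the vanishing off-diagonal blocks of $\theta_*(M_{\text{train}})$ to collapse to a closed-form trace expression (the paper outsources this to Lemma~\ref{lem:icl:simplified-surrogate-at-loss} rather than re-deriving the Gaussian fourth-moment computation inline as you do), plug in $w^V_{*,22}W^{KQ}_{*,11}=(\Gamma_{\text{train}}\Lambda+\epsilon^2\psi_{\text{train}}I_d)^{-1}\Lambda$, exploit commutativity of $\{\Lambda,\Gamma(\cdot),\psi(\cdot)I_d\}$, and drop the negative $-4\mathrm{Tr}(B^{-1}\Lambda^3)$ term. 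Your treatment is slightly more explicit than the paper's in two places worth noting: you spell out that $\ell_2$ and $\ell_4$ vanish because of the $\|w^V_{21}\|_2^2$ factor, and you expose the intermediate identity $\E_S[\mathrm{Tr}(SAS)]=\mathrm{Tr}(A\,\Gamma(M_{\text{test}})\Lambda)$ rather than citing the precomputed $\E[S^2]$; both are faithful to what the paper does implicitly.
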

We further adopt Assumption~\ref{ass:icl:length-and-epsilon} to help us better understand our robust generalization bound.
\begin{assumption}
\label{ass:icl:length-and-epsilon}
For adversarial suffix lengths during AT and testing, we assume that $M_{\text{train}}, M_{\text{test}} \leq \mathcal O(N)$, where $N$ is the original ICL prompt length.
Besides, for the $l_2$-norm adversarial perturbation radius, we assume that $\epsilon = \Theta(\sqrt{d})$, where $d$ is the ICL sample dimension.
\end{assumption}
In the above Assumption~\ref{ass:icl:length-and-epsilon}, the assumption made on adversarial suffix lengths means that they should not be too long to make the model ``forget'' the original ICL prompt.
Besides, the assumption made on the perturbation radius $\epsilon$ ensures that it is large enough to simulate the large (but limited) token vocabulary space of real-world LLMs to help model gain robustness.
\begin{corollary}
\label{cor:icl:ord-robust-gen-bound}
Suppose Assumption~\ref{ass:icl:length-and-epsilon} and all conditions in Theorem~\ref{thm:icl:robust-gen-bound} hold.
Suppose $\|\Lambda\|_2 \leq \mathcal{O}(1)$.
Then, we have the following robust generalization bound,
\begin{align*}
    &\mathcal R^{\text{\rm adv}}(\theta_*(M_{\text{\rm train}}), M_{\text{\rm test}})
    \leq \mathcal O(d) + \mathcal O\left( \frac{d^2}{N} \right) + \mathcal O\left( N^2 \cdot \frac{ M_{\text{\rm test}}^2 }{ M_{\text{\rm train}}^4 } \right).
\end{align*}
\end{corollary}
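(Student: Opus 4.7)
The plan is to start from the explicit bound in Theorem~\ref{thm:icl:robust-gen-bound} and simplify the right-hand side under Assumption~\ref{ass:icl:length-and-epsilon}. The key structural observation is that $\Lambda$, $\Gamma(M) = \frac{N+M+1}{N+M}\Lambda + \frac{\mathrm{Tr}(\Lambda)}{N+M}I_d$, and hence $G(M) := \Gamma(M)\Lambda + \epsilon^2 \psi(M) I_d$, are all polynomials in $\Lambda$, so they are simultaneously diagonalizable in the eigenbasis of $\Lambda$. Letting $\{\lambda_i\}_{i=1}^d$ denote the eigenvalues of $\Lambda$, with induced eigenvalues $\gamma_{\bullet,i} := \frac{N+M_\bullet+1}{N+M_\bullet}\lambda_i + \frac{\mathrm{Tr}(\Lambda)}{N+M_\bullet}$ and $g_{\bullet,i} := \gamma_{\bullet,i}\lambda_i + \epsilon^2\psi_\bullet$, every trace in the theorem reduces to a scalar sum over $i$, which sidesteps non-commutative matrix manipulations.

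First I would handle the additive $2\mathrm{Tr}(\Lambda)$ piece directly by $\mathrm{Tr}(\Lambda)\leq d\|\Lambda\|_2=\mathcal O(d)$. Then I would split the main trace as
\[
\mathrm{Tr}[\Lambda^3 G_{\text{test}} G_{\text{train}}^{-2}] \;=\; \mathrm{Tr}[\Lambda^4 \Gamma_{\text{test}} G_{\text{train}}^{-2}] \;+\; \epsilon^2\psi_{\text{test}}\,\mathrm{Tr}[\Lambda^3 G_{\text{train}}^{-2}],
\]
and control each piece with a different lower bound on $g_{\text{train},i}$. For the first piece, I would use $g_{\text{train},i}\ge \gamma_{\text{train},i}\lambda_i$ together with the elementary bound $\gamma_{\text{train},i}\ge \lambda_i$, which telescopes the summand to $\gamma_{\text{test},i}$ and leaves $\mathrm{Tr}(\Gamma_{\text{test}})$. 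Bounding $\mathrm{Tr}(\Gamma_{\text{test}})$ by $\mathcal O(\mathrm{Tr}(\Lambda))+\mathcal O(d\,\mathrm{Tr}(\Lambda)/(N+M_{\text{test}}))$ and invoking $\mathrm{Tr}(\Lambda)\le \mathcal O(d)$ together with $(N+M_{\text{test}})^{-1}=\mathcal O(1/N)$ yields the $\mathcal O(d)+\mathcal O(d^2/N)$ contribution.

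For the second piece, I would use the complementary lower bound $g_{\text{train},i}\ge \epsilon^2\psi_{\text{train}}$, pull these scalars outside the sum, and be left with $\mathrm{Tr}(\Lambda^3)\le \|\Lambda\|_2^2\,\mathrm{Tr}(\Lambda)=\mathcal O(\mathrm{Tr}(\Lambda))$. Plugging in $\epsilon^2=\Theta(d)$ and the closed forms for $\psi_{\text{train}},\psi_{\text{test}}$, the $\mathrm{Tr}(\Lambda)$ factors cancel and the length ratio $(N+M_{\text{train}})^4/(N+M_{\text{test}})^2 \le \mathcal O(N^2)$ under $M_{\text{train}},M_{\text{test}}\le \mathcal O(N)$ delivers the $\mathcal O(N^2 M_{\text{test}}^2/M_{\text{train}}^4)$ term (in fact a tighter $\mathcal O(N^2 M_{\text{test}}^2/(d M_{\text{train}}^4))$, which is only stronger). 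Summing the three pieces gives the claimed bound.

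The main obstacle is entirely a bookkeeping one: choosing the right lower bound for $g_{\text{train},i}$ in each of the two pieces so that neither $d$ nor $\mathrm{Tr}(\Lambda)$ blows up. Uniformly applying the weak bound $g_{\text{train},i}\ge \epsilon^2\psi_{\text{train}}$ would inflate the $d/N$ term into something proportional to $N^2/M_{\text{train}}^4$ multiplied by an extra $d$, while uniformly applying $g_{\text{train},i}\ge \gamma_{\text{train},i}\lambda_i$ would miss the crucial cancellation against $\epsilon^2\psi_{\text{test}}$ in the second piece. Once the splitting is in place, everything else reduces to routine scalar arithmetic under the hypotheses $\|\Lambda\|_2\le \mathcal O(1)$ and $M_{\text{train}},M_{\text{test}}\le \mathcal O(N)$.
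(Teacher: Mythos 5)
Your proposal matches the paper's proof essentially step for step: both pass to the simultaneous eigenbasis of $\Lambda$, split the numerator of the trace into the $\Gamma_{\text{test}}\Lambda$ part and the $\epsilon^2\psi_{\text{test}}I_d$ part, and then lower-bound the denominator $g_{\text{train},i}$ by the matching term from each piece (by $\gamma_{\text{train},i}\lambda_i$ for the first, by $\epsilon^2\psi_{\text{train}}$ for the second) before running the same $\mathcal{O}(\cdot)$ bookkeeping under Assumption~\ref{ass:icl:length-and-epsilon}. The only divergence is a minor one: in the second piece you cancel the $\mathrm{Tr}(\Lambda)$ appearing in $\psi_{\text{test}}/\psi_{\text{train}}^2$ against $\mathrm{Tr}(\Lambda^3)\le\|\Lambda\|_2^2\mathrm{Tr}(\Lambda)$, whereas the paper replaces $\mathrm{Tr}(\Lambda)$ in the denominator with $\max_k\lambda_k$, costing a factor of $d$; this makes your version marginally tighter ($N^2 M_{\text{test}}^2/(dM_{\text{train}}^4)$ versus $N^2 M_{\text{test}}^2/M_{\text{train}}^4$) but does not change the stated conclusion or the structure of the argument.
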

Corollary~\ref{cor:icl:ord-robust-gen-bound} is our main theoretical result, which clearly show that for an adversarially trained LSA model, its robust generalization bound depends on the term $\Theta(\sqrt{M_{\text{test}}}/M_{\text{train}})$, where $M_{\text{train}}$ and $M_{\text{test}}$ are the number of adversarially perturbed in-context samples during training and testing.
In other words, for an ICL adversarial attack with an adversarial suffix length $\Theta(M)$, to maintain the order of the robust generalization bound, it is enough to perform surrogate AT with only an adversarial suffix length $\Theta(\sqrt{M})$.
Such an observation is useful in practice, since one can thus leverage a ``short-length'' AT, which requires less GPU memory and training time, to defend against ``long-length'' jailbreakings.

\section{Empirical evidence}
\label{sec:at-exp}

\begin{figure}[t]
    \centering
    \includegraphics[width=0.92\linewidth]{./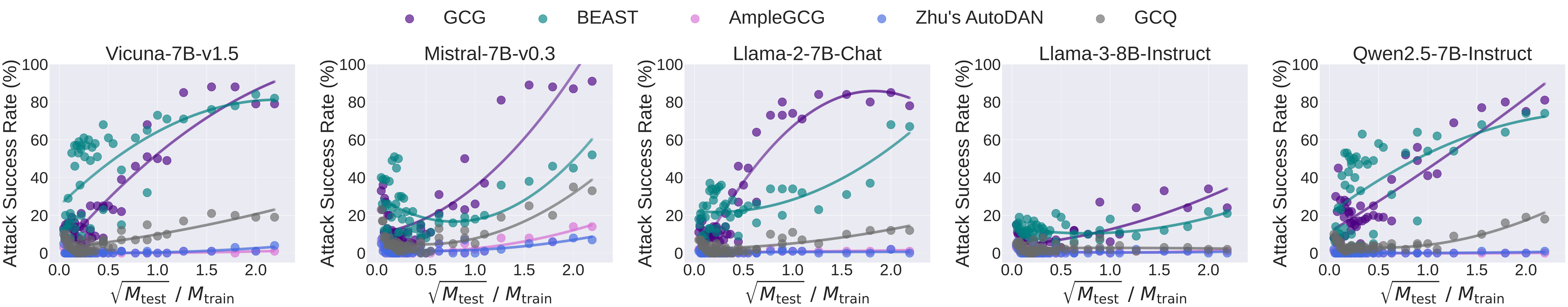}
    \caption{
    Scatter plots of ASR to the ratio $\sqrt{M_{\text{test}}} / M_{\text{train}}$.
    For each pair of base model and attack, $48$ points are plotted.
    A high ASR indicates a weak jailbreak robustness.% of the model.
    }
    \label{fig:asr-vs-ratio}
    % \vspace{-1em}
\end{figure}

In this section, we follow Eq.~(\ref{eq:loss-at}) to perform AT on LLMs and investigate the relationship between adversarial suffix lengths during LLM AT and jailbreak attacks.

\subsection{Experimental setup}
\label{sec:at-exp:setup}

\textbf{Models\&datasets.}
We adopt five pre-trained LLMs, which are:
Vicuna-7B-v1.5~\citep{zheng2023judging},
Mistral-7B-Instruct-v0.3~\citep{jiang2023mistral},
Llama-2-7B-Chat~\citep{touvron2023llama2},
Llama-3-8B-Instruct~\citep{grattafiori2024llama3herdmodels},
and Qwen2.5-7B-Instruct~\citep{yang2024qwen2}.
For data in AT, we use the training set from Harmbench~\citep{mazeika2024harmbench} as the safety dataset and Alpaca~\citep{alpaca2023taori} as the utility dataset.
For data in the robustness evaluation, we construct a test set of size $100$ that consists of the first $50$ samples from the test set of Harmbench~\citep{mazeika2024harmbench} and the first $50$ samples from AdvBench~\citep{zou2023universal}.
For data in the utility analysis, we use the benchmark data from AlpacaEval~\citep{dubois2024length}.

\textbf{Adversarial training.}
We leverage GCG~\citep{zou2023universal}, a token-level jailbreak attack, to synthesize (suffix) jailbreak prompts, in which the adversarial suffix length $M_{\text{train}}$ is fixed to one of $\{5, 10, 20, 30, 40, 50\}$ during  AT.
To reduce computational complexity of tuning LLMs, LoRA~\citep{hu2022lora} is applied to all query and key projection matrices in attentions.
In every AT experiment, we follow Eq.~(\ref{eq:loss-at}) to perform AT with Adam.
Please refer to Appendix~\ref{app:exp:training} for omitted settings.

\textbf{Jailbreak attacks.}
We use both suffix and non-suffix jailbreak attacks to evaluate the adversarial robustness of trained LLMs.
Specifically, five token-level suffix jailbreak attacks are adopted, which are GCG~\citep{zou2023universal}, BEAST~\citep{sadasivan2024fast}, AmpleGCG~\citep{liao2024amplegcg}, Zhu's AutoDAN~\citep{zhu2024autodan}, and GCQ~\citep{hayase2024querybased}.
The adversarial suffix token length $M_{\text{test}}$ is varied within $\{5, 10, 20, 40, 60, 80, 100, 120\}$.
Meanwhile, two non-suffix jailbreak attacks are leveraged, which are PAIR~\citep{chao2023jailbreaking} and DeepInception~\citep{li2023deepinception}.
Please refer to Appendix~\ref{app:exp:evaluation} for full implementation details of all used jailbreak attacks.

\textbf{Evaluations.}
We focus on evaluating the jailbreak robustness and the utility of trained LLMs.
For the robustness evaluation, we report the {\bf Attack Success Rate (ASR)} of jailbreak attacks.
An LLM-based judger from \citet{mazeika2024harmbench} is used to determine whether a jailbreak attack succeeds or not.
For the utility evaluation, we use the AlpacaEval2~\citep{dubois2024length} to report the {\bf Length-controlled WinRate (LC-WinRate)} of targeted models against a reference model Davinci003 evaluated under the Llama-3-70B model.
An LC-WinRate of $50\%$ means that the output qualities of the two models are equal, while an LC-WinRate of $100\%$ means that the targeted model is consistently better than the reference Davinci003.
Please refer to Appendix~\ref{app:exp:evaluation} for the detailed settings of model evaluations.

\subsection{Results analysis}
\label{sec:at-exp:result}

\textbf{Correlation between the suffix jailbreak robustness and the ratio $\sqrt{M_{\text{test}}}/M_{\text{train}}$}.
We plot the ASR of models trained and attacked with different adversarial suffix lengths in Figure~\ref{fig:asr-vs-ratio}.
This results in $48$ points for each pair of base model and jailbreak attack.
The Pearson correlation coefficient~(PCC) and the corresponding $p$-value between the ratio $\sqrt{M_{\text{test}}} / M_{\text{train}}$ and the ASR are calculated in Table~\ref{tab:corr-ratio-vs-asr}, where \textbf{bold $p$-values} indicate that observations are statistically significant (\textit{i.e.}, $p < 0.05$), while \underline{underlined ones} indicate they are not significant.

When the jailbreak attack used during AT is the same as that used during robustness evaluation ({\it i.e.}, GCG), one can observe from Figure~\ref{fig:asr-vs-ratio} that a clear positive correlation between the ratio $\sqrt{M_{\text{test}}} / M_{\text{train}}$ and the ASR for all evaluated base models.
Further, high PCCs~($> 0.7$) and low $p$-values~($< 0.05$) in Table~\ref{tab:corr-ratio-vs-asr} also confirm that the observed positive correlation is statistically significant.

Besides, when the jailbreak attack is BEAST and GCQ, which is different from that used during AT, the significant positive correlation between the ratio $\sqrt{M_{\text{test}}}/M_{\text{train}}$ and the ASR can only be observed from some of the base models.
This may be due to the fact that AT with only a single jailbreak attack may not help the model generalize well to unseen attacks.
Therefore, it might be necessary to adopt multiple attacks when performing AT-based alignment on LLMs.
Nevertheless, from Figure~\ref{fig:asr-vs-ratio}, we find that for those models where the correlation is not significant ({\it i.e.}, Mistral-7B, and Llama-3-8B), GCG-based AT can still suppress the ASR to no more than $50\%$, which indicates that it can still help models gain a certain degree of robustness against unseen attacks.

Finally, for AmpleGCG and Zhu's AutoDAN attacks, we notice that the correlation between the ratio $\sqrt{M_{\text{test}}}/M_{\text{train}}$ and the ASR cannot be observed on most of the base models.
However, this is simply due to AT being too effective in defending against these two attacks: from Figure~\ref{fig:asr-vs-ratio}, one can observe that AT effectively reduces ASRs of AmpleGCG and Zhu's AutoDAN to nearly zero in most cases.

\begin{table}[t]
\centering
\caption{
PCCs and $p$-values calculated between ASR and ratio $\sqrt{M_{\text{test}}} / M_{\text{train}}$.
A high PCC (within $[-1,1]$) means a strong correlation between ASR and the ratio.
$p < 5.00\times 10^{-2}$ means that the observation is considered statistically significant.
}
\tiny
\begin{tabular}{l p{1.8em} p{5.9em} p{1.8em} p{5.9em} p{1.8em} p{5.9em} p{1.8em} p{5.9em} p{1.8em} p{5.9em}}
\toprule
\multirow{2}{*}{\centering Model} & \multicolumn{2}{c}{GCG Attack} & \multicolumn{2}{c}{BEAST Attack} & \multicolumn{2}{c}{AmpleGCG Attack} & \multicolumn{2}{c}{Zhu's AutoDAN} & \multicolumn{2}{c}{GCQ Attack} \\
\cmidrule(lr){2-3} \cmidrule(lr){4-5} \cmidrule(lr){6-7} \cmidrule(lr){8-9} \cmidrule(lr){10-11}
& \multirow{1}{3em}{\centering PCC($\uparrow$)}
& \multirow{1}{6em}{\centering $p$-value($\downarrow$)}
& \multirow{1}{3em}{\centering PCC($\uparrow$)}
& \multirow{1}{6em}{\centering $p$-value($\downarrow$)}
& \multirow{1}{3em}{\centering PCC($\uparrow$)}
& \multirow{1}{6em}{\centering $p$-value($\downarrow$)}
& \multirow{1}{3em}{\centering PCC($\uparrow$)}
& \multirow{1}{6em}{\centering $p$-value($\downarrow$)}
& \multirow{1}{3em}{\centering PCC($\uparrow$)}
& \multirow{1}{6em}{\centering $p$-value($\downarrow$)} \\

\midrule

Vicuna-7B  & $0.93$ & $\mathbf{4.7\times 10^{-21}}$ & $0.63$ & $\mathbf{1.4\times 10^{-6}}$ & $0.19$ & $\underline{1.9\times 10^{-1}}$ & $0.51$ & $\mathbf{2.5\times 10^{-4}}$ & $0.82$ & $\mathbf{1.4\times 10^{-12}}$ \\
Mistral-7B & $0.86$ & $\mathbf{4.0\times 10^{-15}}$ & $0.29$ & $\mathbf{4.4\times 10^{-2}}$ & $0.74$ & $\mathbf{1.5\times 10^{-9}}$ & $0.49$ & $\mathbf{3.7\times 10^{-4}}$ & $0.70$ & $\mathbf{2.6\times 10^{-8}}$ \\
Llama-2-7B & $0.88$ & $\mathbf{9.0\times 10^{-17}}$ & $0.67$ & $\mathbf{1.7\times 10^{-7}}$ & $0.37$ & $\mathbf{1.0\times 10^{-2}}$ & $0.13$ & $\underline{3.8\times 10^{-1}}$ & $0.71$ & $\mathbf{2.1\times 10^{-8}}$ \\
Llama-3-8B & $0.76$ & $\mathbf{2.8\times 10^{-10}}$ & $0.26$ & $\underline{7.7\times 10^{-2}}$ & $-0.07$ & $\underline{6.2\times 10^{-1}}$ & $-0.12$ & $\underline{4.1\times 10^{-1}}$ & $0.0$ & \underline{$9.7\times 10^{-1}$} \\
Qwen2.5-7B & $0.87$ & $\mathbf{1.1\times 10^{-15}}$ & $0.58$ & $\mathbf{1.0\times 10^{-5}}$ & $-0.24$ & $\underline{1.0\times 10^{-1}}$ & $0.16$ & $\underline{2.6\times 10^{-1}}$ & $0.72$ & $\mathbf{1.1\times 10^{-8}}$ \\
\bottomrule
\end{tabular}
\label{tab:corr-ratio-vs-asr}
\end{table}

\begin{figure}[t]
    \centering
    \begin{subfigure}{0.90\linewidth}
        \centering
        \includegraphics[width=\linewidth]{./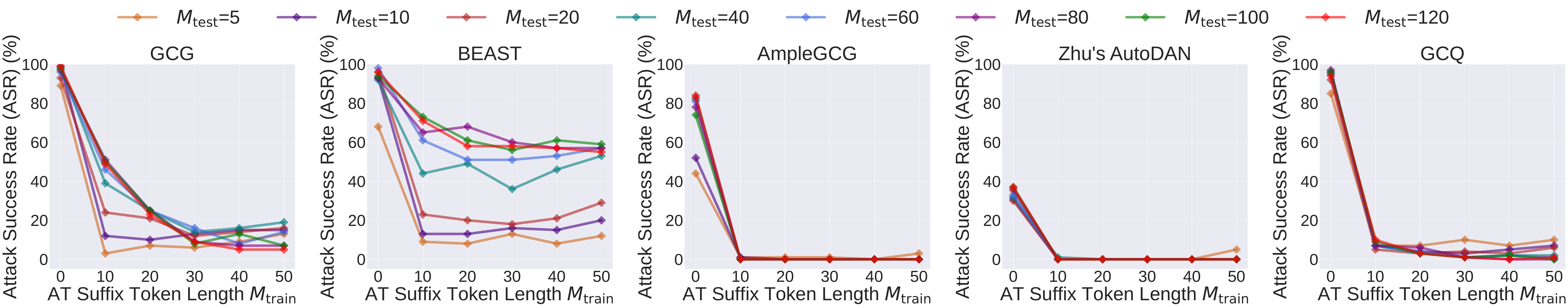}
    \end{subfigure}
    \caption{
    ASR versus $M_{\text{train}}$ on Vicuna-7B-v1.5 under jailbreaking with different  $M_{\text{test}}$.
    $M_{\text{train}} = 0$ means that AT is not performed on the evaluated model.
    A low ASR indicates a strong robustness.% of the model.
    }
    \label{fig:asr-vs-at-sfx-len-vicuna}
\end{figure}

\begin{wraptable}{r}{0.52\linewidth}
\centering
\caption{Time cost (hrs) of LLM AT with different adversarial suffix lengths.}
\label{tab:at-time-cost}
\tiny
\begin{tabular}{c l c c c c c}
\toprule
\multirow{3}{*}{Model} & \multicolumn{6}{c}{Adversarial Suffix Token Length $M_{\text{train}}$ in AT} \\
\cmidrule(lr){2-7}
& $5$ & $10$ & $20$ & $30$ & $40$ & $50$ \\
\midrule
Vicuna-7B  & 10.2h & 11.3h & 13.8h & 16.0h & 18.2h & 20.4h \\
Mistral-7B &  8.9h &  9.9h & 12.0h & 14.3h & 16.6h & 19.0h \\
Llama-2-7B &  9.9h & 11.0h & 13.2h & 15.5h & 18.1h & 20.0h \\
Llama-3-8B &  9.7h & 10.8h & 13.1h & 15.3h & 17.7h & 20.2h \\
Qwen2.5-7B &  9.1h &  9.9h & 11.7h & 13.9h & 16.4h & 18.4h \\
\bottomrule
\end{tabular}
\end{wraptable}
\textbf{Relationship between adversarial suffix lengths in AT ({\it i.e.}, $M_{\text{train}}$) and suffix jailbreaking ({\it i.e.}, $M_{\text{test}}$).}
We plot curves of the ASR on Vicuna-7B versus the adversarial suffix token length during AT in Figure~\ref{fig:asr-vs-at-sfx-len-vicuna}.
Results on remaining base models are presented in Figure~\ref{fig:asr-vs-at-sfx-len} in Appendix~\ref{app:exp:additional-result}.
From these figures, we find that as the adversarial suffix token length during AT increases, AT can effectively reduce the ASR of all analyzed attacks.
Furthermore, when the AT adversarial suffix token length is set to $20$, AT is already able to reduce the ASR by at least $30\%$ under all settings.
All these results demonstrate the effectiveness of defending against long-length jailbreaking with short-length AT.

\textbf{Time cost of LLM AT with different adversarial suffix lengths $M_{\text{train}}$.}
We then present the time costs of performing LLM AT in Table~\ref{tab:at-time-cost}.
From the table, we find that when the adversarial suffix length $M_{\text{train}}$ during AT is as long as $50$, the time cost of AT can reach around $20$ hours, which is around $30\%$ to $60\%$ longer than that when $M_{\text{train}}$ is set to $20$ or $30$.
Meanwhile, according to Figure~\ref{fig:asr-vs-at-sfx-len-vicuna} in this section and Figure~\ref{fig:asr-vs-at-sfx-len} in Appendix~\ref{app:exp:additional-result}, AT with a short adversarial suffix length of $20$ or $30$ can already enable trained LLMs to achieve strong jailbreak robustness.
These results demonstrate the advantages of using short-length AT instead of long-length AT.

\begin{table}[t]
\centering
\caption{
ASR(\%) of non-suffix jailbreak attacks versus models adversarially trained with different adversarial suffix length $M_{\text{train}}$.
A low ASR indicates a strong robustness.
}
\label{tab:asr-none-suffix-atk}
\tiny
\begin{tabular}{c l c c c c c c c}
\toprule
\multirow{3}{*}{Attack} & \multirow{3}{*}{Model} & \multicolumn{7}{c}{Adversarial Suffix Token Length $M_{\text{train}}$ in AT} \\
\cmidrule(lr){3-9}
& & $0$ (No AT) & $5$ & $10$ & $20$ & $30$ & $40$ & $50$ \\
\midrule
\multirow{2}{*}{PAIR}
& Vicuna-7B  & 84.0 & 53.0 & 48.0 & 42.0 & 50.0 & 44.0 & 55.0 \\
& Qwen2.5-7B & 71.0 & 20.0 & 17.0 & 25.0 & 19.0 & 24.0 & 26.0 \\
\midrule
\multirow{2}{*}{DeepInception}
& Vicuna-7B  & 76.0 & 39.0 & 15.0 & 0.0 & 0.0 & 0.0 & 0.0 \\
& Qwen2.5-7B & 89.0 &  0.0 &  0.0 & 0.0 & 0.0 & 0.0 & 0.0 \\
\bottomrule
\end{tabular}
\end{table}

\begin{wrapfigure}{r}{0.40\linewidth}
    \centering
    \includegraphics[width=0.90\linewidth]{./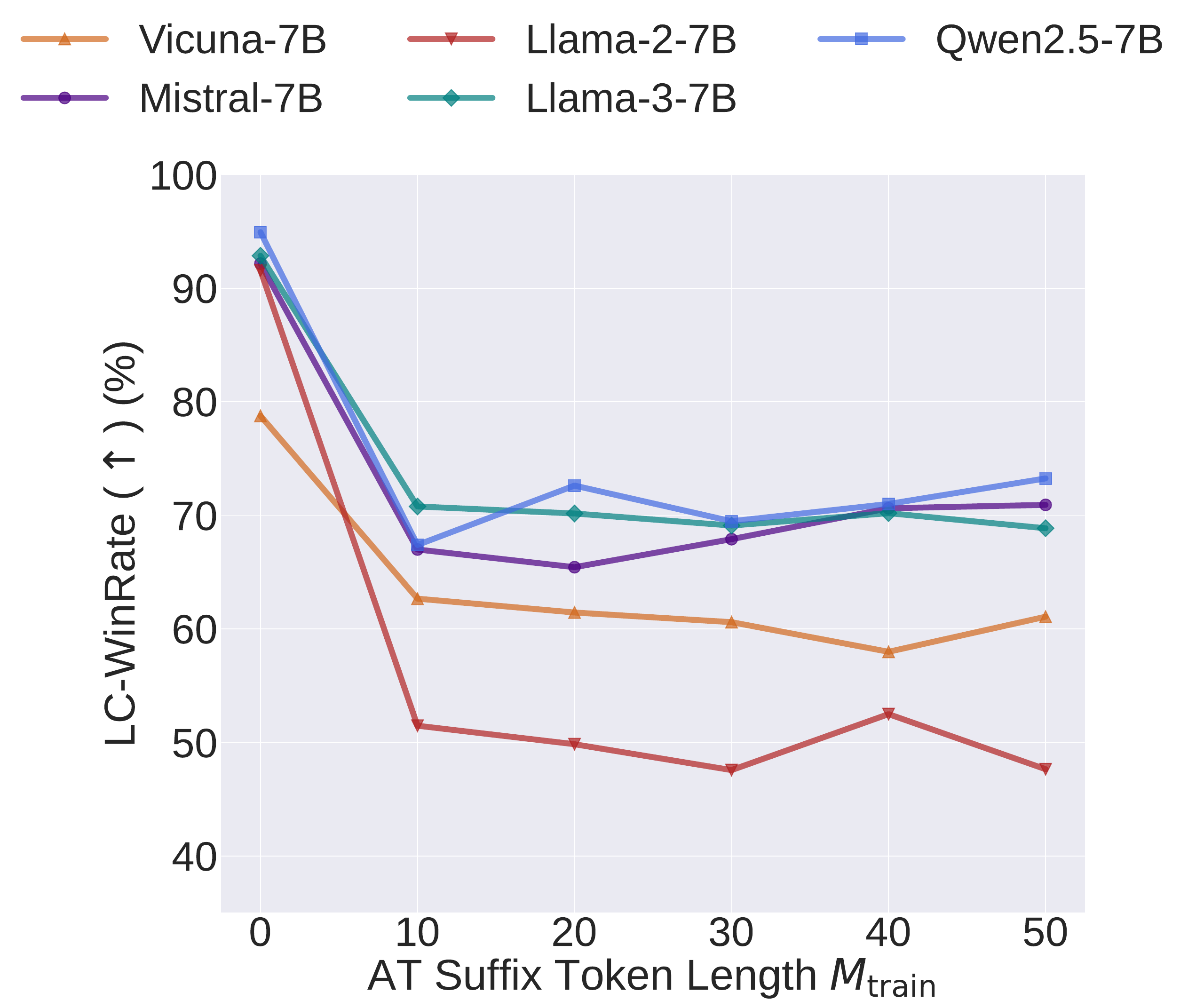}
    \caption{
    Utility analysis based on LC-WinRate against the referenced Davinci003 model.
    A high LC-WinRate indicates strong model utility.
    }
    \label{fig:alpacaeval-utility}
\end{wrapfigure}
\textbf{Robustness of jailbreak attacks beyond suffix attacks.}
We also calculate the ASR of two non-suffix jailbreak attacks, PAIR and DeepInception attacks, against LLM AT in Table~\ref{tab:asr-none-suffix-atk}.
From the table, one can observe that:
(1)~For the DeepInception attack, LLM AT with a short adversarial suffix length ($M_{\text{train}} = 20$) can already suppress its ASR to $0\%$.
(2)~For the PAIR attack, while LLM AT with a short adversarial suffix length can reduce its ASR from $84\%$ to around $50\%$ against the Vicuna-7B model and from $71\%$ to around $25\%$ against the Qwen2.5-7B model, further increasing the suffix length does not help much to improve LLM robustness against PAIR.
These results suggest that the mechanisms behind suffix-based and non-suffix-based jailbreak attacks might have different properties.

\textbf{Utility analysis.}
Finally, we plot the LC-WinRate of models trained under different adversarial suffix token lengths and the original model ({\it i.e.}, $M_{\text{train}}=0$) in Figure~\ref{fig:alpacaeval-utility}.
We find that while AT reduces the utility of models, they can still achieve WinRates close to or more than $50\%$ against the reference model Davinci003.
This means that these adversarially trained models achieve utility comparable to Davinci003.

\section{Conclusion}
\label{sec:conclusion}

We study the AT problem in LLMs and unveils that to defend against a suffix jailbreak attack with suffix length of $\Theta(M)$, it is sufficient to perform AT on jailbreak prompts with suffix length of $\Theta(\sqrt{M})$.
The finding is supported by both theoretical and empirical evidence.
Theoretically, we define a new AT problem in the ICL theory and prove a robust generalization bound for adversarially trained linear transformers.
This bound has a positive correlation with $\Theta(\sqrt{M_{\text{test}}} / M_{\text{train}})$.
Empirically, we conduct AT on real-world LLMs and confirm a clear positive correlation between the jailbreak ASR and the ratio $\sqrt{M_{\text{test}}} / M_{\text{train}}$.
Our results indicate that it is possible to conduct efficient short-length AT against strong long-length jailbreaking.

\section*{Acknowledgements}

Di Wang and Shaopeng Fu are supported in part by the  funding BAS/1/1689-01-01 and funding from KAUST - Center of Excellence for Generative AI, under award number 5940.

\bibliography{main}

%%%%%%%%%%%%%%%%%%%%%%%%%%%%%%%%%%%%%%%%%%%%%%%%%%%%%%%%%%%%

\newpage
\section*{NeurIPS Paper Checklist}

\begin{enumerate}

\item {\bf Claims}
    \item[] Question: Do the main claims made in the abstract and introduction accurately reflect the paper's contributions and scope?
    \item[] Answer: \answerYes{} % Replace by \answerYes{}, \answerNo{}, or \answerNA{}.
    \item[] Justification:
    The main claim made by the abstract and introduction is that: short-length AT can effectively help LLMs defend against long-length jailbreak attacks, which is supported by both theoretical and empirical evidence.
    The theoretical evidence is justified in Section~\ref{sec:adv-icl}, while the empirical evidence is justified in Section~\ref{sec:at-exp}.
    % This paper
    \item[] Guidelines:
    \begin{itemize}
        \item The answer NA means that the abstract and introduction do not include the claims made in the paper.
        \item The abstract and/or introduction should clearly state the claims made, including the contributions made in the paper and important assumptions and limitations. A No or NA answer to this question will not be perceived well by the reviewers. 
        \item The claims made should match theoretical and experimental results, and reflect how much the results can be expected to generalize to other settings. 
        \item It is fine to include aspirational goals as motivation as long as it is clear that these goals are not attained by the paper. 
    \end{itemize}

\item {\bf Limitations}
    \item[] Question: Does the paper discuss the limitations of the work performed by the authors?
    \item[] Answer: \answerYes{} % Replace by \answerYes{}, \answerNo{}, or \answerNA{}.
    \item[] Justification:
    Section~\ref{sec:at-exp:result} discusses the limitation of using only a single jailbreak attack during AT to defend against unseen attacks.
    \item[] Guidelines:
    \begin{itemize}
        \item The answer NA means that the paper has no limitation while the answer No means that the paper has limitations, but those are not discussed in the paper. 
        \item The authors are encouraged to create a separate "Limitations" section in their paper.
        \item The paper should point out any strong assumptions and how robust the results are to violations of these assumptions (e.g., independence assumptions, noiseless settings, model well-specification, asymptotic approximations only holding locally). The authors should reflect on how these assumptions might be violated in practice and what the implications would be.
        \item The authors should reflect on the scope of the claims made, e.g., if the approach was only tested on a few datasets or with a few runs. In general, empirical results often depend on implicit assumptions, which should be articulated.
        \item The authors should reflect on the factors that influence the performance of the approach. For example, a facial recognition algorithm may perform poorly when image resolution is low or images are taken in low lighting. Or a speech-to-text system might not be used reliably to provide closed captions for online lectures because it fails to handle technical jargon.
        \item The authors should discuss the computational efficiency of the proposed algorithms and how they scale with dataset size.
        \item If applicable, the authors should discuss possible limitations of their approach to address problems of privacy and fairness.
        \item While the authors might fear that complete honesty about limitations might be used by reviewers as grounds for rejection, a worse outcome might be that reviewers discover limitations that aren't acknowledged in the paper. The authors should use their best judgment and recognize that individual actions in favor of transparency play an important role in developing norms that preserve the integrity of the community. Reviewers will be specifically instructed to not penalize honesty concerning limitations.
    \end{itemize}

\item {\bf Theory assumptions and proofs}
    \item[] Question: For each theoretical result, does the paper provide the full set of assumptions and a complete (and correct) proof?
    \item[] Answer: \answerYes{} % Replace by \answerYes{}, \answerNo{}, or \answerNA{}.
    \item[] Justification:
    All assumptions are stated as Assumption~\ref{ass:icl:init} and Assumption~\ref{ass:icl:length-and-epsilon}.
    All proofs are presented in Appendix~\ref{app:proof}.
    \item[] Guidelines:
    \begin{itemize}
        \item The answer NA means that the paper does not include theoretical results. 
        \item All the theorems, formulas, and proofs in the paper should be numbered and cross-referenced.
        \item All assumptions should be clearly stated or referenced in the statement of any theorems.
        \item The proofs can either appear in the main paper or the supplemental material, but if they appear in the supplemental material, the authors are encouraged to provide a short proof sketch to provide intuition. 
        \item Inversely, any informal proof provided in the core of the paper should be complemented by formal proofs provided in appendix or supplemental material.
        \item Theorems and Lemmas that the proof relies upon should be properly referenced. 
    \end{itemize}

    \item {\bf Experimental result reproducibility}
    \item[] Question: Does the paper fully disclose all the information needed to reproduce the main experimental results of the paper to the extent that it affects the main claims and/or conclusions of the paper (regardless of whether the code and data are provided or not)?
    \item[] Answer: \answerYes{} % Replace by \answerYes{}, \answerNo{}, or \answerNA{}.
    \item[] Justification:
     All necessary details to reproduce experimental results in this paper are provided in Section~\ref{sec:at-exp:setup} and Appendix~\ref{app:exp}.
     The experimental code is also provided in the supplementary material.
    \item[] Guidelines:
    \begin{itemize}
        \item The answer NA means that the paper does not include experiments.
        \item If the paper includes experiments, a No answer to this question will not be perceived well by the reviewers: Making the paper reproducible is important, regardless of whether the code and data are provided or not.
        \item If the contribution is a dataset and/or model, the authors should describe the steps taken to make their results reproducible or verifiable. 
        \item Depending on the contribution, reproducibility can be accomplished in various ways. For example, if the contribution is a novel architecture, describing the architecture fully might suffice, or if the contribution is a specific model and empirical evaluation, it may be necessary to either make it possible for others to replicate the model with the same dataset, or provide access to the model. In general. releasing code and data is often one good way to accomplish this, but reproducibility can also be provided via detailed instructions for how to replicate the results, access to a hosted model (e.g., in the case of a large language model), releasing of a model checkpoint, or other means that are appropriate to the research performed.
        \item While NeurIPS does not require releasing code, the conference does require all submissions to provide some reasonable avenue for reproducibility, which may depend on the nature of the contribution. For example
        \begin{enumerate}
            \item If the contribution is primarily a new algorithm, the paper should make it clear how to reproduce that algorithm.
            \item If the contribution is primarily a new model architecture, the paper should describe the architecture clearly and fully.
            \item If the contribution is a new model (e.g., a large language model), then there should either be a way to access this model for reproducing the results or a way to reproduce the model (e.g., with an open-source dataset or instructions for how to construct the dataset).
            \item We recognize that reproducibility may be tricky in some cases, in which case authors are welcome to describe the particular way they provide for reproducibility. In the case of closed-source models, it may be that access to the model is limited in some way (e.g., to registered users), but it should be possible for other researchers to have some path to reproducing or verifying the results.
        \end{enumerate}
    \end{itemize}

\item {\bf Open access to data and code}
    \item[] Question: Does the paper provide open access to the data and code, with sufficient instructions to faithfully reproduce the main experimental results, as described in supplemental material?
    \item[] Answer: \answerYes{} % Replace by \answerYes{}, \answerNo{}, or \answerNA{}.
    \item[] Justification:
    Experimental code and detailed instructions are provided in the supplementary material.
    \item[] Guidelines:
    \begin{itemize}
        \item The answer NA means that paper does not include experiments requiring code.
        \item Please see the NeurIPS code and data submission guidelines (\url{https://nips.cc/public/guides/CodeSubmissionPolicy}) for more details.
        \item While we encourage the release of code and data, we understand that this might not be possible, so “No” is an acceptable answer. Papers cannot be rejected simply for not including code, unless this is central to the contribution (e.g., for a new open-source benchmark).
        \item The instructions should contain the exact command and environment needed to run to reproduce the results. See the NeurIPS code and data submission guidelines (\url{https://nips.cc/public/guides/CodeSubmissionPolicy}) for more details.
        \item The authors should provide instructions on data access and preparation, including how to access the raw data, preprocessed data, intermediate data, and generated data, etc.
        \item The authors should provide scripts to reproduce all experimental results for the new proposed method and baselines. If only a subset of experiments are reproducible, they should state which ones are omitted from the script and why.
        \item At submission time, to preserve anonymity, the authors should release anonymized versions (if applicable).
        \item Providing as much information as possible in supplemental material (appended to the paper) is recommended, but including URLs to data and code is permitted.
    \end{itemize}

\item {\bf Experimental setting/details}
    \item[] Question: Does the paper specify all the training and test details (e.g., data splits, hyperparameters, how they were chosen, type of optimizer, etc.) necessary to understand the results?
    \item[] Answer: \answerYes{} % Replace by \answerYes{}, \answerNo{}, or \answerNA{}.
    \item[] Justification:
     All necessary details to reproduce experimental results in this paper are provided in Section~\ref{sec:at-exp:setup} and Appendix~\ref{app:exp}.
    \item[] Guidelines:
    \begin{itemize}
        \item The answer NA means that the paper does not include experiments.
        \item The experimental setting should be presented in the core of the paper to a level of detail that is necessary to appreciate the results and make sense of them.
        \item The full details can be provided either with the code, in appendix, or as supplemental material.
    \end{itemize}

\item {\bf Experiment statistical significance}
    \item[] Question: Does the paper report error bars suitably and correctly defined or other appropriate information about the statistical significance of the experiments?
    \item[] Answer: \answerYes{} % Replace by \answerYes{}, \answerNo{}, or \answerNA{}.
    \item[] Justification: \answerNA{}
    \item[] Guidelines:
    \begin{itemize}
        \item The answer NA means that the paper does not include experiments.
        \item The authors should answer "Yes" if the results are accompanied by error bars, confidence intervals, or statistical significance tests, at least for the experiments that support the main claims of the paper.
        \item The factors of variability that the error bars are capturing should be clearly stated (for example, train/test split, initialization, random drawing of some parameter, or overall run with given experimental conditions).
        \item The method for calculating the error bars should be explained (closed form formula, call to a library function, bootstrap, etc.)
        \item The assumptions made should be given (e.g., Normally distributed errors).
        \item It should be clear whether the error bar is the standard deviation or the standard error of the mean.
        \item It is OK to report 1-sigma error bars, but one should state it. The authors should preferably report a 2-sigma error bar than state that they have a 96\% CI, if the hypothesis of Normality of errors is not verified.
        \item For asymmetric distributions, the authors should be careful not to show in tables or figures symmetric error bars that would yield results that are out of range (e.g. negative error rates).
        \item If error bars are reported in tables or plots, The authors should explain in the text how they were calculated and reference the corresponding figures or tables in the text.
    \end{itemize}

\item {\bf Experiments compute resources}
    \item[] Question: For each experiment, does the paper provide sufficient information on the computer resources (type of compute workers, memory, time of execution) needed to reproduce the experiments?
    \item[] Answer: \answerNo{} % Replace by \answerYes{}, \answerNo{}, or \answerNA{}.
    \item[] Justification: \answerNA{}
    \item[] Guidelines:
    \begin{itemize}
        \item The answer NA means that the paper does not include experiments.
        \item The paper should indicate the type of compute workers CPU or GPU, internal cluster, or cloud provider, including relevant memory and storage.
        \item The paper should provide the amount of compute required for each of the individual experimental runs as well as estimate the total compute. 
        \item The paper should disclose whether the full research project required more compute than the experiments reported in the paper (e.g., preliminary or failed experiments that didn't make it into the paper). 
    \end{itemize}
    
\item {\bf Code of ethics}
    \item[] Question: Does the research conducted in the paper conform, in every respect, with the NeurIPS Code of Ethics \url{https://neurips.cc/public/EthicsGuidelines}?
    \item[] Answer: \answerYes{} % Replace by \answerYes{}, \answerNo{}, or \answerNA{}.
    \item[] Justification: \answerNA{}
    \item[] Guidelines:
    \begin{itemize}
        \item The answer NA means that the authors have not reviewed the NeurIPS Code of Ethics.
        \item If the authors answer No, they should explain the special circumstances that require a deviation from the Code of Ethics.
        \item The authors should make sure to preserve anonymity (e.g., if there is a special consideration due to laws or regulations in their jurisdiction).
    \end{itemize}

\item {\bf Broader impacts}
    \item[] Question: Does the paper discuss both potential positive societal impacts and negative societal impacts of the work performed?
    \item[] Answer: \answerYes{} % Replace by \answerYes{}, \answerNo{}, or \answerNA{}.
    \item[] Justification: \answerNA{}
    \item[] Guidelines:
    \begin{itemize}
        \item The answer NA means that there is no societal impact of the work performed.
        \item If the authors answer NA or No, they should explain why their work has no societal impact or why the paper does not address societal impact.
        \item Examples of negative societal impacts include potential malicious or unintended uses (e.g., disinformation, generating fake profiles, surveillance), fairness considerations (e.g., deployment of technologies that could make decisions that unfairly impact specific groups), privacy considerations, and security considerations.
        \item The conference expects that many papers will be foundational research and not tied to particular applications, let alone deployments. However, if there is a direct path to any negative applications, the authors should point it out. For example, it is legitimate to point out that an improvement in the quality of generative models could be used to generate deepfakes for disinformation. On the other hand, it is not needed to point out that a generic algorithm for optimizing neural networks could enable people to train models that generate Deepfakes faster.
        \item The authors should consider possible harms that could arise when the technology is being used as intended and functioning correctly, harms that could arise when the technology is being used as intended but gives incorrect results, and harms following from (intentional or unintentional) misuse of the technology.
        \item If there are negative societal impacts, the authors could also discuss possible mitigation strategies (e.g., gated release of models, providing defenses in addition to attacks, mechanisms for monitoring misuse, mechanisms to monitor how a system learns from feedback over time, improving the efficiency and accessibility of ML).
    \end{itemize}
    
\item {\bf Safeguards}
    \item[] Question: Does the paper describe safeguards that have been put in place for responsible release of data or models that have a high risk for misuse (e.g., pretrained language models, image generators, or scraped datasets)?
    \item[] Answer: \answerNA{} % Replace by \answerYes{}, \answerNo{}, or \answerNA{}.
    \item[] Justification: \answerNA{}
    \item[] Guidelines:
    \begin{itemize}
        \item The answer NA means that the paper poses no such risks.
        \item Released models that have a high risk for misuse or dual-use should be released with necessary safeguards to allow for controlled use of the model, for example by requiring that users adhere to usage guidelines or restrictions to access the model or implementing safety filters. 
        \item Datasets that have been scraped from the Internet could pose safety risks. The authors should describe how they avoided releasing unsafe images.
        \item We recognize that providing effective safeguards is challenging, and many papers do not require this, but we encourage authors to take this into account and make a best faith effort.
    \end{itemize}

\item {\bf Licenses for existing assets}
    \item[] Question: Are the creators or original owners of assets (e.g., code, data, models), used in the paper, properly credited and are the license and terms of use explicitly mentioned and properly respected?
    \item[] Answer: \answerYes{} % Replace by \answerYes{}, \answerNo{}, or \answerNA{}.
    \item[] Justification:
    See the \texttt{README.md} file and the \texttt{LICENSE} file in the submitted experimental code for details.
    \item[] Guidelines:
    \begin{itemize}
        \item The answer NA means that the paper does not use existing assets.
        \item The authors should cite the original paper that produced the code package or dataset.
        \item The authors should state which version of the asset is used and, if possible, include a URL.
        \item The name of the license (e.g., CC-BY 4.0) should be included for each asset.
        \item For scraped data from a particular source (e.g., website), the copyright and terms of service of that source should be provided.
        \item If assets are released, the license, copyright information, and terms of use in the package should be provided. For popular datasets, \url{paperswithcode.com/datasets} has curated licenses for some datasets. Their licensing guide can help determine the license of a dataset.
        \item For existing datasets that are re-packaged, both the original license and the license of the derived asset (if it has changed) should be provided.
        \item If this information is not available online, the authors are encouraged to reach out to the asset's creators.
    \end{itemize}

\item {\bf New assets}
    \item[] Question: Are new assets introduced in the paper well documented and is the documentation provided alongside the assets?
    \item[] Answer: \answerYes{} % Replace by \answerYes{}, \answerNo{}, or \answerNA{}.
    \item[] Justification:
    See the \texttt{README.md} file in the submitted experimental code for details.
    \item[] Guidelines:
    \begin{itemize}
        \item The answer NA means that the paper does not release new assets.
        \item Researchers should communicate the details of the dataset/code/model as part of their submissions via structured templates. This includes details about training, license, limitations, etc. 
        \item The paper should discuss whether and how consent was obtained from people whose asset is used.
        \item At submission time, remember to anonymize your assets (if applicable). You can either create an anonymized URL or include an anonymized zip file.
    \end{itemize}

\item {\bf Crowdsourcing and research with human subjects}
    \item[] Question: For crowdsourcing experiments and research with human subjects, does the paper include the full text of instructions given to participants and screenshots, if applicable, as well as details about compensation (if any)? 
    \item[] Answer: \answerNA{} % Replace by \answerYes{}, \answerNo{}, or \answerNA{}.
    \item[] Justification: \answerNA{}
    \item[] Guidelines:
    \begin{itemize}
        \item The answer NA means that the paper does not involve crowdsourcing nor research with human subjects.
        \item Including this information in the supplemental material is fine, but if the main contribution of the paper involves human subjects, then as much detail as possible should be included in the main paper. 
        \item According to the NeurIPS Code of Ethics, workers involved in data collection, curation, or other labor should be paid at least the minimum wage in the country of the data collector. 
    \end{itemize}

\item {\bf Institutional review board (IRB) approvals or equivalent for research with human subjects}
    \item[] Question: Does the paper describe potential risks incurred by study participants, whether such risks were disclosed to the subjects, and whether Institutional Review Board (IRB) approvals (or an equivalent approval/review based on the requirements of your country or institution) were obtained?
    \item[] Answer: \answerNA{} % Replace by \answerYes{}, \answerNo{}, or \answerNA{}.
    \item[] Justification: \answerNA{}
    \item[] Guidelines:
    \begin{itemize}
        \item The answer NA means that the paper does not involve crowdsourcing nor research with human subjects.
        \item Depending on the country in which research is conducted, IRB approval (or equivalent) may be required for any human subjects research. If you obtained IRB approval, you should clearly state this in the paper. 
        \item We recognize that the procedures for this may vary significantly between institutions and locations, and we expect authors to adhere to the NeurIPS Code of Ethics and the guidelines for their institution. 
        \item For initial submissions, do not include any information that would break anonymity (if applicable), such as the institution conducting the review.
    \end{itemize}

\item {\bf Declaration of LLM usage}
    \item[] Question: Does the paper describe the usage of LLMs if it is an important, original, or non-standard component of the core methods in this research? Note that if the LLM is used only for writing, editing, or formatting purposes and does not impact the core methodology, scientific rigorousness, or originality of the research, declaration is not required.
    %this research? 
    \item[] Answer: \answerNA{} % Replace by \answerYes{}, \answerNo{}, or \answerNA{}.
    \item[] Justification: \answerNA{}
    \item[] Guidelines:
    \begin{itemize}
        \item The answer NA means that the core method development in this research does not involve LLMs as any important, original, or non-standard components.
        \item Please refer to our LLM policy (\url{https://neurips.cc/Conferences/2025/LLM}) for what should or should not be described.
    \end{itemize}

\end{enumerate}

%%%%%%%%%%%%%%%%%%%%%%%%%%%%%%%%%%%%%%%%%%%%%%%%%%%%%%%%%%%%

\newpage
\appendix

\section{Proofs}
\label{app:proof}

This section collects all the proofs in this paper.

\subsection{Technical lemmas}

This section presents several technical lemmas that will be used in our proofs.

\begin{lemma}[c.f. Lemma~D.2 in \citet{zhang2024trained}]
\label{lem:icl:tech-x-pow4}
If $x \in \mathbb{R}^{d\times 1}$ is Gaussian random vector of $d$ dimension, mean zero and covariance matrix $\Lambda$, and $A \in \mathbb{R}^{d\times d}$ is a fixed matrix.
Then
\begin{align*}
    \E [ x x^\top A x x^\top ] = \Lambda ( A + A^\top ) \Lambda + \mathrm{Tr}(A \Lambda) \Lambda.
\end{align*}
\end{lemma}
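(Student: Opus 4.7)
The plan is to verify the identity entrywise and reduce the computation to a fourth moment of a centered Gaussian, which is handled by Isserlis' theorem (Wick's rule). Denote the target matrix by $B := \E[x x^\top A x x^\top]$. Expanding the matrix products componentwise and using $\E[x_p x_q] = \Lambda_{pq}$, the $(i,j)$ entry becomes
\begin{align*}
B_{ij} \;=\; \sum_{k,l} A_{kl}\, \E[x_i x_k x_l x_j].
\end{align*}

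Next, I would apply Isserlis' theorem to the centered Gaussian fourth moment, which decomposes it as a sum over pair contractions:
\begin{align*}
\E[x_i x_k x_l x_j] \;=\; \Lambda_{ik}\Lambda_{lj} + \Lambda_{il}\Lambda_{kj} + \Lambda_{ij}\Lambda_{kl}.
\end{align*}
Substituting this expression back and splitting the sum according to the three contraction terms, the first piece gives $\sum_{k,l} \Lambda_{ik} A_{kl} \Lambda_{lj} = (\Lambda A \Lambda)_{ij}$, the second gives $\sum_{k,l} \Lambda_{il} A_{kl} \Lambda_{kj} = (\Lambda A^\top \Lambda)_{ij}$, and the third factors as $\Lambda_{ij}\sum_{k,l} A_{kl}\Lambda_{kl}$. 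Using the symmetry $\Lambda = \Lambda^\top$, the remaining double sum equals $\mathrm{Tr}(A\Lambda^\top) = \mathrm{Tr}(A\Lambda)$. Collecting these three contributions yields $B = \Lambda A \Lambda + \Lambda A^\top \Lambda + \mathrm{Tr}(A\Lambda)\Lambda$, which is precisely $\Lambda(A+A^\top)\Lambda + \mathrm{Tr}(A\Lambda)\Lambda$.

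The proof is standard and there is no substantive obstacle; it is essentially an exercise in Gaussian moment calculus. The one point requiring care is the index bookkeeping when translating the three Wick contractions back into coordinate-free matrix form, and in particular recognizing that the "diagonal" contraction $\Lambda_{ij}\Lambda_{kl}$ contributes a trace factor rather than a third matrix product. Since the statement is cited from \citet{zhang2024trained}, I would also briefly remark that the proof goes through verbatim for any symmetric positive semidefinite $\Lambda$ and arbitrary (not necessarily symmetric) $A$, which is the generality actually used elsewhere in the paper.
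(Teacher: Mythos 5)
Your proof is correct: the entrywise reduction to the fourth moment of a centered Gaussian, followed by Isserlis' (Wick's) theorem and the three pair contractions, yields exactly $\Lambda A\Lambda + \Lambda A^\top\Lambda + \mathrm{Tr}(A\Lambda)\Lambda$, and your index bookkeeping for each contraction is right. Note that this paper does not supply a proof of the lemma at all — it is imported verbatim as Lemma D.2 from \citet{zhang2024trained} — so there is nothing in the present manuscript to compare against; the Isserlis route you chose is the standard one and matches what that reference uses.
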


\begin{lemma}
\label{lem:icl:tech-x-quad}
If $x \in \mathbb{R}^{d\times 1}$ is Gaussian random vector of $d$ dimension, mean zero and covariance matrix $\Lambda$, and $A \in \mathbb{R}^{d\times d}$ is a fixed matrix.
Then
\begin{align*}
    \E[ x^\top A x ] = \mathrm{Tr}(A \Lambda).
\end{align*}
\end{lemma}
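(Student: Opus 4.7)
The statement is the standard identity $\E[x^\top A x] = \mathrm{Tr}(A\Lambda)$ for a centered Gaussian vector with covariance $\Lambda$; it does not actually require Gaussianity, only that $\E[xx^\top] = \Lambda$. The plan is to use the ``trace trick'' plus linearity of expectation.

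First I would observe that $x^\top A x$ is a scalar, so $x^\top A x = \mathrm{Tr}(x^\top A x)$. Then by the cyclic property of the trace, $\mathrm{Tr}(x^\top A x) = \mathrm{Tr}(A x x^\top)$. Taking expectations and using the fact that trace and expectation commute (both are linear operators, and the trace is a finite sum of entries), I would write
\begin{align*}
\E[x^\top A x] = \E[\mathrm{Tr}(A x x^\top)] = \mathrm{Tr}(A \cdot \E[x x^\top]) = \mathrm{Tr}(A\Lambda),
\end{align*}
where the final equality uses $\E[xx^\top] = \Lambda$, which holds for any centered random vector with covariance $\Lambda$ (Gaussianity is not needed here).

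There is no real obstacle; the only thing worth being careful about is the justification that $\E$ and $\mathrm{Tr}$ commute, which is immediate because $\mathrm{Tr}(AB) = \sum_{i,j} A_{ij} B_{ji}$ is a finite linear combination of entries, and linearity of expectation then applies entrywise. No additional regularity is required beyond the finiteness of the second moments guaranteed by the assumption that $\Lambda$ is the covariance of $x$.
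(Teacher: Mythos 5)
Your proof is correct, and it is the same computation as the paper's, just packaged differently. The paper expands $x^\top A x = \sum_{i,j} x_i A_{i,j} x_j$, pushes the expectation through the finite sum to get $\sum_{i,j} A_{i,j}\Lambda_{i,j}$, and then recognizes this double sum as a trace. You instead apply the cyclic property of the trace up front, writing $x^\top A x = \mathrm{Tr}(Axx^\top)$, and commute $\E$ with $\mathrm{Tr}$; this cyclic-trace step is exactly the bookkeeping the paper does by hand, and it is also available in the paper as Lemma~\ref{lem:icl:tech:mat-permute}, which you could cite rather than re-derive. Your side remark that Gaussianity is not needed — only $\E[xx^\top]=\Lambda$ — is correct; the paper states the lemma for Gaussian $x$ simply because that is the only setting in which it is invoked.
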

\begin{proof}
Since
\begin{align*}
    \E[x^\top A x]
    = \E\Bigl[ \sum_{i,j} x_i A_{i,j} x_j \Bigr]
    = \sum_{i,j} A_{i,j} \cdot \E[ x_i x_j ]
    = \sum_{i,j} A_{i,j} \cdot \Lambda_{i,j}
    = \sum_{i=1}^d (A \Lambda^\top)_{i,i}
    = \mathrm{Tr}(A \Lambda),
\end{align*}
which completes the proof.
\end{proof}

\begin{lemma}
\label{lem:icl:tech:mat-permute}
For any matrices $A \in \mathbb{R}^{n\times m}$ and $B \in \mathbb{R}^{m \times n}$, we have
\begin{align*}
    \mathrm{Tr}(A B) = \mathrm{Tr}(B A).
\end{align*}
\end{lemma}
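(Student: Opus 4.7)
The plan is to prove this standard cyclic-trace identity directly from the definition of the trace by expanding both sides as double sums over matrix entries and then interchanging the order of summation. This is a finite sum with no convergence issues, so the interchange is unconditional.

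First I would write $\mathrm{Tr}(AB) = \sum_{i=1}^{n} (AB)_{i,i}$ and expand the matrix product entry using the definition to get $(AB)_{i,i} = \sum_{j=1}^{m} A_{i,j} B_{j,i}$. Substituting, this gives a double sum $\sum_{i=1}^{n} \sum_{j=1}^{m} A_{i,j} B_{j,i}$. The key (and only) step is to swap the order of the two finite summations, obtaining $\sum_{j=1}^{m} \sum_{i=1}^{n} B_{j,i} A_{i,j}$, where I also reorder the scalar product $A_{i,j} B_{j,i} = B_{j,i} A_{i,j}$.

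Next I would recognize the inner sum as an entry of the product $BA$: $\sum_{i=1}^{n} B_{j,i} A_{i,j} = (BA)_{j,j}$, since $B \in \mathbb{R}^{m \times n}$ and $A \in \mathbb{R}^{n \times m}$ so $BA \in \mathbb{R}^{m \times m}$ is well-defined. Summing over $j$ from $1$ to $m$ then yields $\sum_{j=1}^{m} (BA)_{j,j} = \mathrm{Tr}(BA)$, completing the chain of equalities.

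There is essentially no obstacle here; the entire argument rests on the legitimacy of swapping two finite sums, which is immediate. The only bookkeeping to be careful about is making sure the dimensions line up so that both $AB \in \mathbb{R}^{n \times n}$ and $BA \in \mathbb{R}^{m \times m}$ are square (so that their traces are defined), which follows directly from the stated shapes of $A$ and $B$.
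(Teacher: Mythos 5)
Your proposal is correct and uses the same argument as the paper: expand $\mathrm{Tr}(AB)$ as the double sum $\sum_{i}\sum_{j} A_{i,j}B_{j,i}$, swap the finite sums, and recognize the result as $\mathrm{Tr}(BA)$. Nothing further to note.
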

\begin{proof}
Since
\begin{align*}
    \mathrm{Tr}(A B)
    = \sum_{i=1}^n (A B)_{i,i}
    = \sum_{i=1}^n \sum_{j=1}^m A_{i,j} B_{j,i}
    = \sum_{j=1}^m \sum_{i=1}^n B_{j,i} A_{i,j}
    = \sum_{j=1}^m (B A)_{j,j}
    = \mathrm{Tr}(B A),
\end{align*}
which completes the proof.
\end{proof}

\begin{lemma}[From Lemma~D.1 in \citet{zhang2024trained}; Also in \citet{petersen2008matrix}]
\label{lem:icl:tech:trace-diff}
Let $X \in \mathbb{R}^{n\times m}$ be a variable matrix and $A \in \mathbb{R}^{a\times n}$ and $B\in \mathbb{R}^{n\times m}$ be two fixed matrices.
Then, we have
\begin{align*}
    &\partial_{X} \mathrm{Tr}(B X^\top) = B \in \mathbb{R}^{n\times m}, \nonumber\\
    & \partial_{X} \mathrm{Tr}(A X B X^\top) = ( AXB + A^\top X B^\top ) \in \mathbb{R}^{n\times m}.
\end{align*}
\end{lemma}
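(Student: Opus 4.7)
The plan is to verify both identities by direct coordinate computation, using the linearity of the trace and applying the product rule separately to each occurrence of the variable $X$. Both statements are classical matrix-calculus facts, so the work reduces to careful index bookkeeping rather than any real analytical content.

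For the first identity, I would expand the trace in coordinates. Since $(BX^\top)_{i,i} = \sum_j B_{i,j}(X^\top)_{j,i} = \sum_j B_{i,j} X_{i,j}$, summing over $i$ gives
\[
\mathrm{Tr}(BX^\top) = \sum_{i,j} B_{i,j} X_{i,j},
\]
which is linear in the entries of $X$. Differentiating with respect to a single entry $X_{r,s}$ isolates the coefficient $B_{r,s}$, yielding $[\partial_X \mathrm{Tr}(BX^\top)]_{r,s} = B_{r,s}$, that is, $\partial_X \mathrm{Tr}(BX^\top) = B$.

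For the second identity, I would expand
\[
\mathrm{Tr}(AXBX^\top) = \sum_{p,j,k,l} A_{p,j}\, X_{j,k}\, B_{k,l}\, X_{p,l},
\]
in which $X$ appears in two distinct positions. Differentiating with respect to $X_{r,s}$ via the product rule then produces two contributions. Freezing the second copy $X_{p,l}$ and selecting $j=r, k=s$ in the first copy yields $\sum_{p,l} A_{p,r}\, B_{s,l}\, X_{p,l}$, which I would recombine, using $A_{p,r} = (A^\top)_{r,p}$ and $B_{s,l} = (B^\top)_{l,s}$, into $(A^\top X B^\top)_{r,s}$. Freezing the first copy and selecting $p=r, l=s$ in the second copy leaves $\sum_{j,k} A_{r,j}\, X_{j,k}\, B_{k,s} = (AXB)_{r,s}$. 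Adding these gives $[\partial_X \mathrm{Tr}(AXBX^\top)]_{r,s} = (AXB)_{r,s} + (A^\top X B^\top)_{r,s}$, as claimed.

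The only place to slip up is the bookkeeping that tracks which factor ends up transposed after re-indexing the summations, so as a sanity check I would recompute the derivative after rewriting $\mathrm{Tr}(AXBX^\top) = \mathrm{Tr}(X^\top A X B)$ via cyclicity of the trace and re-applying the same two-summand product rule, verifying that the two computations agree. There is no genuine analytical obstacle; both identities follow from a direct finite-sum manipulation.
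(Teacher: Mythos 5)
Your coordinate computation is correct and complete for both identities, including the clean product-rule split into two summands for the quadratic trace and the final cyclicity sanity check. The paper itself does not prove this lemma — it is stated as a known fact cited from Lemma~D.1 of \citet{zhang2024trained} and from \citet{petersen2008matrix} — so you are supplying a proof where the paper supplies only a reference, and the direct index-bookkeeping route you take is the standard one. One small remark not affecting your argument: as written the paper's dimension annotations ($A\in\mathbb{R}^{a\times n}$, $B\in\mathbb{R}^{n\times m}$) make the product $AXBX^\top$ ill-formed in general; for $\mathrm{Tr}(AXBX^\top)$ to be defined with $X\in\mathbb{R}^{n\times m}$ one needs $A\in\mathbb{R}^{n\times n}$ and $B\in\mathbb{R}^{m\times m}$, which is the sizing your index computation implicitly (and correctly) assumes.
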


\begin{lemma}[Von Neumann's Trace Inequality; Also in Lemma~D.3 in \citet{zhang2024trained}]
\label{lem:icl:tech-von-trace}
Let $A\in \mathbb{R}^{n\times m}$ and $B\in \mathbb{R}^{m \times n}$ be two matrices.
Suppose
$(\sigma_1(A),\cdots \sigma_{\min\{n,m\}}(A))$
and
$(\sigma_1(B),\cdots \sigma_{\min\{n,m\}}(B))$
are all the singular values of $A$ and $B$, respectively.
We have
\begin{align*}
    \mathrm{Tr}(AB)
    \leq \sum_{i=1}^{\min\{n,m\}} \sigma_i(A) \sigma_i(B)
    \leq \sum_{i=1}^{\min\{n,m\}} \|A\|_2 \cdot \|B\|_2
    = \min\{n,m\} \cdot \|A\|_2 \cdot \|B\|_2.
\end{align*}
\end{lemma}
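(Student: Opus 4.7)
The plan is to establish the chain of inequalities in three separate stages, working left-to-right. The non-trivial content is the first inequality $\mathrm{Tr}(AB) \leq \sum_{i=1}^{\min\{n,m\}} \sigma_i(A)\sigma_i(B)$, which is exactly the classical statement of Von Neumann's trace inequality. The second inequality and the final equality are elementary bookkeeping.

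For the first inequality, the approach I would use is the SVD reduction. Write $A = U_A \Sigma_A V_A^\top$ and $B = U_B \Sigma_B V_B^\top$, where $U_A, V_A, U_B, V_B$ are orthogonal and $\Sigma_A, \Sigma_B$ are ``diagonal'' (in the rectangular sense) with the singular values on the diagonal. Using the cyclic invariance of the trace (Lemma~\ref{lem:icl:tech:mat-permute}), rewrite
\begin{align*}
    \mathrm{Tr}(AB) = \mathrm{Tr}\bigl(\Sigma_A (V_A^\top U_B) \Sigma_B (V_B^\top U_A)\bigr).
\end{align*}
The matrices $P := V_A^\top U_B$ and $Q := V_B^\top U_A$ are orthogonal, and $\Sigma_A$, $\Sigma_B$ have the singular values as diagonal entries. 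Expanding the trace as a sum over diagonal entries reduces the problem to bounding $\sum_{i,j} \sigma_i(A) \sigma_j(B) P_{ij} Q_{ji}$. A standard argument, using that $P$ and $Q$ are orthogonal so the matrix with entries $P_{ij}Q_{ji}$ is doubly substochastic (in absolute value), together with the Hardy--Littlewood--Pólya rearrangement inequality (or equivalently the Birkhoff--von Neumann theorem on doubly stochastic matrices), yields the desired bound against the sorted pairing $\sum_i \sigma_i(A)\sigma_i(B)$. Since the lemma sorts both sequences in increasing order, the pairing is consistent with the similarly-ordered form of rearrangement, which is the correct direction.

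For the second inequality, by definition of the operator norm, $\sigma_i(A) \leq \sigma_{\min\{n,m\}}(A) = \|A\|_2$ and likewise $\sigma_i(B) \leq \|B\|_2$ for every index $i$, so termwise replacement gives
\begin{align*}
    \sum_{i=1}^{\min\{n,m\}} \sigma_i(A)\sigma_i(B) \leq \sum_{i=1}^{\min\{n,m\}} \|A\|_2 \cdot \|B\|_2.
\end{align*}
The final equality is then immediate since the summand no longer depends on $i$. The main obstacle in a fully self-contained write-up is the rearrangement/majorization step in the first inequality; since the paper explicitly labels this as Von Neumann's trace inequality and cites \citet{zhang2024trained}, the cleanest route is simply to invoke the classical statement, and devote the remainder of the proof to verifying the two trivial subsequent steps.
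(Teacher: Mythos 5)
The proposal is correct and takes essentially the same approach as the paper, which simply cites this lemma as a known classical result (Von Neumann's trace inequality, also Lemma~D.3 in \citet{zhang2024trained}) without supplying a proof. Your observation that sorting both singular-value sequences in \emph{increasing} order still yields the similarly-ordered pairing required by the rearrangement step, together with the identification $\sigma_{\min\{n,m\}}(\cdot) = \|\cdot\|_2$ for the two elementary remaining steps, is exactly right.
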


\subsection{Proof of Proposition~\ref{prop:icl:surrogate-bound}}
\label{app:proof:surrogate-bound}

This section presents the proof of Proposition~\ref{prop:icl:surrogate-bound}.

\begin{proof}[Proof of Proposition~\ref{prop:icl:surrogate-bound}]
For the AT loss $\mathcal L(\theta)$ defined in Eq.~(\ref{eq:icl:at-loss}), we have that
\begin{align}
    &\mathcal L^{\text{adv}}(\theta)
    := \mathcal R^{\text{adv}}(\theta,M_{\text{train}})
    = \E_{\tau} \max_{\|\Delta_\tau^\top\|_{2,\infty} \leq \epsilon} | \hat y_{q,\theta}(E^{\text{adv}}_{\tau,M_{\text{train}}}) - y_{\tau,q} |^2 \nonumber\\
    &= \E_{\tau} \left\{ \max_{\|\Delta_\tau^\top\|_{2,\infty}\leq\epsilon} \frac{1}{2} \left| \begin{pmatrix}(w^V_{21})^\top & w^V_{22} \end{pmatrix} \cdot \frac{E^{\text{adv}}_{\tau,M_{\text{train}}} E^{\text{adv},\top}_{\tau,M_{\text{train}}}}{N + M_{\text{train}}} \cdot \begin{pmatrix} W^{KQ}_{11} \\ (w^{KQ}_{21})^\top \end{pmatrix} \cdot x_{\tau,q} - y_{\tau,q} \right|^2 \right\}.
    \label{eq:icl:proof:at-original:eq1}
\end{align}
Then, the term $E^{\text{adv}}_{\tau,M_{\text{train}}} E^{\text{adv},\top}_{\tau,M_{\text{train}}}$ can be decomposed as follows,
\begin{align*}
    &E^{\text{adv}}_{\tau,M_{\text{train}}} E^{\text{adv},\top}_{\tau,M_{\text{train}}}
    = \begin{pmatrix}
        \begin{pmatrix} X_{\tau} \\ Y_{\tau} \end{pmatrix}
        &
        \begin{pmatrix} X^{\text{sfx}}_{\tau} + \Delta_\tau \\ Y^{\text{sfx}}_{\tau} \end{pmatrix}
        &
        \begin{pmatrix} x_{\tau,q} \\ 0 \end{pmatrix}
    \end{pmatrix}
    \cdot
    \begin{pmatrix}
        \begin{pmatrix} X_{\tau} \\ Y_{\tau} \end{pmatrix}
        &
        \begin{pmatrix} X^{\text{sfx}}_{\tau} + \Delta_\tau \\ Y^{\text{sfx}}_{\tau} \end{pmatrix}
        &
        \begin{pmatrix} x_{\tau,q} \\ 0 \end{pmatrix}
    \end{pmatrix}^\top \nonumber\\
    &= \begin{pmatrix} X_{\tau} & X^{\text{sfx}}_\tau & x_{\tau,q} \\ Y_{\tau} & Y^{\text{sfx}}_{\tau} & 0 \end{pmatrix} \begin{pmatrix} X_{\tau} & X^{\text{sfx}}_\tau & x_{\tau,q} \\ Y_{\tau} & Y^{\text{sfx}}_{\tau} & 0 \end{pmatrix}^\top
        + \begin{pmatrix} 0_{d\times N} & \Delta_\tau & 0_{d\times 1} \\ 0_{1\times N} & 0_{1\times M_{\text{train}}} & 0 \end{pmatrix} \begin{pmatrix} 0_{d\times N} & \Delta_\tau & 0_{d\times 1} \\ 0_{1\times N} & 0_{1\times M_{\text{train}}} & 0 \end{pmatrix}^\top
    \nonumber\\
    &\quad\quad + \begin{pmatrix} X_{\tau} & X^{\text{sfx}}_\tau & x_{\tau,q} \\ Y_{\tau} & Y^{\text{sfx}}_{\tau} & 0 \end{pmatrix} \begin{pmatrix} 0_{d\times N} & \Delta_\tau & 0_{d\times 1} \\ 0_{1\times N} & 0_{1\times M_{\text{train}}} & 0 \end{pmatrix}^\top
        + \begin{pmatrix} 0_{d\times N} & \Delta_\tau & 0_{d\times 1} \\ 0_{1\times N} & 0_{1\times M_{\text{train}}} & 0 \end{pmatrix} \begin{pmatrix} X_{\tau} & X^{\text{sfx}}_\tau & x_{\tau,q} \\ Y_{\tau} & Y^{\text{sfx}}_{\tau} & 0 \end{pmatrix}^\top \nonumber\\
    &= \begin{pmatrix} X_{\tau} & X^{\text{sfx}}_\tau & x_{\tau,q} \\ Y_{\tau} & Y^{\text{sfx}}_{\tau} & 0 \end{pmatrix} \begin{pmatrix} X_{\tau} & X^{\text{sfx}}_\tau & x_{\tau,q} \\ Y_{\tau} & Y^{\text{sfx}}_{\tau} & 0 \end{pmatrix}^\top
        + \begin{pmatrix} \Delta_\tau \\ 0_{1\times M_{\text{train}}} \end{pmatrix} \begin{pmatrix} \Delta_\tau \\ 0_{1\times M_{\text{train}}} \end{pmatrix}^\top
        \nonumber\\
        &\quad
        + \begin{pmatrix} X^{\text{sfx}}_\tau \\ Y^{\text{sfx}}_{\tau} \end{pmatrix} \begin{pmatrix} \Delta_\tau \\ 0_{1\times M_{\text{train}}} \end{pmatrix}^\top
        + \begin{pmatrix} \Delta_\tau \\ 0_{1\times M_{\text{train}}} \end{pmatrix} \begin{pmatrix} X^{\text{sfx}}_\tau \\ Y^{\text{sfx}}_{\tau} \end{pmatrix}^\top,
\end{align*}
which further means that
\begin{align}
    &\begin{pmatrix}(w^V_{21})^\top & w^V_{22} \end{pmatrix} \cdot \frac{E^{\text{adv}}_{\tau,M_{\text{train}}} E^{\text{adv},\top}_{\tau,M_{\text{train}}}}{N + M_{\text{train}}} \cdot \begin{pmatrix} W^{KQ}_{11} \\ (w^{KQ}_{21})^\top \end{pmatrix} \cdot x_{\tau,q} \nonumber \\
    &= \begin{pmatrix}(w^V_{21})^\top & w^V_{22} \end{pmatrix} \cdot
        \frac{ \begin{pmatrix} X_{\tau} & X^{\text{sfx}}_\tau & x_{\tau,q} \\ Y_{\tau} & Y^{\text{sfx}}_{\tau} & 0 \end{pmatrix} \begin{pmatrix} X_{\tau} & X^{\text{sfx}}_\tau & x_{\tau,q} \\ Y_{\tau} & Y^{\text{sfx}}_{\tau} & 0 \end{pmatrix}^\top }{ N + M_{\text{train}} }
    \cdot \begin{pmatrix} W^{KQ}_{11} \\ (w^{KQ}_{21})^\top \end{pmatrix} \cdot x_{\tau,q}
    \nonumber \\
    &\quad
        + (w^{V}_{21})^\top \cdot \frac{\Delta_\tau \Delta_\tau^\top}{ N + M_{\text{train}} } \cdot  W^{KQ}_{11} x_{\tau,q}
        + \begin{pmatrix} (w^{V}_{21})^\top & w^V_{22} \end{pmatrix} \cdot \frac{\begin{pmatrix} X^{\text{sfx}}_\tau \\ Y^{\text{sfx}}_{\tau} \end{pmatrix} \Delta_\tau^\top}{ N + M_{\text{train}} } \cdot  W^{KQ}_{11} x_{\tau,q}
    \nonumber \\
    &\quad
        + (w^{V}_{21})^\top \cdot \frac{\Delta_\tau \begin{pmatrix} X^{\text{sfx}}_\tau \\ Y^{\text{sfx}}_{\tau} \end{pmatrix}^\top}{ N + M_{\text{train}} } \cdot \begin{pmatrix} W^{KQ}_{11} \\ (w^{KQ}_{21})^\top \end{pmatrix} x_{\tau,q}.
    \label{eq:icl:proof:at-original:eq2}
\end{align}
Inserting Eq.~(\ref{eq:icl:proof:at-original:eq2}) into Eq.~(\ref{eq:icl:proof:at-original:eq1}) and applying the inequality that $|a+b|^2 \leq 2 \cdot (a^2 + b^2)$, $\mathcal L^{\text{adv}}(\theta)$ can thus be bounded as
\begin{align}
    \mathcal L^{\text{adv}}(\theta)
    % \nonumber\\
    &\leq 2 \cdot \E_\tau \Bigl[
        \begin{pmatrix}(w^V_{21})^\top & w^V_{22} \end{pmatrix} \cdot
            \frac{ \begin{pmatrix} X_{\tau} & X^{\text{sfx}}_\tau & x_{\tau,q} \\ Y_{\tau} & Y^{\text{sfx}}_{\tau} & 0 \end{pmatrix} \begin{pmatrix} X_{\tau} & X^{\text{sfx}}_\tau & x_{\tau,q} \\ Y_{\tau} & Y^{\text{sfx}}_{\tau} & 0 \end{pmatrix}^\top }{ N + M_{\text{train}} }
        \cdot \begin{pmatrix} W^{KQ}_{11} \\ (w^{KQ}_{21})^\top \end{pmatrix} \cdot x_{\tau,q} - y_{\tau,q} \Bigr]^2
    \nonumber\\
    &\quad
        + \underbrace{ 2\cdot \E_\tau \max_{\|\Delta_\tau^\top\|_{2,\infty}\leq\epsilon} \Bigl[ (w^{V}_{21})^\top \cdot \frac{\Delta_\tau \Delta_\tau^\top}{ N + M_{\text{train}} } \cdot  W^{KQ}_{11} x_{\tau,q} \Bigr]^2 }_{ := A_1(\theta) }
    \nonumber\\
    &\quad
        + \underbrace{ 2\cdot \E_\tau \max_{\|\Delta_\tau^\top\|_{2,\infty}\leq\epsilon} \Bigl[ \begin{pmatrix} (w^{V}_{21})^\top & w^V_{22} \end{pmatrix} \cdot \frac{\begin{pmatrix} X^{\text{sfx}}_\tau \\ Y^{\text{sfx}}_{\tau} \end{pmatrix} \Delta_\tau^\top}{ N + M_{\text{train}} } \cdot  W^{KQ}_{11} x_{\tau,q} \Bigr]^2 }_{ := A_2(\theta) }
    \nonumber\\
    &\quad
        + \underbrace{ 2\cdot \E_\tau \max_{\|\Delta_\tau^\top\|_{2,\infty}\leq\epsilon} \Bigl[ (w^{V}_{21})^\top \cdot \frac{\Delta_\tau \begin{pmatrix} X^{\text{sfx}}_\tau \\ Y^{\text{sfx}}_{\tau} \end{pmatrix}^\top}{ N + M_{\text{train}} } \cdot \begin{pmatrix} W^{KQ}_{11} \\ (w^{KQ}_{21})^\top \end{pmatrix} x_{\tau,q} \Bigr]^2 }_{ := A_3(\theta) }.
    \label{eq:icl:proof:at-original:eq3}
\end{align}
We then bound terms $A_1(\theta)$, $A_2(\theta)$, and $A_3(\theta)$ in Eq.~(\ref{eq:icl:proof:at-original:eq3}) seprately.
For the term $A_1(\theta)$ in Eq.~(\ref{eq:icl:proof:at-original:eq3}), we have
\begin{align}
    &A_1(\theta)
    % \nonumber\\
    := \frac{2}{(N+M_{\text{train}})^2} \cdot \E_\tau \max_{\|\Delta_\tau^\top\|_{2,\infty}\leq\epsilon} \Bigl[ (w^{V}_{21})^\top \cdot \sum_{i=1}^{M_{\text{train}}} \delta_{\tau,i} \delta_{\tau,i}^\top  \cdot  W^{KQ}_{11} x_{\tau,q} \Bigr]^2 \nonumber\\
    &\leq \frac{2}{(N+M_{\text{train}})^2}\cdot \E_\tau \max_{\|\Delta_\tau^\top\|_{2,\infty}\leq\epsilon} \Bigl[ \underbrace{ \sum_{i=1}^{M_{\text{train}}} [ (w^{V}_{21})^\top \delta_{\tau,i}]^2 \cdot \sum_{i=1}^{M_{\text{train}}} [\delta_{\tau,i}^\top W^{KQ}_{11} x_{\tau,q}]^2 }_{\text{by Cauchy-Schwarz Inequality}} \Bigr] \nonumber\\
    &\leq \frac{2}{(N+M_{\text{train}})^2}\cdot \E_\tau \Bigl[ \sum_{i=1}^{M_{\text{train}}} \max_{\|\delta_{\tau,i}\|_{2}\leq\epsilon} [ (w^{V}_{21})^\top \delta_{\tau,i}]^2 \cdot \sum_{i=1}^{M_{\text{train}}} \max_{\|\delta_{\tau,i}\|_{2}\leq\epsilon} [\delta_i^\top W^{KQ}_{11} x_{\tau,q}]^2 \Bigr] \nonumber\\
    &= \frac{2}{(N+M_{\text{train}})^2}\cdot \E_\tau \Bigl[ \sum_{i=1}^{M_{\text{train}}}  [\|w^{V}_{21}\|_2 \cdot \epsilon]^2 \cdot \sum_{i=1}^{M_{\text{train}}} [ \| W^{KQ}_{11} x_{\tau,q} \|_2 \cdot \epsilon ]^2 \Bigr]
    \nonumber\\
    &= \frac{2 \epsilon^4 M_{\text{train}}^2 }{(N+M_{\text{train}})^2}\cdot \|w^{V}_{21}\|_2^2 \cdot \E_{\tau} \| W^{KQ}_{11} x_{\tau,q} \|_2^2.
    \label{eq:icl:proof:at-original:eq3:term1}
\end{align}
For the term $A_2(\theta)$ in Eq.~(\ref{eq:icl:proof:at-original:eq3}), we have
\begin{align}
    &A_2(\theta)
    := \frac{2}{(N+M_{\text{train}})^2} \cdot \E_\tau \max_{\|\Delta_\tau^\top\|_{2,\infty}\leq\epsilon} \Bigl[ \begin{pmatrix} (w^{V}_{21})^\top & w^V_{22} \end{pmatrix} \cdot \sum_{i=1}^{M_{\text{train}}} \begin{pmatrix} x^{\text{sfx}}_{\tau,i} \\ y^{\text{sfx}}_{\tau,i} \end{pmatrix} \delta_{\tau,i}^\top  \cdot  W^{KQ}_{11} x_{\tau,q} \Bigr]^2 \nonumber\\
    &\leq \frac{2}{(N+M_{\text{train}})^2} \cdot \E_\tau \max_{\|\Delta_\tau^\top\|_{2,\infty}\leq\epsilon} \Bigl[ \underbrace{  \sum_{i=1}^{M_{\text{train}}} \Bigl[ \begin{pmatrix} (w^{V}_{21})^\top & w^V_{22} \end{pmatrix}  \begin{pmatrix} x^{\text{sfx}}_{\tau,i} \\ y^{\text{sfx}}_{\tau,i} \end{pmatrix} \Bigr]^2 \cdot \sum_{i=1}^{M_{\text{train}}} [ \delta_{\tau,i}^\top  W^{KQ}_{11} x_{\tau,q} ]^2 }_{\text{by Cauchy-Schwarz Inequality}} \Bigr] \nonumber\\
    & = \frac{2}{(N+M_{\text{train}})^2} \cdot \sum_{i=1}^{M_{\text{train}}} \E_\tau \Bigl[ \begin{pmatrix} (w^{V}_{21})^\top & w^V_{22} \end{pmatrix}  \begin{pmatrix} x^{\text{sfx}}_{\tau,i} \\ y^{\text{sfx}}_{\tau,i} \end{pmatrix} \Bigr]^2 \cdot \sum_{i=1}^{M_{\text{train}}} \E_\tau \Bigl[ \max_{\|\delta_{\tau,i}\|_{2}\leq\epsilon} [ \delta_{\tau,i}^\top  W^{KQ}_{11} x_{\tau,q} ]^2 \Bigr] \nonumber\\
    & = \frac{2}{(N+M_{\text{train}})^2} \cdot \sum_{i=1}^{M_{\text{train}}} \E_\tau \Bigl[ \begin{pmatrix} (w^{V}_{21})^\top & w^V_{22} \end{pmatrix}  \begin{pmatrix} x^{\text{sfx}}_{\tau,i} \\ y^{\text{sfx}}_{\tau,i} \end{pmatrix} \Bigr]^2 \cdot \sum_{i=1}^{M_{\text{train}}} \E_\tau [ \| W^{KQ}_{11} x_{\tau,q} \|_2 \cdot \epsilon ]^2 \nonumber\\
    & = \frac{2 \epsilon^2 M_{\text{train}}}{(N+M_{\text{train}})^2} \cdot \E_{\tau} \|W^{KQ}_{11} x_{\tau,q}\|_2^2 \cdot \sum_{i=1}^{M_{\text{train}}}  \E_\tau \Bigl[ \begin{pmatrix} (w^{V}_{21})^\top & w^V_{22} \end{pmatrix}  \begin{pmatrix} x^{\text{sfx}}_{\tau,i} \\ y^{\text{sfx}}_{\tau,i} \end{pmatrix} \Bigr]^2.
    \label{eq:icl:proof:at-original:eq3:term2}
\end{align}
For the term $A_3(\theta)$ in Eq.~(\ref{eq:icl:proof:at-original:eq3}), we have
\begin{align}
    &A_3(\theta)
    := \frac{2}{ (N+M_{\text{train}})^2 } \cdot \E_\tau \max_{\|\Delta_\tau^\top\|_{2,\infty}\leq\epsilon} \Bigl[ (w^{V}_{21})^\top \cdot \sum_{i=1}^{M_{\text{train}}} \delta_{\tau,i} \begin{pmatrix} x^{\text{sfx}}_{\tau,i} \\ y^{\text{sfx}}_{\tau,i} \end{pmatrix}^\top \cdot \begin{pmatrix} W^{KQ}_{11} \\ (w^{KQ}_{21})^\top \end{pmatrix} x_{\tau,q} \Bigr]^2 \nonumber \\
    &\leq \frac{2}{ (N+M_{\text{train}})^2 } \cdot \E_\tau \max_{\|\Delta_\tau^\top\|_{2,\infty}\leq\epsilon} \Bigl[ \underbrace{ \sum_{i=1}^{M_{\text{train}}} [ (w^{V}_{21})^\top \delta_{\tau,i} ]^2 \cdot \sum_{i=1}^{M_{\text{train}}} \Bigl[ \begin{pmatrix} x^{\text{sfx}}_{\tau,i} \\ y^{\text{sfx}}_{\tau,i} \end{pmatrix}^\top \begin{pmatrix} W^{KQ}_{11} \\ (w^{KQ}_{21})^\top \end{pmatrix} x_{\tau,q} \Bigr]^2 }_{\text{by Cauchy-Schwarz Inequality}} \Bigr] \nonumber \\
    &= \frac{2}{ (N+M_{\text{train}})^2 } \cdot \E_\tau \Bigl[ \sum_{i=1}^{M_{\text{train}}} \max_{\|\delta_{\tau,i}\|_{2}\leq\epsilon} [ (w^{V}_{21})^\top \delta_{\tau,i} ]^2 \cdot \sum_{i=1}^{M_{\text{train}}} \Bigl[ \begin{pmatrix} x^{\text{sfx}}_{\tau,i} \\ y^{\text{sfx}}_{\tau,i} \end{pmatrix}^\top \begin{pmatrix} W^{KQ}_{11} \\ (w^{KQ}_{21})^\top \end{pmatrix} x_{\tau,q} \Bigr]^2 \Bigr] \nonumber \\
    &= \frac{2}{ (N+M_{\text{train}})^2 } \cdot \E_\tau \Bigl[ \sum_{i=1}^{M_{\text{train}}} [ \| w^{V}_{21} \|_2 \cdot \epsilon ]^2 \cdot \sum_{i=1}^{M_{\text{train}}} \Bigl[ \begin{pmatrix} x^{\text{sfx}}_{\tau,i} \\ y^{\text{sfx}}_{\tau,i} \end{pmatrix}^\top \begin{pmatrix} W^{KQ}_{11} \\ (w^{KQ}_{21})^\top \end{pmatrix} x_{\tau,q} \Bigr]^2 \Bigr] \nonumber \\
    &= \frac{2 \epsilon^2 M_{\text{train}} }{ (N+M_{\text{train}})^2 } \cdot \|w^{V}_{21}\|_2^2  \cdot \sum_{i=1}^{M_{\text{train}}} \E_\tau \Bigl[ \begin{pmatrix} x^{\text{sfx}}_{\tau,i} \\ y^{\text{sfx}}_{\tau,i} \end{pmatrix}^\top \begin{pmatrix} W^{KQ}_{11} \\ (w^{KQ}_{21})^\top \end{pmatrix} x_{\tau,q} \Bigr]^2.
    \label{eq:icl:proof:at-original:eq3:term3}
\end{align}
As a result, by inserting Eqs.~(\ref{eq:icl:proof:at-original:eq3:term1}),~(\ref{eq:icl:proof:at-original:eq3:term2}), and~(\ref{eq:icl:proof:at-original:eq3:term3}) into Eq.~(\ref{eq:icl:proof:at-original:eq3}), we finally have that
\begin{align}
    \mathcal L^{\text{adv}}(\theta) %\nonumber\\
    &\leq 2 \cdot \E_\tau \Bigl[
        \begin{pmatrix}(w^V_{21})^\top & w^V_{22} \end{pmatrix} \cdot
            \frac{ \begin{pmatrix} X_{\tau} & X^{\text{sfx}}_\tau & x_{\tau,q} \\ Y_{\tau} & Y^{\text{sfx}}_{\tau} & 0 \end{pmatrix} \begin{pmatrix} X_{\tau} & X^{\text{sfx}}_\tau & x_{\tau,q} \\ Y_{\tau} & Y^{\text{sfx}}_{\tau} & 0 \end{pmatrix}^\top }{ N + M_{\text{train}} }
        \cdot \begin{pmatrix} W^{KQ}_{11} \\ (w^{KQ}_{21})^\top \end{pmatrix} \cdot x_{\tau,q} - y_{\tau,q} \Bigr]^2
    \nonumber\\
    &\quad + \frac{2 \epsilon^4 M_{\text{train}}^2 }{(N+M_{\text{train}})^2}\cdot \|w^{V}_{21}\|_2^2 \cdot \E_{\tau} \| W^{KQ}_{11} x_{\tau,q} \|_2^2
    \nonumber\\
    &\quad + \frac{2 \epsilon^2 M_{\text{train}}}{(N+M_{\text{train}})^2} \cdot \E_{\tau} \|W^{KQ}_{11} x_{\tau,q}\|_2^2 \cdot \sum_{i=1}^{M_{\text{train}}}  \E_\tau \Bigl[ \begin{pmatrix} (w^{V}_{21})^\top & w^V_{22} \end{pmatrix}  \begin{pmatrix} x^{\text{sfx}}_{\tau,i} \\ y^{\text{sfx}}_{\tau,i} \end{pmatrix} \Bigr]^2
    \nonumber\\
    &\quad + \frac{2 \epsilon^2 M_{\text{train}} }{ (N+M_{\text{train}})^2 } \cdot \|w^{V}_{21}\|_2^2  \cdot \sum_{i=1}^{M_{\text{train}}} \E_\tau \Bigl[ \begin{pmatrix} x^{\text{sfx}}_{\tau,i} \\ y^{\text{sfx}}_{\tau,i} \end{pmatrix}^\top \begin{pmatrix} W^{KQ}_{11} \\ (w^{KQ}_{21})^\top \end{pmatrix} x_{\tau,q} \Bigr]^2.
    \label{eq:icl:proof:at-original:eq10}
\end{align}
The right-hand-side of Eq.~(\ref{eq:icl:proof:at-original:eq10}) is exactly the surrogate AT loss $\tilde{\mathcal L}^{\text{adv}}(\theta)$ in Eq.~(\ref{eq:icl:surrogate-at-loss}), which thus completes the proof.
\end{proof}

\subsection{Proof of Theorem~\ref{thm:icl:closed-form-at}}
\label{app:proof:closed-form-at}

This section presents the proof of Theorem~\ref{thm:icl:closed-form-at}, which is inspired by that in \citet{zhang2024trained}.
Specifically:
\begin{enumerate}
\item
we first prove that terms $w^V_{21}$ and $w^{KQ}_{21}$ stay zero during the surrogate AT (Lemma~\ref{lem:icl:zero-grad}) via continuous gradient-flow, which thus can simplify the surrogate AT loss $\tilde{\mathcal L}^{\text{adv}}(\theta)$ defined in Eq.~(\ref{eq:icl:surrogate-at-loss}) (Lemma~\ref{lem:icl:simplified-surrogate-at-loss}).

\item
We then calculate a closed-form solution $\theta_*$ for the surrogate AT problem based on the simplified $\tilde{\mathcal L}^{\text{adv}}(\theta)$ (Lemma~\ref{lem:icl:surrogate-at-minimizer}), which is exactly the solution given in Theorem~\ref{thm:icl:closed-form-at}.

\item
Finally, we prove that under the continuous gradient flow, the LSA model starts from the initial point defined in Assumption~\ref{ass:icl:init} can indeed converge to the closed-form solution $\theta_*$ (Lemma~\ref{lem:icl:pl-inequality}), which thus completes the proof of Theorem~\ref{thm:icl:closed-form-at}.
\end{enumerate}

We now start to prove the following Lemma~\ref{lem:icl:zero-grad}.
\begin{lemma}
\label{lem:icl:zero-grad}
Suppose Assumption~\ref{ass:icl:init} holds and the LSA model $f_{\text{\rm LSA},\theta}$ is trained via minimizing surrogate AT loss $\tilde{\mathcal L}^{\text{\rm adv}}(\theta)$ in Eq.~(\ref{eq:icl:surrogate-at-loss}) with continuous gradient flow.
Then, for any continuous training time $t \geq 0$, we uniformly have that $w^V_{21}(t) = w^{KQ}_{21}(t) = 0_{d\times 1}$.
\end{lemma}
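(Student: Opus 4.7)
The plan is to show that the affine set $\mathcal{S} := \{\theta : w^V_{21} = 0,\ w^{KQ}_{21} = 0\}$ is invariant under the gradient flow $\dot\theta = -\nabla_\theta \tilde{\mathcal L}^{\text{adv}}(\theta)$. Since Assumption~\ref{ass:icl:init} places $\theta(0) \in \mathcal{S}$, and since $\tilde{\mathcal L}^{\text{adv}}$ is a polynomial in the entries of $\theta$ (hence globally smooth and locally Lipschitz on bounded sets), standard ODE uniqueness then forces $w^V_{21}(t) = w^{KQ}_{21}(t) = 0_{d\times 1}$ for every $t\geq 0$. The real work is therefore to verify that both $\nabla_{w^V_{21}} \tilde{\mathcal L}^{\text{adv}}$ and $\nabla_{w^{KQ}_{21}} \tilde{\mathcal L}^{\text{adv}}$ vanish identically on $\mathcal{S}$, summand by summand $\ell_1, \ldots, \ell_4$.

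I would first dispatch the easy pieces. The summands $\ell_2$ and $\ell_4$ are proportional to $\|w^V_{21}\|_2^2$, so their $w^V_{21}$-gradients are proportional to $w^V_{21}$ and vanish on $\mathcal{S}$; of $\ell_1, \ldots, \ell_4$ only $\ell_1$ depends on $w^{KQ}_{21}$ at all, so $\nabla_{w^{KQ}_{21}} \ell_i \equiv 0$ for $i \in \{2,3,4\}$. The substantive items are therefore $\nabla_{w^V_{21}} \ell_1$, $\nabla_{w^V_{21}} \ell_3$, and $\nabla_{w^{KQ}_{21}} \ell_1$, each evaluated on $\mathcal{S}$.

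The key mechanism is a parity argument in $w_\tau$. Because $w_\tau \sim \mathcal N(0, I_d)$ is symmetric and independent of the $x$'s, whereas every label $y_{\tau,i}$, $y^{\text{sfx}}_{\tau,i}$, $y_{\tau,q}$ is linear in $w_\tau$, the substitution $w_\tau \mapsto -w_\tau$ preserves the joint distribution while flipping the sign of every label and fixing every $x$. Restricted to $\mathcal{S}$, the prediction $\hat y_{q,\theta}$ collapses to $w^V_{22}\, a_\tau^\top W^{KQ}_{11} x_{\tau,q}$ where $a_\tau := \tfrac{1}{N+M_{\text{train}}}(X_\tau Y_\tau^\top + X^{\text{sfx}}_\tau Y^{\text{sfx},\top}_\tau)$ is linear in the labels (odd in $w_\tau$), while $\partial_{w^V_{21}} \hat y_{q,\theta}$ and $\partial_{w^{KQ}_{21}} \hat y_{q,\theta}$ reduce to expressions in only the $x$'s together with $w^V_{22}$ and $W^{KQ}_{11}$ (even in $w_\tau$). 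The integrand of each partial gradient of $\ell_1$ thus takes the form $(\hat y_{q,\theta} - y_{\tau,q}) \times (\text{even in } w_\tau)$, which is odd in $w_\tau$, so its expectation vanishes. For $\nabla_{w^V_{21}} \ell_3$, the gradient on $\mathcal{S}$ reduces to a constant multiple of $\E_\tau[\|W^{KQ}_{11} x_{\tau,q}\|_2^2\, X^{\text{sfx}}_\tau Y^{\text{sfx},\top}_\tau]$, and conditioning on the $x$'s gives $\E[y^{\text{sfx}}_{\tau,i}\mid \{x\}] = 0$, so this expectation also vanishes.

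The main obstacle I expect is organizational rather than analytic: one has to unpack the block structure of the LSA prediction carefully enough to expose the odd/even parity of each factor, especially when differentiating $\ell_1$ with respect to $w^{KQ}_{21}$, since that derivative introduces the extra factor $c_{d+1} x_{\tau,q}$ where $c_{d+1}$ becomes (on $\mathcal{S}$) proportional to the quadratic-in-labels quantity $\|Y_\tau\|_2^2 + \|Y^{\text{sfx}}_\tau\|_2^2$. Once the bookkeeping is in place, the parity argument closes each vanishing claim uniformly, and the ODE-uniqueness step is immediate. The same reduction to the invariant set $\mathcal{S}$ will then enable a much shorter explicit form of $\tilde{\mathcal L}^{\text{adv}}$ to use in the subsequent closed-form minimizer computation.
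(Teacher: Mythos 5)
Your proposal follows the paper's own structure---showing that the set $\mathcal{S} := \{\theta : w^V_{21} = w^{KQ}_{21} = 0\}$ is invariant under the gradient flow of $\tilde{\mathcal L}^{\text{adv}}$ by verifying, summand by summand, that $\partial_{w^V_{21}}\ell_i$ and $\partial_{w^{KQ}_{21}}\ell_i$ vanish on $\mathcal{S}$, then appealing to uniqueness of the ODE solution---but you handle $\ell_1$ differently. The paper observes that $\ell_1$ is exactly the clean ICL training loss with prompt length $N$ replaced by $N+M_{\text{train}}$ and cites the corresponding step in \citet{zhang2024trained}; you instead give a direct parity argument in $w_\tau$: on $\mathcal{S}$ the residual $\hat y_q - y_{\tau,q}$ is linear in $w_\tau$ (odd), $\partial_{w^V_{21}}\hat y_q$ depends only on the $x$'s (even), and $\partial_{w^{KQ}_{21}}\hat y_q$ carries the factor $\sum_i y_{\tau,i}^2 + \sum_i (y^{\text{sfx}}_{\tau,i})^2$ which is quadratic in $w_\tau$ (even), so each integrand is odd and has zero mean under $w_\tau \sim \mathcal N(0,I_d)$. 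This is a valid and self-contained alternative, arguably preferable because it avoids deferring to an external result, and the same parity observation would also dispatch your $\ell_3$ case, since $\partial_{w^V_{21}}\ell_3\big|_{\mathcal{S}}$ is linear in $Y^{\text{sfx}}_\tau$ and hence odd in $w_\tau$.

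There is one misstatement to correct. You claim that of $\ell_1,\ldots,\ell_4$ only $\ell_1$ depends on $w^{KQ}_{21}$, but $\ell_4$ also depends on $w^{KQ}_{21}$, through the stacked matrix whose top block is $W^{KQ}_{11}$ and whose bottom row is $(w^{KQ}_{21})^\top$, acting on $x_{\tau,q}$. The conclusion nonetheless holds because $\ell_4$ is proportional to $\|w^V_{21}\|_2^2$, so $\partial_{w^{KQ}_{21}}\ell_4$ inherits that scalar prefactor and vanishes on $\mathcal{S}$---but that is the same zero-prefactor mechanism you already invoked for $\partial_{w^V_{21}}\ell_2$ and $\partial_{w^V_{21}}\ell_4$, not the (false) premise that $\ell_4$ is $w^{KQ}_{21}$-free. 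With that patch, and the ODE-uniqueness step you already state, the argument is complete and matches the paper's conclusion.
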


\begin{proof}
When the LSA model $f_{\text{LSA},\theta}$ is trained with continuous gradient-flow, the updates of $w^{V}_{21}$ and $w^{KQ}_{21}$ with respect to the continuous training time $t \geq 0$ are given by
\begin{align*}
    &\partial_t w^V_{21}(t) := -\partial_{w^{V}_{21}} \tilde{\mathcal L}^{\text{adv}}(\theta), \\
    &\partial_t w^{KQ}_{21}(t) := -\partial_{w^{KQ}_{21}} \tilde{\mathcal L}^{\text{adv}}(\theta).
\end{align*}
Meanwhile, since Assumption~\ref{ass:icl:init} assumes that $w^{V}_{21}(0) = W^{KQ}_{21}(0) = 0_{d\times 1}$, therefore, to complete the proof, we only need to show that
$\partial_t w^V_{21}(t) = \partial_t W^{KQ}_{21}(t) = 0_{1\times d}$
as long as
$w^{V}_{21}(t) = W^{KQ}_{21}(t) = 0_{d\times 1}$
for any $t\geq 0$.
In other words, below we need to show that $w^{V}_{21} = W^{KQ}_{21} = 0_{d\times 1}$ indicates $\partial_{w^{V}_{21}} \tilde{\mathcal L}^{\text{adv}}(\theta) = \partial_{w^{KQ}_{21}} \tilde{\mathcal L}^{\text{adv}}(\theta) = 0_{1\times d}$.

Toward this end, we adopt the notation in Eq.~(\ref{eq:icl:surrogate-at-loss}) to decompose the surrogate AT loss $\tilde{\mathcal L}(\theta)$ as follows,
\begin{align*}
    \tilde{\mathcal L}^{\text{adv}}(\theta) := [ \ell_1(\theta) + \ell_2(\theta) + \ell_3(\theta) + \ell_4(\theta) ],
\end{align*}
where
\begin{align}
    & \ell_1(\theta) = 2 \E_\tau \Bigl[
        ((w^V_{21})^\top \ \ w^V_{22} )  \frac{ \begin{pmatrix} X_{\tau} & X^{\text{sfx}}_\tau & x_{\tau,q} \\ Y_{\tau} & Y^{\text{sfx}}_{\tau} & 0 \end{pmatrix} \begin{pmatrix} X_{\tau} & X^{\text{sfx}}_\tau & x_{\tau,q} \\ Y_{\tau} & Y^{\text{sfx}}_{\tau} & 0 \end{pmatrix}^\top }{ N + M_{\text{train}} }  \begin{pmatrix} W^{KQ}_{11} \\ (w^{KQ}_{21})^\top \end{pmatrix} x_{\tau,q} - y_{\tau,q} \Bigr]^2,
    \label{eq:icl:proof:zero-grad:subloss1}\\
    &\ell_2(\theta) = \frac{2 \epsilon^4 M_{\text{train}}^2 }{(N+M_{\text{train}})^2} \|w^{V}_{21}\|_2^2 \E_{\tau} \Bigl[ \| W^{KQ}_{11} x_{\tau,q} \|_2^2 \Bigr],
    \label{eq:icl:proof:zero-grad:subloss2}\\
    &\ell_3(\theta) = \frac{2 \epsilon^2 M_{\text{train}}}{(N+M_{\text{train}})^2} \E_\tau \Bigl[ \|W^{KQ}_{11} x_{\tau,q}\|_2^2 \cdot \| ((w^V_{21})^\top \ \ w^V_{22} )  \begin{pmatrix} X^{\text{sfx}}_{\tau} \\ Y^{\text{sfx}}_{\tau} \end{pmatrix} \|_2^2 \Bigr],
    \label{eq:icl:proof:zero-grad:subloss3}\\
    &\ell_4(\theta) = \frac{2 \epsilon^2 M_{\text{train}} }{ (N+M_{\text{train}})^2 }  \|w^{V}_{21}\|_2^2  \cdot \E_\tau \Bigl[ \| \begin{pmatrix} X^{\text{sfx}}_{\tau} \\ Y^{\text{sfx}}_{\tau} \end{pmatrix}^\top \begin{pmatrix} W^{KQ}_{11} \\ (w^{KQ}_{21})^\top \end{pmatrix} x_{\tau,q} \|_2^2 \Bigr].
    \label{eq:icl:proof:zero-grad:subloss4}
\end{align}
In the remaining of this proof, we will show that when
$w^{V}_{21} = w^{KQ}_{21} = 0_{d\times 1}$ holds,
one has:
(1) $\partial_{W^{V}_{21}} \ell_1(\theta) = \partial_{W^{KQ}_{21}} \ell_1(\theta) = 0_{1\times d}$,
(2) $\partial_{W^{V}_{21}} \ell_2(\theta) = \partial_{W^{KQ}_{21}} \ell_2(\theta) = 0_{1\times d}$,
(3) $\partial_{W^{V}_{21}} \ell_3(\theta) = \partial_{W^{KQ}_{21}} \ell_3(\theta) = 0_{1\times d}$,
and (4) $\partial_{W^{V}_{21}} \ell_4(\theta) = \partial_{W^{KQ}_{21}} \ell_4(\theta) = 0_{1\times d}$,
which thus automatically indicates that
$\partial_{W^{V}_{21}} \tilde{\mathcal L}^{\text{adv}}(\theta) = \partial_{W^{KQ}_{21}} \tilde{\mathcal L}^{\text{adv}}(\theta) = 0_{1\times d}$.

\textbf{Step 1: Show that $w^{V}_{21} = w^{KQ}_{21} = 0_{d\times 1}$ indicates $\partial_{W^{V}_{21}} \ell_1(\theta) = \partial_{W^{KQ}_{21}} \ell_1(\theta) = 0_{1\times d}$.}
Such a claim can be directly obtained from the proofs in \citet{zhang2024trained}.
Specifically, when setting the (original) ICL prompt length from $N$ to $(N + M_{\text{train}})$, the ICL training loss $L$ in \citet{zhang2024trained} is equivalent to our $\ell_1(\theta)$ defined in Eq.~(\ref{eq:icl:proof:zero-grad:subloss1}).
Therefore, one can then follow the same procedures as those in the proof of Lemma~5.2 in \citet{zhang2024trained} to show that the continuous gradient flows of $W^{V}_{21}$ and $W^{KQ}_{21}$ are zero when Assumption~\ref{ass:icl:init} holds.
Please refer accordingly for details.

\textbf{Step 2: Show that $w^{V}_{21} = w^{KQ}_{21} = 0_{d\times 1}$ indicates $\partial_{w^V_{21}} \ell_{2}(\theta) = \partial_{w^{KQ}_{21}} \ell_2(\theta) = 0_{1\times d}$.}
Since the term $w^{KQ}_{21}$ does not exist in the expression of $\ell_2(\theta)$ in Eq.~(\ref{eq:icl:proof:zero-grad:subloss2}), we directly have that $\partial_{w^{KQ}_{21}} \ell_2(\theta) = 0_{1\times d}$.
Besides, for the derivative $\partial_{w^{V}_{21}} \ell_2(\theta)$, based on Eq.~(\ref{eq:icl:proof:zero-grad:subloss2}) we further have that
\begin{align*}
    &\left. \partial_{w^{V}_{21}} \ell_2(\theta) \right|_{w^{V}_{21} = 0_{d\times 1}}
    = \partial_{w^{V}_{21}} \left. \Bigl[ \frac{2 \epsilon^4 M_{\text{train}}^2 }{(N+M_{\text{train}})^2} \cdot \|w^V_{21}\|_2^2 \cdot \E_{\tau} \| W^{KQ}_{11} x_{\tau,q} \|_2^2 \Bigr] \right|_{w^{V}_{21} = 0_{d\times 1}}
    \nonumber\\
    &= \left. \Bigl[ \frac{4 \epsilon^4 M_{\text{train}}^2 }{(N+M_{\text{train}})^2} \cdot \E_{\tau} \| W^{KQ}_{11} x_{\tau,q} \|_2^2 \cdot (w^{V}_{21})^\top \Bigr] \right|_{w^{V}_{21} = 0_{d\times 1}}
    \nonumber\\
    &= \frac{4 \epsilon^4 M_{\text{train}}^2 }{(N+M_{\text{train}})^2} \cdot \E_{\tau} \| W^{KQ}_{11} x_{\tau,q} \|_2^2 \cdot 0_{d\times 1}^\top
    = 0_{1\times d},
\end{align*}
which justifies our claim in Step~2.

\textbf{Step 3: Show that $w^{V}_{21} = w^{KQ}_{21} = 0_{d\times 1}$ indicates $\partial_{w^V_{21}} \ell_{3}(\theta) = \partial_{w^{KQ}_{21}} \ell_3(\theta) = 0_{1\times d}$.}
We first rewrite $\ell_3(\theta)$ that defined in Eq.~(\ref{eq:icl:proof:zero-grad:subloss3}) as follows,
\begin{align}
    &\ell_3(\theta) = \frac{2 \epsilon^2 M_{\text{train}}}{(N+M_{\text{train}})^2} \E_\tau \Bigl[ \|W^{KQ}_{11} x_{\tau,q}\|_2^2 \cdot \| ((w^V_{21})^\top \ \ w^V_{22} )  \begin{pmatrix} X^{\text{sfx}}_{\tau} \\ Y^{\text{sfx}}_{\tau} \end{pmatrix} \|_2^2 \Bigr]
    \nonumber\\
    &= \frac{2 \epsilon^2 M_{\text{train}}}{(N+M_{\text{train}})^2} \cdot \E_\tau \Bigl[ \| W^{KQ}_{11} x_{\tau,q} \|_2^2 \Bigr] \cdot \sum_{i=1}^{M_{\text{train}}}   \E_\tau \Bigl[  \begin{pmatrix} (w^V_{21})^\top & w^V_{22} \end{pmatrix}  \cdot  \begin{pmatrix} x^{\text{sfx}}_{\tau,i} \\ y^{\text{sfx}}_{\tau,i} \end{pmatrix} \begin{pmatrix} x^{\text{sfx}}_{\tau,i} \\ y^{\text{sfx}}_{\tau,i} \end{pmatrix}^\top \cdot \begin{pmatrix} (w^V_{21})^\top & w^V_{22} \end{pmatrix}^\top \Bigr] \nonumber \\
    &= \frac{2 \epsilon^2 M_{\text{train}}}{(N+M_{\text{train}})^2} \cdot \E_\tau \Bigl[ \| W^{KQ}_{11} x_{\tau,q} \|_2^2 \Bigr] \cdot \begin{pmatrix} (w^V_{21})^\top & w^V_{22} \end{pmatrix} \cdot \left( \sum_{i=1}^{M_{\text{train}}} \E_\tau \Bigl[ \begin{pmatrix} x^{\text{sfx}}_{\tau,i} \\ y^{\text{sfx}}_{\tau,i} \end{pmatrix} \begin{pmatrix} x^{\text{sfx}}_{\tau,i} \\ y^{\text{sfx}}_{\tau,i} \end{pmatrix}^\top \Bigr] \right) \cdot \begin{pmatrix} (w^V_{21})^\top & w^V_{22} \end{pmatrix}^\top.
    \label{eq:icl:proof:zero-grad:subloss3:simplified-1}
\end{align}
Then, for any $i \in [M]$ we have
\begin{align}
    &\E_\tau \Bigl[ \begin{pmatrix} x^{\text{sfx}}_{\tau,i} \\ y^{\text{sfx}}_{\tau,i} \end{pmatrix} \begin{pmatrix} x^{\text{sfx}}_{\tau,i} \\ y^{\text{sfx}}_{\tau,i} \end{pmatrix}^\top \Bigr]
    = \E_{w_\tau,x^{\text{sfx}}_{\tau,i}} \begin{pmatrix} x^{\text{sfx}}_{\tau,i} \cdot (x^{\text{sfx}}_{\tau,i})^\top & x^{\text{sfx}}_{\tau,i} \cdot (w_\tau^\top x^{\text{sfx}}_{\tau,i})^\top \\ w_\tau^\top x^{\text{sfx}}_{\tau,i} \cdot (x^{\text{sfx}}_{\tau,i})^\top & w_\tau^\top x^{\text{sfx}}_{\tau,i} \cdot (w_\tau^\top x^{\text{sfx}}_{\tau,i})^\top \end{pmatrix}
    \nonumber\\
    &= \begin{pmatrix} \Lambda & \Lambda \cdot 0_{d\times 1} \\ 0_{1\times d} \cdot \Lambda & \E_{w_\tau} \Bigl[ w_\tau^\top \Lambda w_\tau \Bigr] \end{pmatrix}
    = \begin{pmatrix} \Lambda & 0_{d\times 1} \\ 0_{1\times d} & \underbrace{ \mathrm{Tr}(I_d \Lambda)  }_{\text{by Lemma~\ref{lem:icl:tech-x-quad}}} \end{pmatrix}
    = \begin{pmatrix} \Lambda & 0_{d\times 1} \\ 0_{1\times d} & \mathrm{Tr}(\Lambda) \end{pmatrix}.
    \label{eq:icl:proof:zero-grad:subloss3:technical-xy-quad}
\end{align}
Finally, by inserting Eq.~(\ref{eq:icl:proof:zero-grad:subloss3:technical-xy-quad}) into Eq.~(\ref{eq:icl:proof:zero-grad:subloss3:simplified-1}), $\ell_3(\theta)$ can thus be simplified as follows,
\begin{align}
    &\ell_3(\theta) = \frac{2 \epsilon^2 M_{\text{train}}}{(N+M_{\text{train}})^2} \cdot \E_\tau \Bigl[ \| W^{KQ}_{11} x_{\tau,q} \|_2^2 \Bigr] \cdot  \begin{pmatrix} (w^V_{21})^\top & w^V_{22} \end{pmatrix} \cdot \left( \sum_{i=1}^{M_{\text{train}}} \begin{pmatrix} \Lambda & 0_{d\times 1} \\ 0_{1\times d} & \mathrm{Tr}(\Lambda) \end{pmatrix} \right) \cdot \begin{pmatrix} (w^V_{21})^\top & w^V_{22} \end{pmatrix}^\top \nonumber\\
    &= \frac{2 \epsilon^2 M_{\text{train}}^2 }{(N+M_{\text{train}})^2} \cdot \E_\tau \Bigl[ \| W^{KQ}_{11} x_{\tau,q} \|_2^2 \Bigr] \cdot  \Bigl( (w^V_{21})^\top \Lambda w^V_{21} +  \mathrm{Tr}(\Lambda) (w^V_{22})^2 \Bigr).
    \label{eq:icl:proof:zero-grad:subloss3:simplified-2}
\end{align}
According to Eq.~(\ref{eq:icl:proof:zero-grad:subloss3:simplified-2}), $\ell_3(\theta)$ does not depend on $w^{KQ}_{21}$, which means that $\partial_{w^{KQ}_{21}} \ell_3(\theta) = 0_{1\times d}$.
On the other hand, based on Eq.~(\ref{eq:icl:proof:zero-grad:subloss3:simplified-2}), when $w^{V}_{21} = 0$, the derivative of $\ell_3(\theta)$ with respect to $w^{V}_{21}$ is calculated as follows,
\begin{align*}
    &\left. \partial_{w^{V}_{21}} \ell_3(\theta) \right|_{w^{V}_{21} = 0}
    = \left. \partial_{w^{V}_{21}} \Bigl[  \frac{2 \epsilon^2 M_{\text{train}}^2 }{(N+M_{\text{train}})^2} \cdot \E_\tau \Bigl[ \| W^{KQ}_{11} x_{\tau,q} \|_2^2 \Bigr] \cdot \Bigl( (w^V_{21})^\top \Lambda w^V_{21} +  \mathrm{Tr}(\Lambda) (w^V_{22})^2 \Bigr) \Bigr] \right|_{w^{V}_{21} = 0}
    \nonumber\\
    &= \left. \frac{2 \epsilon^2 M_{\text{train}}^2 }{(N+M_{\text{train}})^2} \cdot \E_\tau \Bigl[ \| W^{KQ}_{11} x_{\tau,q} \|_2^2 \Bigr] \cdot \partial_{w^{V}_{21}} \Bigl[ (w^V_{21})^\top \Lambda w^V_{21} \Bigr] \right|_{w^{V}_{21} = 0}
    \nonumber\\
    &= \left. \frac{4 \epsilon^2 M_{\text{train}}^2 }{(N+M_{\text{train}})^2} \cdot \E_\tau \Bigl[ \| W^{KQ}_{11} x_{\tau,q} \|_2^2 \Bigr] \cdot \Bigr[ ( w^V_{21})^\top \Lambda \Bigr] \right|_{w^{V}_{21} = 0}
    \nonumber\\
    &= \frac{4 \epsilon^2 M_{\text{train}}^2 }{(N+M_{\text{train}})^2} \cdot \E_\tau \Bigl[ \| W^{KQ}_{11} x_{\tau,q} \|_2^2 \Bigr] \cdot 0_{d\times 1}^\top \Lambda
    = 0_{1\times d},
\end{align*}
which justifies our claim in Step~3.

\textbf{Step 4: Show that $w^{V}_{21} = w^{KQ}_{21} = 0_{d\times 1}$ indicates $\partial_{w^V_{21}} \ell_{4}(\theta) = \partial_{w^{KQ}_{21}} \ell_4(\theta) = 0_{1\times d}$.}
When $w^V_{21} = w^{KQ}_{21} = 0_{d\times 1}$, based on the expression of $\ell_4(\theta)$ given in  Eq.~(\ref{eq:icl:proof:zero-grad:subloss4}), the derivative of $\ell_4(\theta)$ with respect to $w^V_{21}$ is calculated as follows,
\begin{align*}
    &\left. \partial_{w^{V}_{21}} \ell_4(\theta) \right|_{w^V_{21} = w^{KQ}_{21} = 0_{d\times 1}}
    = \left. \partial_{w^{V}_{21}} \Bigl[ \frac{2 \epsilon^2 M_{\text{train}} }{ (N+M_{\text{train}})^2 }  \|w^{V}_{21}\|_2^2  \cdot \E_\tau \| \begin{pmatrix} X^{\text{sfx}}_{\tau} \\ Y^{\text{sfx}}_{\tau} \end{pmatrix}^\top \begin{pmatrix} W^{KQ}_{11} \\ (w^{KQ}_{21})^\top \end{pmatrix} x_{\tau,q} \|_2^2 \Bigr] \right|_{w^V_{21} = w^{KQ}_{21} = 0_{d\times 1}}
    \nonumber\\
    &= \left. \Bigl[ \frac{4 \epsilon^2 M_{\text{train}} }{ (N+M_{\text{train}})^2 }  \cdot \E_\tau \| \begin{pmatrix} X^{\text{sfx}}_{\tau} \\ Y^{\text{sfx}}_{\tau} \end{pmatrix}^\top \begin{pmatrix} W^{KQ}_{11} \\ (w^{KQ}_{21})^\top \end{pmatrix} x_{\tau,q} \|_2^2 \cdot (w^V_{21})^\top \Bigr] \right|_{w^V_{21} = w^{KQ}_{21} = 0_{d\times 1}}
    \nonumber\\
    &= \frac{4 \epsilon^2 M_{\text{train}} }{ (N+M_{\text{train}})^2 }  \cdot \E_\tau \| \begin{pmatrix} X^{\text{sfx}}_{\tau} \\ Y^{\text{sfx}}_{\tau} \end{pmatrix}^\top \begin{pmatrix} W^{KQ}_{11} \\ 0_{d\times 1}^\top \end{pmatrix} x_{\tau,q} \|_2^2 \cdot 0_{d\times 1}^\top
    = 0_{1 \times d}.
\end{align*}
Besides, for the derivative of $\ell_4(\theta)$ with respect to $w^{KQ}_{21}$, we also have that
\begin{align*}
    &\left. \partial_{w^{KQ}_{21}} \ell_4(\theta) \right|_{w^V_{21} = w^{KQ}_{21} = 0_{d\times 1}}
    = \left. \partial_{w^{KQ}_{21}} \Bigl[ \frac{2 \epsilon^2 M_{\text{train}} }{ (N+M_{\text{train}})^2 }  \|w^{V}_{21}\|_2^2  \cdot \E_\tau \| \begin{pmatrix} X^{\text{sfx}}_{\tau} \\ Y^{\text{sfx}}_{\tau} \end{pmatrix}^\top \begin{pmatrix} W^{KQ}_{11} \\ (w^{KQ}_{21})^\top \end{pmatrix} x_{\tau,q} \|_2^2 \Bigr] \right|_{w^V_{21} = w^{KQ}_{21} = 0_{d\times 1}}
    \nonumber\\
    &= \left. \Bigl[ \frac{2 \epsilon^2 M_{\text{train}} }{ (N+M_{\text{train}})^2 } \cdot \|w^{V}_{21}\|_2^2  \cdot \partial_{w^{KQ}_{21}} \E_\tau \| \begin{pmatrix} X^{\text{sfx}}_{\tau} \\ Y^{\text{sfx}}_{\tau} \end{pmatrix}^\top \begin{pmatrix} W^{KQ}_{11} \\ (w^{KQ}_{21})^\top \end{pmatrix} x_{\tau,q} \|_2^2 \Bigr] \right|_{w^V_{21} = w^{KQ}_{21} = 0_{d\times 1}}
    \nonumber\\
    &= \frac{2 \epsilon^2 M_{\text{train}} }{ (N+M_{\text{train}})^2 } \cdot \|0_{d\times 1}\|_2^2 \cdot \left. \partial_{w^{KQ}_{21}} \Bigl[ \E_\tau \| \begin{pmatrix} X^{\text{sfx}}_{\tau} \\ Y^{\text{sfx}}_{\tau} \end{pmatrix}^\top \begin{pmatrix} W^{KQ}_{11} \\ (w^{KQ}_{21})^\top \end{pmatrix} x_{\tau,q} \|_2^2 \Bigr] \right|_{w^{KQ}_{21} = 0_{d\times 1}}
    = 0_{1\times d}.
\end{align*}
The above two equations justify the claim in Step~4.

\textbf{Step~5:}
Based on results from previous Steps~1 to 4, we eventually have that
\begin{align*}
    &\left. \partial_{w^V_{21}} \tilde{\mathcal L}^{\text{adv}}(\theta) \right|_{w^V_{21}=w^{KQ}_{21}=0_{d\times 1}}
    = \left. \partial_{w^V_{21}} [ \ell_1(\theta) + \ell_2(\theta) + \ell_3(\theta) + \ell_4(\theta) ] \right|_{w^V_{21}=w^{KQ}_{21}=0_{d\times 1}}
    = \sum_{i=1}^4 0_{1\times d}
    = 0_{1\times d},
    \nonumber\\
    &\left. \partial_{w^{KQ}_{21}} \tilde{\mathcal L}^{\text{adv}}(\theta) \right|_{w^V_{21}=w^{KQ}_{21}=0_{d\times 1}}
    = \left. \partial_{w^{KQ}_{21}} [ \ell_1(\theta) + \ell_2(\theta) + \ell_3(\theta) + \ell_4(\theta) ] \right|_{w^V_{21}=w^{KQ}_{21}=0_{d\times 1}}
    = \sum_{i=1}^4 0_{1\times d}
    = 0_{1\times d}.
\end{align*}
The proof is completed.
\end{proof}

With Lemma~\ref{lem:icl:zero-grad}, we can then simplify the surrogate AT loss $\tilde{\mathcal L}^{\text{\rm adv}}(\theta)$, as shown in the following Lemma~\ref{lem:icl:simplified-surrogate-at-loss}.

\begin{lemma}
\label{lem:icl:simplified-surrogate-at-loss}
Under Assumption~\ref{ass:icl:init}, the surrogate AT loss $\tilde{\mathcal L}^{\text{\rm adv}}(\theta)$ defined in Eq.~(\ref{eq:icl:surrogate-at-loss}) can be simplified as follows,
\begin{align*}
    \tilde{\mathcal L}^{\text{\rm adv}}(\theta)
    &= 2 \mathrm{Tr}\Bigl[ ( \Gamma(M_{\text{\rm train}}) \Lambda + \epsilon^2 \psi(M_{\text{\rm train}}) I_d ) \cdot ( w^V_{22} W^{KQ}_{11} \Lambda^{\frac{1}{2}}) \cdot ( w^V_{22} W^{KQ}_{11} \Lambda^{\frac{1}{2}})^\top \Bigr]
        \nonumber\\
        &\quad
        - 4 \mathrm{Tr}\Bigl[ (w^V_{22} W^{KQ}_{11} \Lambda^{\frac{1}{2}}) \cdot \Lambda^{\frac{3}{2}} \Bigr]
        + 2 \mathrm{Tr}(\Lambda),
\end{align*}
where
$\Gamma(M) := \frac{ N + M + 1 }{ N + M } \Lambda + \frac{ \mathrm{Tr}(\Lambda) }{ N + M } I_d$
and
$\psi(M) := \frac{ M^2 \mathrm{Tr}(\Lambda) }{ (N + M)^2 }$
are same functions as that defined in Eq.~(\ref{eq:icl:func-gamma-psi}).
\end{lemma}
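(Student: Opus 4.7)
The key enabler is Lemma~\ref{lem:icl:zero-grad}, which forces $w^V_{21} \equiv 0_{d\times 1}$ and $w^{KQ}_{21} \equiv 0_{d\times 1}$ throughout the gradient-flow trajectory under Assumption~\ref{ass:icl:init}. My plan is to substitute these zeros into each of the four sub-losses $\ell_i(\theta)$ in Eq.~(\ref{eq:icl:surrogate-at-loss}), evaluate each in closed form as a function of $w^V_{22}$ and $W^{KQ}_{11}$ alone, and sum the results.

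Two sub-losses vanish immediately: $\ell_2(\theta)$ and $\ell_4(\theta)$ each carry the factor $\|w^V_{21}\|_2^2 = 0$. For $\ell_3(\theta)$, setting $w^V_{21}=0$ collapses the inner norm to $(w^V_{22})^2 \|Y^{\text{sfx}}_\tau\|_2^2$. Since $x_{\tau,q}$ is independent of $(w_\tau, X^{\text{sfx}}_\tau)$, the expectation factorizes: Lemma~\ref{lem:icl:tech-x-quad} gives $\E \|W^{KQ}_{11} x_{\tau,q}\|_2^2 = \mathrm{Tr}(W^{KQ}_{11}\Lambda (W^{KQ}_{11})^\top)$, and using $y^{\text{sfx}}_{\tau,i} = w_\tau^\top x^{\text{sfx}}_{\tau,i}$ together with $w_\tau \sim \mathcal{N}(0,I_d)$ yields $\E \|Y^{\text{sfx}}_\tau\|_2^2 = M_{\text{train}} \mathrm{Tr}(\Lambda)$. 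Multiplying by the prefactor of $\ell_3$ recovers exactly $2\epsilon^2 \psi(M_{\text{train}}) \mathrm{Tr}[I_d \cdot (w^V_{22} W^{KQ}_{11} \Lambda^{1/2})(w^V_{22} W^{KQ}_{11} \Lambda^{1/2})^\top]$, which is the $\epsilon^2\psi$ contribution to the claimed identity.

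The bulk of the work is $\ell_1(\theta)$. Writing $A := w^V_{22} W^{KQ}_{11}$ and $\Sigma := X_\tau X_\tau^\top + X^{\text{sfx}}_\tau (X^{\text{sfx}}_\tau)^\top$, the zero substitutions together with $Y_\tau = w_\tau^\top X_\tau$, $Y^{\text{sfx}}_\tau = w_\tau^\top X^{\text{sfx}}_\tau$, and the zero pad in the query column of $E^{\text{clean}}$ reduce the bracket to $w_\tau^\top [\Sigma A / (N+M_{\text{train}}) - I_d] x_{\tau,q}$. Expanding the square and integrating out $w_\tau$ and $x_{\tau,q}$ via Lemma~\ref{lem:icl:tech-x-quad} yields $\mathrm{Tr}(B \Lambda B^\top)$ with $B := \Sigma A / (N+M_{\text{train}}) - I_d$. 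The constant piece contributes $\mathrm{Tr}(\Lambda)$; the cross piece uses $\E[\Sigma] = (N+M_{\text{train}})\Lambda$ to give $-2\mathrm{Tr}(A\Lambda^2) = -2\mathrm{Tr}((A\Lambda^{1/2})\Lambda^{3/2})$; the quadratic piece requires $\E[\Sigma A\Lambda A^\top \Sigma]$, which I decompose into two quartic self-terms (evaluated via Lemma~\ref{lem:icl:tech-x-pow4}) and two cross-terms (evaluated by independence of $X_\tau$ and $X^{\text{sfx}}_\tau$). The algebraic identity $N(N+1) + 2N M_{\text{train}} + M_{\text{train}}(M_{\text{train}}+1) = (N+M_{\text{train}})(N+M_{\text{train}}+1)$ collapses the $(N+M_{\text{train}})^2$ denominator so that the coefficient on $\Lambda \cdot A\Lambda A^\top \cdot \Lambda$ becomes exactly $(N+M_{\text{train}}+1)/(N+M_{\text{train}})$, i.e.\ the $\Lambda$-summand of $\Gamma(M_{\text{train}})$; the residual term involving $\mathrm{Tr}(A\Lambda A^\top \Lambda) \cdot \Lambda$ supplies the $\mathrm{Tr}(\Lambda)/(N+M_{\text{train}}) \cdot I_d$ summand. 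Multiplying by the overall factor $2$ of $\ell_1$ and summing with $\ell_3$ merges $\Gamma(M_{\text{train}})\Lambda$ and $\epsilon^2 \psi(M_{\text{train}}) I_d$ under a single trace, yielding the stated closed form.

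The main obstacle will be this last bookkeeping step in $\ell_1$: the two cross-covariance contributions between $X_\tau$ and $X^{\text{sfx}}_\tau$ and the two quartic self-moments must be combined carefully so that all $(N+M_{\text{train}})$-dependent prefactors assemble into the single function $\Gamma(M_{\text{train}})$ rather than into two uncombinable pieces. Beyond this combinatorial identity, the calculation is a routine application of the technical lemmas already collected in the appendix.
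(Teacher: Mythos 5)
Your proposal is correct and follows essentially the same route as the paper's proof: substitute the zeros forced by Lemma~\ref{lem:icl:zero-grad} (killing $\ell_2,\ell_4$), factorize $\ell_3$ by independence to get the $\epsilon^2\psi$ term, and reduce $\ell_1$ to moments of $\Sigma := X_\tau X_\tau^\top + X^{\text{sfx}}_\tau(X^{\text{sfx}}_\tau)^\top$ combined via the identity $N(N+1)+2NM+M(M+1)=(N+M)(N+M+1)$. The only cosmetic difference is that you compute $\E[\Sigma\, A\Lambda A^\top\, \Sigma]$ directly with $C=A\Lambda A^\top$ in Lemma~\ref{lem:icl:tech-x-pow4}, whereas the paper computes $\E[\Sigma^2]$ with $C=I_d$ and then sandwiches $A^\top(\cdot)A$; by cyclicity of the trace these are identical bookkeeping.
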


\begin{proof}
When Assumption~\ref{ass:icl:init} holds, by applying Lemma~\ref{lem:icl:zero-grad}, one can substitute terms $w^V_{21}$ and $w^{KQ}_{21}$ in the surrogate AT loss $\tilde{\mathcal L}^{\text{adv}}(\theta)$ with the zero vector $0_{d\times 1}$, which thus simplifies $\tilde{\mathcal L}^{\text{adv}}(\theta)$ as follows,
\begin{align}
    &\tilde{\mathcal L}^{\text{adv}}(\theta)
    % \nonumber \\
    = 2 \E_\tau \Bigl[
        \begin{pmatrix} 0_{1\times d} & w^V_{22} \end{pmatrix}  \frac{ \begin{pmatrix} X_{\tau} & X^{\text{sfx}}_\tau & x_{\tau,q} \\ Y_{\tau} & Y^{\text{sfx}}_{\tau} & 0 \end{pmatrix} \begin{pmatrix} X_{\tau} & X^{\text{sfx}}_\tau & x_{\tau,q} \\ Y_{\tau} & Y^{\text{sfx}}_{\tau} & 0 \end{pmatrix}^\top }{ N + M_{\text{train}} }  \begin{pmatrix} W^{KQ}_{11} \\ 0_{1\times d} \end{pmatrix} x_{\tau,q} - y_{\tau,q} \Bigr]^2
    \nonumber\\
    &\quad\quad\quad\quad
    % &\quad
        + 0
        + \frac{2 \epsilon^2 M_{\text{train}}}{(N+M_{\text{train}})^2} \E_\tau \Bigl[ \|W^{KQ}_{11} x_{\tau,q}\|_2^2 \cdot \| \begin{pmatrix} 0_{1\times d} & w^V_{22} \end{pmatrix} \begin{pmatrix} X^{\text{sfx}}_{\tau} \\ Y^{\text{sfx}}_{\tau} \end{pmatrix} \|_2^2 \Bigr]
        + 0
    \nonumber\\
    % &\quad\quad\quad
    &= \underbrace{ 2 \cdot \E_\tau \Bigl[ w^V_{22} \cdot \frac{ Y_\tau X_\tau + Y^{\text{sfx}}_{\tau} X^{\text{sfx}}_\tau }{ N + M_{\text{train}} } \cdot W^{KQ}_{11} x_{\tau,q} - y_{\tau,q} \Bigr]^2 }_{:= B_1(\theta)}
        \nonumber\\
        &\quad
        + \underbrace{ \frac{2 \epsilon^2 M_{\text{train}}}{(N+M_{\text{train}})^2} \cdot \E_\tau \Bigl[ \|W^{KQ}_{11} x_{\tau,q}\|_2^2 \cdot \| w^V_{22} Y^{\text{sfx}}_{\tau} \|_2^2 \Bigr] }_{:= B_2(\theta)}.
    \label{eq:icl:proof:simplified-surrogate-at-loss:loss-v1}
\end{align}

\textbf{For the term $B_1(\theta)$ in Eq.~(\ref{eq:icl:proof:simplified-surrogate-at-loss:loss-v1}), we have that}
\begin{align}
    &B_1(\theta)
    := 2\cdot \E_\tau \Bigl[ w^V_{22} \cdot \frac{ Y_\tau X_\tau^\top + Y^{\text{sfx}}_\tau (X^{\text{sfx}}_\tau)^\top }{ N+M_{\text{train}} } \cdot W^{KQ}_{11}  x_{\tau,q} - y_{\tau,q} \Bigr]^2
    \nonumber\\
    &= 2\cdot \E_\tau \Bigl[ \frac{ w_\tau^\top \cdot ( X_\tau X_\tau^\top + X^{\text{sfx}}_\tau (X^{\text{sfx}}_\tau)^\top ) }{ N+M_{\text{train}} } \cdot w^V_{22} W^{KQ}_{11} \cdot x_{\tau,q} - w_\tau^\top x_{\tau,q} \Bigr]^2 \nonumber\\
    &= 2\cdot \E_\tau \left[ \Bigl[ \frac{ X_\tau X_\tau^\top + X^{\text{sfx}}_\tau (X^{\text{sfx}}_\tau)^\top }{ N+M_{\text{train}} } \cdot w^V_{22} W^{KQ}_{11} x_{\tau,q} - x_{\tau,q} \Bigr]^\top \cdot w_\tau w_\tau^\top \cdot \Bigl[ \frac{ X_\tau X_\tau^\top + X^{\text{sfx}}_\tau (X^{\text{sfx}}_\tau)^\top }{ N+M_{\text{train}} } \cdot w^V_{22} W^{KQ}_{11} x_{\tau,q} - x_{\tau,q} \Bigr] \right] \nonumber\\
    &= 2 \cdot \E_\tau \left[ \Bigl[ \frac{ X_\tau X_\tau^\top + X^{\text{sfx}}_\tau (X^{\text{sfx}}_\tau)^\top }{ N+M_{\text{train}} } \cdot w^V_{22} W^{KQ}_{11}  x_{\tau,q} - x_{\tau,q} \Bigr]^\top \cdot I_d \cdot \Bigl[ \frac{ X_\tau X_\tau^\top + X^{\text{sfx}}_\tau (X^{\text{sfx}}_\tau)^\top }{ N+M_{\text{train}} } \cdot w^V_{22} W^{KQ}_{11}  x_{\tau,q} - x_{\tau,q} \Bigr] \right] \nonumber\\
    &= 2\cdot \E_\tau \Bigl[ x_{\tau,q}^\top \cdot (w^V_{22} W^{KQ}_{11})^\top \cdot \frac{ \E_\tau \Bigl[ ( X_\tau X_\tau^\top + X^{\text{sfx}}_\tau (X^{\text{sfx}}_\tau)^\top ) ( X_\tau X_\tau^\top + X^{\text{sfx}}_\tau (X^{\text{sfx}}_\tau)^\top ) \Bigr] }{ (N+M_{\text{train}})^2 } \cdot w^V_{22} W^{KQ}_{11} \cdot x_{\tau,q} \Bigr]
    \nonumber\\
    &\quad
        - 4\cdot \E_{\tau} \Bigl[ x_{\tau,q}^\top \cdot \frac{ \E_{\tau} \Bigl[ ( X_\tau X_\tau^\top + X^{\text{sfx}}_\tau (X^{\text{sfx}}_\tau)^\top ) \Bigr] }{ N+M_{\text{train}} } \cdot (w^V_{22} W^{KQ}_{11}) \cdot x_{\tau,q} \Bigr]
        + 2\cdot \E_{\tau} \Bigl[ x_{\tau,q}^\top \cdot x_{\tau,q} \Bigr].
    \label{eq:icl:proof:simplified-surrogate-at-loss:loss-v1:term1-v1}
\end{align}

% We first analyze the term
For
$\E_\tau \Bigl[ ( X_\tau X_\tau^\top + X^{\text{sfx}}_\tau (X^{\text{sfx}}_\tau)^\top ) ( X_\tau X_\tau^\top + X^{\text{sfx}}_\tau (X^{\text{sfx}}_\tau)^\top ) \Bigr]$
in Eq.~(\ref{eq:icl:proof:simplified-surrogate-at-loss:loss-v1:term1-v1}),
we have
\begin{align}
    &\E_\tau \Bigl[ ( X_\tau X_\tau^\top + X^{\text{sfx}}_\tau (X^{\text{sfx}}_\tau)^\top ) ( X_\tau X_\tau^\top + X^{\text{sfx}}_\tau (X^{\text{sfx}}_\tau)^\top ) \Bigr]
    \nonumber\\
    &= \E_\tau [ X_\tau X_\tau^\top X_\tau X_\tau^\top ]
        + \E_\tau [ X^{\text{sfx}}_\tau (X^{\text{sfx}}_\tau)^\top] \cdot \E_\tau [ X_\tau X_\tau^\top ]
        \nonumber\\
        &\quad
        + \E_\tau [ X_\tau X_\tau^\top] \cdot \E_\tau [ X^{\text{sfx}}_\tau (X^{\text{sfx}}_\tau)^\top ]
        + \E_\tau [ X^{\text{sfx}}_\tau (X^{\text{sfx}}_\tau)^\top X^{\text{sfx}}_\tau (X^{\text{sfx}}_\tau)^\top ]
    \nonumber\\
    &= \E_\tau \Bigl[ \sum_{i,j} x_{\tau,i} x_{\tau,i}^\top x_{\tau,j} x_{\tau,j}^\top \Bigr]
        + \E_\tau \Bigl[ \sum_{i} x^{\text{sfx}}_{\tau,i} (x^{\text{sfx}}_{\tau,i})^\top \Bigr] \cdot \E_\tau \Bigl[ \sum_i x_{\tau,i} x_{\tau,i}^\top \Bigr]
        \nonumber\\
        &\quad
        + \E_\tau \Bigl[ \sum_i x_{\tau,i} x_{\tau,i}^\top \Bigr] \cdot \E_\tau \Bigl[ x^{\text{sfx}}_{\tau,i} (x^{\text{sfx}}_{\tau,i})^\top \Bigr]
        + \E_\tau \Bigl[ \sum_{i,j} x^{\text{sfx}}_{\tau,i} (x^{\text{sfx}}_{\tau,i})^\top x^{\text{sfx}}_{\tau,j} (x^{\text{sfx}}_{\tau,j})^\top \Bigr]
    \nonumber\\
    &= \E_\tau \Bigl[ \sum_{i} x_{\tau,i} x_{\tau,i}^\top x_{\tau,i} x_{\tau,i}^\top + \sum_{1\leq i,j \leq N, i\neq j} \Lambda^2 \Bigr]
        + M_{\text{train}} \Lambda \cdot N \Lambda
        + N \Lambda \cdot M_{\text{train}} \Lambda
        \nonumber\\
        &\quad
        + \E_\tau \Bigl[ \sum_{i} x^{\text{sfx}}_{\tau,i} (x^{\text{sfx}}_{\tau,i})^\top x^{\text{sfx}}_{\tau,i} (x^{\text{sfx}}_{\tau,i})^\top + \sum_{1\leq i,j \leq M_{\text{train}}, i\neq j} \Lambda^2 \Bigr]
    \nonumber\\
    &= \E_\tau \Bigl[ \sum_{i=1}^N \underbrace{ (2 \Lambda^2 + \mathrm{Tr}(\Lambda) \Lambda) }_{\text{by Lemma~\ref{lem:icl:tech-x-pow4}}} \Bigr] + (N^2-N) \cdot \Lambda^2
        + 2 N M_{\text{train}} \cdot \Lambda^2
        \nonumber\\
        &\quad
        + \E_\tau \Bigl[ \sum_{i=1}^{M_{\text{train}}} \underbrace{ (2 \Lambda^2 + \mathrm{Tr}(\Lambda) \Lambda) }_{\text{by Lemma~\ref{lem:icl:tech-x-pow4}}} \Bigr] + (M_{\text{train}}^2 - M_{\text{train}}) \cdot \Lambda^2
    \nonumber\\
    &= ( N^2+N + M_{\text{train}}^2 + M_{\text{train}} + 2N M_{\text{train}} ) \cdot \Lambda^2 + (N + M_{\text{train}}) \cdot \mathrm{Tr}(\Lambda) \cdot \Lambda
    \nonumber\\
    &= (N + M_{\text{train}}) \cdot ( (N + M_{\text{train}} + 1) \cdot \Lambda^2 + \mathrm{Tr}(\Lambda) \cdot \Lambda )
    % \nonumber\\
    = (N + M_{\text{train}})^2 \cdot \Gamma(M_{\text{train}}) \Lambda.
    \label{eq:icl:proof:simplified-surrogate-at-loss:loss-v1:term1-v1:term1}
\end{align}

For $\E_\tau \Bigl[ X_\tau X_\tau^\top + X^{\text{sfx}}_\tau (X^{\text{sfx}}_\tau)^\top \Bigr]$ in Eq.~(\ref{eq:icl:proof:simplified-surrogate-at-loss:loss-v1:term1-v1}), we have
\begin{align}
    &\E_\tau \Bigl[ X_\tau X_\tau^\top + X^{\text{sfx}}_\tau (X^{\text{sfx}}_\tau)^\top \Bigr]
    \nonumber\\
    &= \E_\tau \Bigl[ \sum_i x_{\tau,i} x_{\tau,i}^\top \Bigr] + \E_\tau \Bigl[ \sum_i x^{\text{sfx}}_{\tau,i} (x^{\text{sfx}}_{\tau,i})^\top \Bigr]
    \nonumber\\
    &
    = N \Lambda + M_{\text{train}} \Lambda
    = (N + M_{\text{train}}) \cdot \Lambda.
    \label{eq:icl:proof:simplified-surrogate-at-loss:loss-v1:term1-v1:term2}
\end{align}

Inserting Eqs.~(\ref{eq:icl:proof:simplified-surrogate-at-loss:loss-v1:term1-v1:term1}) and~(\ref{eq:icl:proof:simplified-surrogate-at-loss:loss-v1:term1-v1:term2}) into Eq.~(\ref{eq:icl:proof:simplified-surrogate-at-loss:loss-v1:term1-v1}) leads to
\begin{align}
    B_1(\theta)
    &= 2\cdot \E_{\tau} \Bigl[ x_{\tau,q}^\top \cdot (w^V_{22} W^{KQ}_{11})^\top \cdot \Gamma(M_{\text{train}}) \Lambda \cdot w^V_{22} W^{KQ}_{11} \cdot x_{\tau,q} \Bigr]
        \nonumber\\
        &\quad
        - 4\cdot \E_{\tau} \Bigl[ x_{\tau,q}^\top \cdot \Lambda \cdot w^V_{22} W^{KQ}_{11} \cdot x_{\tau,q} \Bigr]
        + 2\cdot \E_{\tau} \Bigl[ x_{\tau,q}^\top \cdot x_{\tau,q} \Bigr].
    \nonumber\\
    &= 2\cdot \underbrace{ \mathrm{Tr}\Bigl[ (w^V_{22} W^{KQ}_{11})^\top \cdot \Gamma(M_{\text{train}}) \Lambda \cdot w^V_{22} W^{KQ}_{11} \cdot \Lambda \Bigr] }_{\text{by Lemma~\ref{lem:icl:tech-x-quad}}}
        \nonumber\\
        &\quad
        - 4\cdot \underbrace{ \mathrm{Tr}\Bigl[ \Lambda \cdot w^V_{22} W^{KQ}_{11} \cdot \Lambda \Bigr] }_{\text{by Lemma~\ref{lem:icl:tech-x-quad}}}
        + 2\cdot \mathrm{Tr}(\Lambda)
    \nonumber\\
    &= 2\cdot \underbrace{ \mathrm{Tr}\Bigl[ \Gamma(M_{\text{train}})\Lambda \cdot (w^V_{22} W^{KQ}_{11} \Lambda^{\frac{1}{2}}) \cdot (w^V_{22} W^{KQ}_{11} \Lambda^{\frac{1}{2}})^\top \Bigr] }_{\text{by Lemma~\ref{lem:icl:tech:mat-permute}}}
        \nonumber\\
        &\quad
        - 4\cdot \underbrace{ \mathrm{Tr}\Bigl[ (w^V_{22} W^{KQ}_{11} \Lambda^{\frac{1}{2}}) \cdot \Lambda^{\frac{3}{2}} \Bigr] }_{\text{by Lemma~\ref{lem:icl:tech:mat-permute}}}
        + 2\cdot \mathrm{Tr}(\Lambda).
    \label{eq:icl:proof:simplified-surrogate-at-loss:loss-v1:term1-fin}
\end{align}

\textbf{Besides, for the term $B_2(\theta)$ in Eq.~(\ref{eq:icl:proof:simplified-surrogate-at-loss:loss-v1}), we have that}
\begin{align}
    &B_2(\theta)
    := \frac{2 \epsilon^2 M_{\text{train}}}{(N+M_{\text{train}})^2} \cdot \E_\tau \Bigl[ \|W^{KQ}_{11} x_{\tau,q}\|_2^2 \cdot \| w^V_{22} Y^{\text{sfx}}_{\tau} \|_2^2 \Bigr]
    \nonumber\\
    &= \frac{2 \epsilon^2 M_{\text{train}}}{(N+M_{\text{train}})^2} \cdot (w^{V}_{22})^2 \cdot \E_\tau \Bigl[ x_{\tau,q}^\top \cdot (W^{KQ}_{11})^\top W^{KQ}_{11} \cdot x_{\tau,q} \Bigr] \cdot \E_{\tau} \Bigl[ w_\tau^\top \cdot X^{\text{sfx}}_{\tau} (X^{\text{sfx}}_{\tau})^\top \cdot w_\tau \Bigr]
    \nonumber\\
    &= \frac{2 \epsilon^2 M_{\text{train}}}{(N+M_{\text{train}})^2} \cdot (w^{V}_{22})^2 \cdot \underbrace{ \mathrm{Tr}\Bigl[ (W^{KQ}_{11})^\top W^{KQ}_{11} \cdot \Lambda \Bigr] }_{\text{by Lemma~\ref{lem:icl:tech-x-quad}}} \cdot \E_{\tau} \Bigl[ w_\tau^\top \cdot M_{\text{train}} \Lambda \cdot w_\tau \Bigr]
    \nonumber\\
    &= \frac{2 \epsilon^2 M_{\text{train}} }{(N+M_{\text{train}})^2} \cdot (w^{V}_{22})^2 \cdot \underbrace{ \mathrm{Tr}\Bigl[ W^{KQ}_{11} \cdot \Lambda \cdot (W^{KQ}_{11})^\top \Bigr] }_{\text{by Lemma~\ref{lem:icl:tech:mat-permute}}} \cdot \underbrace{ \mathrm{Tr}\Bigl[ M_{\text{train}} \Lambda \cdot I_d \Bigr] }_{\text{by Lemma~\ref{lem:icl:tech-x-quad}}}
    \nonumber\\
    &= 2\epsilon^2 \cdot \frac{M_{\text{train}}^2 \mathrm{Tr}(\Lambda) }{(N+M_{\text{train}} \Lambda^{\frac{1}{2}})^2} \cdot \mathrm{Tr}\Bigl[ (w^V_{22} W^{KQ}_{11} \Lambda^{\frac{1}{2}}) \cdot \Lambda \cdot (w^V_{22} W^{KQ}_{11})^\top \Bigr]
    \nonumber\\
    &= 2\epsilon^2 \cdot \psi(M_{\text{train}}) \cdot \mathrm{Tr}\Bigl[ (w^V_{22} W^{KQ}_{11} \Lambda^{\frac{1}{2}}) \cdot \Lambda \cdot (w^V_{22} W^{KQ}_{11})^\top \Bigr].
    \label{eq:icl:proof:simplified-surrogate-at-loss:loss-v1:term2-fin}
\end{align}

\textbf{Finally, by inserting Eqs.~(\ref{eq:icl:proof:simplified-surrogate-at-loss:loss-v1:term1-fin}) and~(\ref{eq:icl:proof:simplified-surrogate-at-loss:loss-v1:term2-fin}) into Eq.~(\ref{eq:icl:proof:simplified-surrogate-at-loss:loss-v1}), we have}
\begin{align*}
    &\tilde{\mathcal L}^{\text{adv}}(\theta)
    \nonumber\\
    &:= 2\cdot \mathrm{Tr}\Bigl[ \Gamma(M_{\text{train}})\Lambda \cdot (w^V_{22} W^{KQ}_{11} \Lambda^{\frac{1}{2}}) \cdot (w^V_{22} W^{KQ}_{11} \Lambda^{\frac{1}{2}})^\top \Bigr]
        - 4\cdot \mathrm{Tr}\Bigl[ (w^V_{22} W^{KQ}_{11} \Lambda^{\frac{1}{2}}) \cdot \Lambda^{\frac{3}{2}} \Bigr]
        \nonumber\\
        &\quad
        + 2\cdot \mathrm{Tr}(\Lambda)
        + 2\epsilon^2 \cdot \psi(M_{\text{train}}) \cdot \mathrm{Tr}\Bigl[ (w^V_{22} W^{KQ}_{11} \Lambda^{\frac{1}{2}}) \cdot \Lambda \cdot (w^V_{22} W^{KQ}_{11})^\top \Bigr]
    \nonumber\\
    &= 2\cdot \mathrm{Tr}\Bigl[ ( \Gamma(M_{\text{train}})\Lambda + \epsilon^2 \psi(M_{\text{train}}) I_d ) \cdot (w^V_{22} W^{KQ}_{11} \Lambda^{\frac{1}{2}}) \cdot (w^V_{22} W^{KQ}_{11} \Lambda^{\frac{1}{2}})^\top \Bigr]
        \nonumber\\
        &\quad
        - 4\cdot \mathrm{Tr}\Bigl[ (w^V_{22} W^{KQ}_{11} \Lambda^{\frac{1}{2}}) \cdot \Lambda^{\frac{3}{2}} \Bigr]
        + 2\cdot \mathrm{Tr}(\Lambda),
\end{align*}
which completes the proof.
\end{proof}

Based on the simplified surrogate AT loss, the closed-form global minimizer $\theta_*$ for the surrogate AT problem is then calculated in the following Lemma~\ref{lem:icl:surrogate-at-minimizer}.
\begin{lemma}
\label{lem:icl:surrogate-at-minimizer}
Suppose Assumption~\ref{ass:icl:init} holds.
Then, $\theta_* := (W^V_* W^{KQ}_*)$ is a minimizer for the surrogate AT loss $\tilde{\mathcal L}^{\text{\rm adv}}(\theta)$ in Eq.~(\ref{eq:icl:at-loss}) if and only if
$w^{V}_{*,22} W^{KQ}_{*,11} = (\Gamma(M_{\text{\rm train}}) \Lambda + \epsilon^2 \psi(M_{\text{\rm train}}) I_d)^{-1} \Lambda$.
\end{lemma}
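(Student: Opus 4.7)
The plan is to exploit the closed-form simplification of the surrogate AT loss provided by Lemma~\ref{lem:icl:simplified-surrogate-at-loss}. First I would introduce the shorthands $A := w^V_{22} W^{KQ}_{11} \Lambda^{1/2} \in \mathbb{R}^{d\times d}$ and $K := \Gamma(M_{\text{train}}) \Lambda + \epsilon^2 \psi(M_{\text{train}}) I_d$, so that the simplified loss reads
\begin{align*}
    \tilde{\mathcal L}^{\text{adv}}(\theta) = 2 \mathrm{Tr}[K A A^\top] - 4 \mathrm{Tr}[A \Lambda^{3/2}] + 2 \mathrm{Tr}(\Lambda).
\end{align*}
This re-parameterization makes clear that under Assumption~\ref{ass:icl:init}, the loss depends on $\theta$ only through the product $w^V_{22} W^{KQ}_{11}$ (indeed only through $A$), and not through any other parameter block. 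So the "if and only if" claim reduces to showing that $A \mapsto 2\mathrm{Tr}[KAA^\top] - 4\mathrm{Tr}[A\Lambda^{3/2}]$ admits a unique minimizer $A_*$, and then inverting the relation $A_* = w^V_{*,22}W^{KQ}_{*,11}\Lambda^{1/2}$.

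Next I would verify that $K$ is symmetric and positive definite. Expanding $\Gamma(M_{\text{train}}) \Lambda = \frac{N+M_{\text{train}}+1}{N+M_{\text{train}}} \Lambda^2 + \frac{\mathrm{Tr}(\Lambda)}{N+M_{\text{train}}} \Lambda$ exhibits it as a strictly positive combination of the symmetric PD matrices $\Lambda^2$ and $\Lambda$; adding the scalar $\epsilon^2 \psi(M_{\text{train}}) \geq 0$ times $I_d$ preserves symmetric positive definiteness. Consequently $K$ is invertible and, since $\mathrm{Tr}[KAA^\top] = \mathrm{Tr}[A^\top K A]$ is a strictly convex quadratic form in $A$, the objective in $A$ is strictly convex and admits a unique global minimizer.

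The critical-point equation is computed via Lemma~\ref{lem:icl:tech:trace-diff}. Using $K = K^\top$, one obtains $\partial_A \mathrm{Tr}[K A A^\top] = 2 K A$, while $\partial_A \mathrm{Tr}[A \Lambda^{3/2}] = \Lambda^{3/2}$ (since $\Lambda^{3/2}$ is symmetric). Setting $\partial_A \tilde{\mathcal L}^{\text{adv}} = 4 K A - 4 \Lambda^{3/2} = 0$ yields the unique optimal value
\begin{align*}
    A_* = K^{-1} \Lambda^{3/2},
    \quad \text{i.e.,} \quad
    w^V_{*,22}\, W^{KQ}_{*,11}\, \Lambda^{1/2} = K^{-1} \Lambda^{3/2}.
\end{align*}
Right-multiplying by $\Lambda^{-1/2}$ (which exists since $\Lambda$ is PD) gives $w^V_{*,22} W^{KQ}_{*,11} = K^{-1} \Lambda$, exactly the stated identity. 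The "iff" is then immediate from uniqueness of $A_*$: any $\theta_*$ satisfying the product condition attains $A(\theta_*) = A_*$ and hence minimizes the loss, while any minimizer must have $A(\theta_*) = A_*$, which is equivalent to the product condition.

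There is no deep obstacle here; the only non-routine point is recognizing that although $\Gamma(M_{\text{train}}) \Lambda$ is a product of two symmetric matrices, it is nevertheless itself symmetric PD because it collapses to a positive linear combination of $\Lambda^2$ and $\Lambda$. Everything else is a single matrix-calculus computation on a strictly convex quadratic.
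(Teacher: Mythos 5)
Your proof is correct and follows essentially the same route as the paper: both treat the simplified surrogate loss from Lemma~\ref{lem:icl:simplified-surrogate-at-loss} as a positive-definite quadratic form in $A := w^V_{22} W^{KQ}_{11}\Lambda^{1/2}$ and solve for its unique minimizer $A_* = (\Gamma_{\text{train}}\Lambda + \epsilon^2\psi_{\text{train}} I_d)^{-1}\Lambda^{3/2}$. The only cosmetic difference is that the paper completes the square (rewriting the loss as $2\,\mathrm{Tr}[K(A-A_*)(A-A_*)^\top]$ plus constants and observing nonnegativity), whereas you set the matrix gradient $4KA - 4\Lambda^{3/2}$ to zero and invoke strict convexity of $A\mapsto \mathrm{Tr}(KAA^\top)$; both are standard ways to minimize the same quadratic and yield the identical conclusion.
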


\begin{proof}
For the simplified surrogate AT loss proved in Lemma~\ref{lem:icl:simplified-surrogate-at-loss}, we rewrite it as follows,
\begin{align}
    &\tilde{\mathcal L}^{\text{\rm adv}}(\theta)
    \nonumber\\
    &= 2 \mathrm{Tr}\Bigl[ ( \Gamma(M_{\text{\rm train}}) \Lambda + \epsilon^2 \psi(M_{\text{\rm train}}) I_d ) \cdot ( w^V_{22} W^{KQ}_{11} \Lambda^{\frac{1}{2}}) \cdot ( w^V_{22} W^{KQ}_{11} \Lambda^{\frac{1}{2}})^\top \Bigr]
        \nonumber\\
        &\quad
        - 4 \mathrm{Tr}\Bigl[ (w^V_{22} W^{KQ}_{11} \Lambda^{\frac{1}{2}}) \cdot \Lambda^{\frac{3}{2}} \Bigr]
        + 2 \mathrm{Tr}(\Lambda)
    \nonumber\\
    &= 2\cdot \mathrm{Tr}\Bigl[ ( \Gamma_{\text{train}} \Lambda + \epsilon^2 \psi_{\text{train}} I_d ) \cdot \Bigl( w^V_{22} W^{KQ}_{11} \Lambda^{\frac{1}{2}} - ( \Gamma_{\text{train}} \Lambda + \epsilon^2 \psi_{\text{train}} I_d )^{-1} \Lambda^{\frac{3}{2}} \Bigr)
        \nonumber\\
        & \ \qquad\qquad
        \cdot \Bigl( w^V_{22} W^{KQ}_{11} \Lambda^{\frac{1}{2}} - ( \Gamma_{\text{train}} \Lambda + \epsilon^2 \psi_{\text{train}} I_d )^{-1} \Lambda^{\frac{3}{2}} \Bigr)^\top \Bigr]
    \nonumber\\
    &\quad
        - \mathrm{Tr}\Bigl[ \Lambda^3 ( \Gamma_{\text{train}} \Lambda + \epsilon^2 \psi_{\text{train}} I_d )^{-1} \Bigr]
        + 2 \cdot \mathrm{Tr}(\Lambda),
    \label{eq:icl:proof:surrogate-at-minimizer:rewrite}
\end{align}
where
$\Gamma_{\text{train}} := \Gamma(M_{\text{train}})$
and
$\psi_{\text{train}} := \psi(M_{\text{train}})$.

Notice that the second and third terms in Eq.~(\ref{eq:icl:proof:surrogate-at-minimizer:rewrite}) are constants.
Besides, the matrix $( \Gamma_{\text{train}} \Lambda + \epsilon^2 \psi I_d )$ in the first term in Eq.~(\ref{eq:icl:proof:surrogate-at-minimizer:rewrite}) is positive definite, which means that this first term is non-negative.
As a result, the surrogate AT loss $\tilde{\mathcal L}^{\text{adv}}(\theta)$ will be minimized when the first term in Eq.~(\ref{eq:icl:proof:surrogate-at-minimizer:rewrite}) becomes zero.
This can be achieved by setting
\begin{align*}
    w^V_{*,22} W^{KQ}_{*,11} \Lambda^{\frac{1}{2}} - ( \Gamma(M_{\text{train}}) \Lambda + \epsilon^2 \psi(M_{\text{train}}) I_d )^{-1} \Lambda^{\frac{3}{2}} = 0,
\end{align*}
which is
\begin{align*}
    w^V_{*,22} W^{KQ}_{*,11} = ( \Gamma(M_{\text{train}}) \Lambda + \epsilon^2 \psi(M_{\text{train}}) I_d )^{-1} \Lambda.
\end{align*}

The proof is completed.
\end{proof}

We now turn to prove an PL-inequality for the surrogate AT problem.
The proof idea follows that in \citet{zhang2024trained}.
Specifically, we will first prove several technical lemmas ({\it i.e.}, Lemma~\ref{lem:icl:init-equal-pms}, Lemma~\ref{lem:icl:wv22-geq-zero}, and Lemma~\ref{lem:icl:wv22-lower-bd}), and then present the PL-inequality in Lemma~\ref{lem:icl:pl-inequality}, which can then enable the surrogate AT model
in Eq.~(\ref{eq:icl:surrogate-at-loss}) approaches its global optimal solution.

% when training with continuous gradient flows.

\begin{lemma}
\label{lem:icl:init-equal-pms}
Suppose Assumption~\ref{ass:icl:init} holds and the model $f_{\text{\rm LSA},\theta}$ is trained via minimizing the surrogate AT loss $\tilde{\mathcal L}^{\text{\rm adv}}(\theta)$ in Eq.~(\ref{eq:icl:surrogate-at-loss}) with continuous training flow.
Then, for any continuous training time $t \geq 0$, we uniformly have that
\begin{align*}
    (w^V_{22}(t))^2 = \mathrm{Tr}[ W^{KQ}_{11}(t) (W^{KQ}_{11}(t))^\top ].
\end{align*}
\end{lemma}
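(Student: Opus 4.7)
The plan is to establish a conservation law along the gradient flow. Define $Q(t) := (w^V_{22}(t))^2 - \mathrm{Tr}[W^{KQ}_{11}(t)(W^{KQ}_{11}(t))^\top]$. First I would verify that $Q(0) = 0$ directly from Assumption~\ref{ass:icl:init}: the initialization $w^V_{22}(0) = \sigma$ gives $(w^V_{22}(0))^2 = \sigma^2$, while $W^{KQ}_{11}(0) = \sigma\,\Theta\Theta^\top$ together with the normalization $\|\Theta\Theta^\top\|_F = 1$ yields $\mathrm{Tr}[W^{KQ}_{11}(0)(W^{KQ}_{11}(0))^\top] = \sigma^2\|\Theta\Theta^\top\|_F^2 = \sigma^2$.

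For the time derivative, I would first use Lemma~\ref{lem:icl:zero-grad} to fix $w^V_{21}(t) = w^{KQ}_{21}(t) = 0_{d\times 1}$ throughout training, and then apply Lemma~\ref{lem:icl:simplified-surrogate-at-loss} to write the loss as
\begin{align*}
\tilde{\mathcal L}^{\text{adv}}(\theta) = 2(w^V_{22})^2\, \mathrm{Tr}[K\, W^{KQ}_{11}\Lambda (W^{KQ}_{11})^\top] - 4 w^V_{22}\,\mathrm{Tr}[W^{KQ}_{11}\Lambda^2] + 2\,\mathrm{Tr}(\Lambda),
\end{align*}
with $K := \Gamma(M_{\text{train}})\Lambda + \epsilon^2\psi(M_{\text{train}}) I_d$. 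Since $\Gamma(M_{\text{train}})$ is affine in $\Lambda$ and $I_d$, it commutes with $\Lambda$, so $K$ is symmetric. Using Lemma~\ref{lem:icl:tech:trace-diff} and this symmetry, the partial derivatives come out as
\begin{align*}
\partial_{w^V_{22}}\tilde{\mathcal L}^{\text{adv}} &= 4 w^V_{22}\,\mathrm{Tr}[K W^{KQ}_{11}\Lambda (W^{KQ}_{11})^\top] - 4\,\mathrm{Tr}[W^{KQ}_{11}\Lambda^2], \\
\partial_{W^{KQ}_{11}}\tilde{\mathcal L}^{\text{adv}} &= 4(w^V_{22})^2 K W^{KQ}_{11}\Lambda - 4 w^V_{22}\Lambda^2.
\end{align*}

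The last step is to differentiate $(w^V_{22})^2$ and $\mathrm{Tr}[W^{KQ}_{11}(W^{KQ}_{11})^\top]$ along the gradient flow. The scalar chain rule gives $\tfrac{d}{dt}(w^V_{22})^2 = 2 w^V_{22}\,\partial_t w^V_{22}$, while the matrix chain rule gives $\tfrac{d}{dt}\mathrm{Tr}[W^{KQ}_{11}(W^{KQ}_{11})^\top] = 2\,\mathrm{Tr}[(\partial_t W^{KQ}_{11}) (W^{KQ}_{11})^\top]$. Substituting the derivatives above and applying the cyclic property (Lemma~\ref{lem:icl:tech:mat-permute}) together with the symmetry of $K$ and $\Lambda$, both expressions should collapse to the same quantity, namely
\begin{align*}
-8(w^V_{22})^2\,\mathrm{Tr}[K W^{KQ}_{11}\Lambda (W^{KQ}_{11})^\top] + 8 w^V_{22}\,\mathrm{Tr}[W^{KQ}_{11}\Lambda^2].
\end{align*}
Hence $\dot Q(t) \equiv 0$, and combined with $Q(0) = 0$ this yields the claim for all $t \geq 0$.

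The computation is mostly bookkeeping; the only delicate point is executing the matrix chain rule on the Frobenius norm term correctly and then re-aligning the trace using cyclicity so that the quadratic and linear contributions match the scalar derivative term-by-term. Conceptually, this is simply the matrix analogue of the well-known ``balancedness'' invariant for two-layer linear networks under gradient flow from a symmetric initialization, which is precisely what Assumption~\ref{ass:icl:init} is designed to supply.
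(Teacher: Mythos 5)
Your proposal is correct and takes essentially the same route as the paper: both rely on Lemma~\ref{lem:icl:simplified-surrogate-at-loss} to reduce the loss to a function of $w^V_{22}$ and $W^{KQ}_{11}$ only, differentiate it (using the symmetry of $\Gamma(M_{\text{train}})\Lambda + \epsilon^2\psi(M_{\text{train}}) I_d$ and $\Lambda$), show $\partial_t(w^V_{22})^2 = \partial_t\mathrm{Tr}[W^{KQ}_{11}(W^{KQ}_{11})^\top]$, and propagate equality from the initialization $\sigma^2 = \sigma^2\|\Theta\Theta^\top\|_F^2$. Your framing as a conservation law $Q(t)\equiv 0$ and your explicit citation of Lemma~\ref{lem:icl:zero-grad} to justify restricting to these two parameters are cosmetic differences only — the underlying computation is the paper's.
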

\begin{proof}
Since the model is trained via continuous gradient flow, thus $\partial_{t} W^{KQ}_{11}(t)$ can be calculated based on the simplified surrogate AT loss proved in Lemma~\ref{lem:icl:simplified-surrogate-at-loss} as follows,
\begin{align}
    &\partial_t W^{KQ}_{11}(t)
    := -\partial_{W^{KQ}_{11}} \tilde{\mathcal L}^{\text{adv}}(\theta)
    \nonumber\\
    &= -2 \cdot \partial_{W^{KQ}_{11}} \mathrm{Tr}\Big[ ( \Gamma(M_{\text{\rm train}}) \Lambda + \epsilon^2 \psi(M_{\text{\rm train}}) I_d ) \cdot ( w^V_{22} W^{KQ}_{11} \Lambda^{\frac{1}{2}}) \cdot ( w^V_{22} W^{KQ}_{11} \Lambda^{\frac{1}{2}})^\top \Bigr]
        \nonumber\\
        &\quad
        + 4 \cdot \partial_{W^{KQ}_{11}} \mathrm{Tr}\Bigl[ (w^V_{22} W^{KQ}_{11} \Lambda^{\frac{1}{2}}) \cdot \Lambda^{\frac{3}{2}} \Bigr]
    \nonumber\\
    &= -2 \cdot (w^{V}_{22})^2 \cdot \partial_{W^{KQ}_{11}} \mathrm{Tr}\Big[ ( \Gamma(M_{\text{\rm train}}) \Lambda + \epsilon^2 \psi(M_{\text{\rm train}}) I_d ) \cdot W^{KQ}_{11} \cdot \Lambda \cdot ( W^{KQ}_{11} )^\top \Bigr]
        + \underbrace{ 4 w^V_{22} \Lambda^2 }_{\text{by Lemma~\ref{lem:icl:tech:trace-diff}}}
    \nonumber\\
    &= \underbrace{ -4 \cdot (w^{V}_{22})^2 \cdot ( \Gamma(M_{\text{\rm train}}) \Lambda + \epsilon^2 \psi(M_{\text{\rm train}}) I_d ) \cdot W^{KQ}_{11} \cdot \Lambda }_{\text{by Lemma~\ref{lem:icl:tech:trace-diff}}}
        + 4 w^V_{22} \Lambda^2.
    \label{eq:icl:proof:init-equal-pms:diff-wkq11}
\end{align}
Similarly, for $\partial_t w^{V}_{22}(t)$, we have
\begin{align}
    &\partial_t w^{V}_{22}(t)
    := -\partial_{w^{V}_{22}} \tilde{\mathcal L}^{\text{adv}}(\theta)
    \nonumber\\
    &= -2 \cdot \partial_{w^{V}_{22}} \mathrm{Tr}\Big[ ( \Gamma(M_{\text{\rm train}}) \Lambda + \epsilon^2 \psi(M_{\text{\rm train}}) I_d ) \cdot ( w^V_{22} W^{KQ}_{11} \Lambda^{\frac{1}{2}}) \cdot ( w^V_{22} W^{KQ}_{11} \Lambda^{\frac{1}{2}})^\top \Bigr]
        \nonumber\\
        &\quad
        + 4 \cdot \partial_{w^{V}_{22}} \mathrm{Tr}\Bigl[ (w^V_{22} W^{KQ}_{11} \Lambda^{\frac{1}{2}}) \cdot \Lambda^{\frac{3}{2}} \Bigr]
    \nonumber\\
    &= -4 w^V_{22} \cdot \mathrm{Tr}\Big[ ( \Gamma(M_{\text{\rm train}}) \Lambda + \epsilon^2 \psi(M_{\text{\rm train}}) I_d ) \cdot ( W^{KQ}_{11} \Lambda^{\frac{1}{2}}) \cdot ( W^{KQ}_{11} \Lambda^{\frac{1}{2}})^\top \Bigr]
        \nonumber\\
        &\quad
        + 4 \cdot \mathrm{Tr}\Bigl[ (W^{KQ}_{11} \Lambda^{\frac{1}{2}}) \cdot \Lambda^{\frac{3}{2}} \Bigr].
    \label{eq:icl:proof:init-equal-pms:diff-wv22}
\end{align}

Combining Eqs~(\ref{eq:icl:proof:init-equal-pms:diff-wkq11}) and~(\ref{eq:icl:proof:init-equal-pms:diff-wv22}), we thus have
\begin{align*}
    &\mathrm{Tr}\Bigl[ \partial_t W^{KQ}_{11}(t) (W^{KQ}_{11}(t))^\top \Bigr]
    \nonumber\\
    &= -4 \cdot (w^{V}_{22})^2 \cdot \mathrm{Tr}\Bigl[ ( \Gamma(M_{\text{\rm train}}) \Lambda + \epsilon^2 \psi(M_{\text{\rm train}}) I_d ) \cdot (W^{KQ}_{11} \Lambda^{\frac{1}{2}}) \cdot (W^{KQ}_{11} \Lambda^{\frac{1}{2}})^\top \Bigr]
        \nonumber\\
        &\quad
        + 4 w^V_{22} \cdot \mathrm{Tr}\Bigl[ \Lambda^{\frac{3}{2}} \cdot (\Lambda^{\frac{1}{2}} W^{KQ}_{11})^\top \Bigr]
    \nonumber\\
    &= (\partial_t w^V_{22}(t)) w^V_{22}(t),
\end{align*}
which further indicates that
\begin{align}
    &\partial_t \mathrm{Tr}\Bigl[ W^{KQ}_{11}(t) (W^{KQ}_{11}(t))^\top \Bigr]
    \nonumber\\
    &= \mathrm{Tr}\Bigl[ \partial_t W^{KQ}_{11}(t) \cdot (W^{KQ}_{11}(t))^\top \Bigr] + \mathrm{Tr}\Bigl[ W^{KQ}_{11}(t) \cdot \partial_t (W^{KQ}_{11}(t))^\top \Bigr]
    \nonumber\\
    &= (\partial_t w^V_{22}(t)) \cdot w^V_{22}(t) + W^V_{22}(t) \cdot (\partial_t w^V_{22}(t))
    = \partial_t (w^V_{22}(t)^2).
    \label{eq:icl:proof:init-equal-pms:equal-diff}
\end{align}

Finally, according to Assumption~\ref{ass:icl:init}, we have that when the continuous training time is $t=0$,
\begin{align*}
    \mathrm{Tr}\Bigl[ W^{KQ}_{11}(0) (W^{KQ}_{11}(0))^\top \Bigr]
    = \| W^{KQ}_{11}(0) \|_F^2
    = \sigma^2
    = w^V_{22}(0)^2.
\end{align*}
Combine with Eq.~(\ref{eq:icl:proof:init-equal-pms:equal-diff}), we thus have that
\begin{align*}
    \mathrm{Tr}\Bigl[ W^{KQ}_{11}(t) (W^{KQ}_{11}(t))^\top \Bigr]
    = w^V_{22}(t)^2,
    \quad \forall t\geq 0.
\end{align*}

The proof is completed.
\end{proof}

\begin{lemma}
\label{lem:icl:wv22-geq-zero}
Suppose Assumption~\ref{ass:icl:init} holds and the model $f_{\text{\rm LSA},\theta}$ is trained via minimizing the surrogate AT loss $\tilde{\mathcal L}^{\text{\rm adv}}(\theta)$ in Eq.~(\ref{eq:icl:surrogate-at-loss}) with continuous training flow.
Then, if the parameter $\sigma$ in Assumption~\ref{ass:icl:init} satisfies
\begin{align*}
    \sigma
    < \sqrt{ \frac{2}{d \cdot \| ( \Gamma(M_{\text{\rm train}}) \Lambda + \epsilon^2 \psi(M_{\text{\rm train}}) I_d ) \Lambda^{-1} \|_2 } },
\end{align*}
we have $w^V_{22}(t) > 0$ holds for any continuous training time $t\geq 0$.
\end{lemma}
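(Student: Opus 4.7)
My plan is to argue by contradiction, exploiting the nonincreasing behaviour of the surrogate loss along gradient flow. Since $w^V_{22}(0) = \sigma > 0$, if the conclusion fails then by continuity there is a first time $t_{*} > 0$ at which $w^V_{22}(t_{*}) = 0$. Lemma~\ref{lem:icl:init-equal-pms} then forces $\mathrm{Tr}[W^{KQ}_{11}(t_{*}) (W^{KQ}_{11}(t_{*}))^\top] = w^V_{22}(t_{*})^2 = 0$, hence $W^{KQ}_{11}(t_{*}) = 0$, so the matrix product $w^V_{22}(t_{*}) W^{KQ}_{11}(t_{*})$ vanishes. Plugging this into the simplified loss of Lemma~\ref{lem:icl:simplified-surrogate-at-loss} gives $\tilde{\mathcal L}^{\text{adv}}(\theta(t_{*})) = 2\mathrm{Tr}(\Lambda)$. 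Gradient-flow monotonicity, namely $\tilde{\mathcal L}^{\text{adv}}(\theta(t_{*})) \leq \tilde{\mathcal L}^{\text{adv}}(\theta(0))$, will then contradict an explicit computation of the initial loss whenever $\sigma$ satisfies the stated bound.

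\textbf{Initial loss and the inequality to prove.} With $P := \Theta \Theta^\top$ and $A := \Gamma(M_{\text{train}}) \Lambda + \epsilon^2 \psi(M_{\text{train}}) I_d$, Assumption~\ref{ass:icl:init} gives $w^V_{22}(0) W^{KQ}_{11}(0) = \sigma^2 P$, so the simplified loss formula yields
\begin{align*}
    \tilde{\mathcal L}^{\text{adv}}(\theta(0)) = 2 \mathrm{Tr}(\Lambda) + 2 \sigma^4 \mathrm{Tr}[A P \Lambda P] - 4 \sigma^2 \mathrm{Tr}[P \Lambda^2].
\end{align*}
Since $\Theta \Lambda \neq 0$ and $\Lambda$ is positive definite, $\mathrm{Tr}[P\Lambda^2] = \|\Lambda\Theta\|_F^2 > 0$. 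Rearranging $\tilde{\mathcal L}^{\text{adv}}(\theta(t_{*})) \leq \tilde{\mathcal L}^{\text{adv}}(\theta(0))$ yields $\sigma^2 \mathrm{Tr}[A P \Lambda P] \geq 2 \mathrm{Tr}[P \Lambda^2]$. To contradict this under the hypothesis $\sigma^2 < 2/(d \|A \Lambda^{-1}\|_2)$, it suffices to prove the operator-norm estimate
\begin{align*}
    \mathrm{Tr}[A P \Lambda P] \leq d \cdot \|A \Lambda^{-1}\|_2 \cdot \mathrm{Tr}[P \Lambda^2].
\end{align*}

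\textbf{Proof of the bound and closing.} I would establish the estimate by rewriting $\mathrm{Tr}[A P \Lambda P] = \|A^{1/2} P \Lambda^{1/2}\|_F^2$ and factoring $A^{1/2} P \Lambda^{1/2} = (A^{1/2} \Lambda^{-1/2}) (\Lambda^{1/2} P \Lambda^{1/2})$. Sub-multiplicativity $\|XY\|_F \leq \|X\|_2 \|Y\|_F$ then gives $\mathrm{Tr}[A P \Lambda P] \leq \|A^{1/2} \Lambda^{-1/2}\|_2^2 \cdot \|\Lambda^{1/2} P \Lambda^{1/2}\|_F^2$. The first factor equals $\|A\Lambda^{-1}\|_2$ because $A \Lambda^{-1} = \Gamma(M_{\text{train}}) + \epsilon^2 \psi(M_{\text{train}}) \Lambda^{-1}$ is symmetric positive definite, so its operator norm equals its maximum eigenvalue, which matches $\lambda_{\max}(A^{1/2}\Lambda^{-1}A^{1/2}) = \|A^{1/2}\Lambda^{-1/2}\|_2^2$. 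For the second factor, trace Cauchy--Schwarz gives $\|\Lambda^{1/2}P\Lambda^{1/2}\|_F^2 = \mathrm{Tr}[(\Lambda P)^2] \leq \mathrm{Tr}[\Lambda^2 P^2]$; writing the spectral decomposition $P = \sum_i \mu_i u_i u_i^\top$ then gives $\mathrm{Tr}[\Lambda^2 P^2] = \sum_i \mu_i^2 (u_i^\top \Lambda^2 u_i) \leq \|P\|_2 \mathrm{Tr}[\Lambda^2 P]$, and $\|P\|_2 \leq \|P\|_F = 1$ closes the estimate (indeed without the factor of $d$). Substituting into $\sigma^2 \mathrm{Tr}[A P \Lambda P] \geq 2 \mathrm{Tr}[P\Lambda^2]$ yields $\sigma^2 \geq 2/\|A\Lambda^{-1}\|_2 \geq 2/(d \|A\Lambda^{-1}\|_2)$, contradicting the hypothesis.

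\textbf{Main obstacle.} The crux is the norm bound itself: because $A$ and $\Lambda$ need not commute, one cannot straightforwardly ``peel off'' $\Lambda^{-1}$ inside the trace, and a direct application of Von Neumann's inequality does not produce the mixed norm $\|A \Lambda^{-1}\|_2$. The clean estimate relies on both the Frobenius-norm factorization through $A^{1/2}\Lambda^{-1/2}$ and the non-obvious fact that $A\Lambda^{-1}$ is in fact symmetric, so that its operator norm coincides with its spectral radius.
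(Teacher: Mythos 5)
Your proof is correct and shares the paper's overall strategy: compute $\tilde{\mathcal L}^{\text{adv}}(\theta(0))$, show it is strictly below $2\mathrm{Tr}(\Lambda)$ under the stated bound on $\sigma$, and combine this with the monotone decrease of the loss along gradient flow and the observation that $w^V_{22}=0$ forces $\tilde{\mathcal L}^{\text{adv}} = 2\mathrm{Tr}(\Lambda)$. Where you differ is in the core algebraic bound on the cross term $\mathrm{Tr}[A P \Lambda P]$, where $A := \Gamma(M_{\text{train}})\Lambda + \epsilon^2\psi(M_{\text{train}})I_d$ and $P := \Theta\Theta^\top$. The paper rewrites the trace as $\mathrm{Tr}\bigl[(A\Lambda^{-1})\cdot(\Lambda P \Lambda P)\bigr]$ and invokes Von~Neumann's trace inequality (Lemma~\ref{lem:icl:tech-von-trace}), which is precisely what introduces the factor $d$, followed by Frobenius submultiplicativity to bound $\|\Lambda P \Lambda P\|_2 \le \|\Lambda\Theta\|_F^2$. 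You instead factor $A^{1/2}P\Lambda^{1/2} = (A^{1/2}\Lambda^{-1/2})(\Lambda^{1/2}P\Lambda^{1/2})$, apply $\|XY\|_F \le \|X\|_2\|Y\|_F$, and use the non-obvious but correct observation that $A\Lambda^{-1} = \Gamma(M_{\text{train}}) + \epsilon^2\psi(M_{\text{train}})\Lambda^{-1}$ is symmetric PD, so that $\|A^{1/2}\Lambda^{-1/2}\|_2^2 = \|A\Lambda^{-1}\|_2$. This is a genuinely cleaner route and yields the bound without the factor of $d$, which is strictly stronger than what the lemma requires. A second difference is cosmetic: you formulate the argument as a contradiction via the first hitting time $t_*$ with $w^V_{22}(t_*)=0$, which makes the continuity step (from $w^V_{22}(t)\neq 0$ to $w^V_{22}(t)>0$) explicit; the paper leaves this implicit. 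One small redundancy in your write-up: once $w^V_{22}(t_*)=0$, the product $w^V_{22}(t_*)W^{KQ}_{11}(t_*)$ vanishes automatically, so the appeal to Lemma~\ref{lem:icl:init-equal-pms} to deduce $W^{KQ}_{11}(t_*)=0$ is unnecessary.
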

\begin{proof}
According to the simplified AT loss calculated in Lemma~\ref{lem:icl:simplified-surrogate-at-loss}, we know that if $w^V_{22}(t) = 0$, then $\tilde{\mathcal L}^{\text{adv}}(\theta_t) = 2\mathrm{Tr}(\Lambda)$.
Besides, under Assumption~\ref{ass:icl:init}, we have $w^V_{22}(0) = \sigma > 0$.
Therefore, if we can show that $\tilde{\mathcal L}^{\text{adv}}(\theta_t) \neq 2\mathrm{Tr}(\Lambda)$ for any $t\geq 0$, then it is proved that $w^V_{22}(t) > 0$ for any $t \geq 0$.

To this end, we first analyze the surrogate AT loss $\tilde{\mathcal L}^{\text{adv}}(\theta_t)$ at the initial training time $t=0$.
By applying Assumption~\ref{ass:icl:init}, we have
\begin{align}
    &\tilde{\mathcal L}^{\text{adv}}(\theta_0)
    \nonumber\\
    &= 2 \mathrm{Tr}\Bigl[ ( \Gamma(M_{\text{\rm train}}) \Lambda + \epsilon^2 \psi(M_{\text{\rm train}}) I_d ) \cdot ( w^V_{22}(0) W^{KQ}_{11}(0) \Lambda^{\frac{1}{2}}) \cdot ( w^V_{22}(0) W^{KQ}_{11}(0) \Lambda^{\frac{1}{2}})^\top \Bigr]
        \nonumber\\
        &\quad
        - 4 \mathrm{Tr}\Bigl[ (w^V_{22}(0) W^{KQ}_{11}(0) \Lambda^{\frac{1}{2}}) \cdot \Lambda^{\frac{3}{2}} \Bigr]
        + 2 \mathrm{Tr}(\Lambda)
    \nonumber\\
    &= 2 \mathrm{Tr}\Bigl[ ( \Gamma(M_{\text{\rm train}}) \Lambda + \epsilon^2 \psi(M_{\text{\rm train}}) I_d ) \cdot ( \sigma^2 \Theta\Theta^\top \Lambda^{\frac{1}{2}}) \cdot ( \sigma^2 \Theta\Theta^\top \Lambda^{\frac{1}{2}})^\top \Bigr]
        \nonumber\\
        &\quad
        - 4 \mathrm{Tr}\Bigl[ ( \sigma^2 \Theta\Theta^\top \Lambda^{\frac{1}{2}}) \cdot \Lambda^{\frac{3}{2}} \Bigr]
        + 2 \mathrm{Tr}(\Lambda)
    \nonumber\\
    &= 2 \sigma^4 \cdot \mathrm{Tr}\Bigl[ ( \Gamma(M_{\text{\rm train}}) \Lambda + \epsilon^2 \psi(M_{\text{\rm train}}) I_d ) \Lambda^{-1} \cdot \Lambda \Theta\Theta^\top \Lambda \Theta\Theta^\top \Bigr]
        - 4 \sigma^2 \| \Lambda \Theta \|_F^2
        + 2 \mathrm{Tr}(\Lambda)
    \nonumber\\
    &\leq 2 \sigma^4 \cdot \underbrace{ d \cdot \| ( \Gamma(M_{\text{\rm train}}) \Lambda + \epsilon^2 \psi(M_{\text{\rm train}}) I_d ) \Lambda^{-1} \|_2 \cdot \| \Lambda \Theta\Theta^\top \Lambda \Theta\Theta^\top \|_2 }_{\text{by Lemma~\ref{lem:icl:tech-von-trace}}}
        - 4 \sigma^2 \| \Lambda \Theta \|_F^2
        + 2 \mathrm{Tr}(\Lambda)
    \nonumber\\
    &\leq 2 \sigma^4 \cdot d \cdot \| ( \Gamma(M_{\text{\rm train}}) \Lambda + \epsilon^2 \psi(M_{\text{\rm train}}) I_d ) \Lambda^{-1} \|_2 \cdot \| \Lambda \Theta\Theta^\top \Lambda \|_F \cdot \| \Theta\Theta^\top \|_F
        \nonumber\\
        &\quad
        - 4 \sigma^2 \| \Lambda \Theta \|_F^2
        + 2 \mathrm{Tr}(\Lambda)
    \nonumber\\
    &\leq 2 \sigma^4 \cdot d \cdot \| ( \Gamma(M_{\text{\rm train}}) \Lambda + \epsilon^2 \psi(M_{\text{\rm train}}) I_d ) \Lambda^{-1} \|_2 \cdot \| \Lambda \Theta \|_F^2 \cdot 1
        - 4 \sigma^2 \| \Lambda \Theta \|_F^2
        + 2 \mathrm{Tr}(\Lambda)
    \nonumber\\
    &= 2 \sigma^2 \cdot \| \Lambda \Theta \|_F^2 \cdot ( d \cdot \sigma^2 \cdot \| ( \Gamma(M_{\text{\rm train}}) \Lambda + \epsilon^2 \psi(M_{\text{\rm train}}) I_d ) \Lambda^{-1} \|_2 - 2)
        + 2 \mathrm{Tr}(\Lambda).
    \label{eq:icl:proof:wv22-geq-zero:init-loss-upper-bd}
\end{align}
By Assumption~\ref{ass:icl:init}, we have $\|\Lambda \Theta\|_F^2 > 0$.
Thus, when
$( d \cdot \sigma^2 \cdot \| ( \Gamma(M_{\text{\rm train}}) \Lambda + \epsilon^2 \psi(M_{\text{\rm train}}) I_d ) \Lambda^{-1} \|_2 - 2) < 0$,
which is
\begin{align*}
    \sigma
    < \sqrt{ \frac{2}{d \cdot \| ( \Gamma(M_{\text{\rm train}}) \Lambda + \epsilon^2 \psi(M_{\text{\rm train}}) I_d ) \Lambda^{-1} \|_2 } },
\end{align*}
we will have $\tilde{\mathcal L}^{\text{adv}}(\theta_0) < \mathrm{Tr}(\Lambda)$.

Finally, since the surrogate AT loss $\tilde{\mathcal L}^{\text{adv}}(\theta_t)$ is minimized with continuous gradient, thus when the above condition holds, for any $t > 0$, we always have that $\tilde{\mathcal L}^{\text{adv}}(\theta_t) \leq \tilde{\mathcal L}^{\text{adv}}(\theta_0) < \mathrm{Tr}(\Lambda)$.

The proof is completed.
\end{proof}

\begin{lemma}
\label{lem:icl:wv22-lower-bd}
Suppose Assumption~\ref{ass:icl:init} holds and the $\sigma$ in Assumption~\ref{ass:icl:init} satisfies
$\sigma < \sqrt{ \frac{2}{d \cdot \| ( \Gamma(M_{\text{\rm train}}) \Lambda + \epsilon^2 \psi(M_{\text{\rm train}}) I_d ) \Lambda^{-1} \|_2 } }$.
Then, for any continuous training time $t\geq 0$, we have
$(w^V_{22}(t))^2 \geq \nu > 0$,
where
\begin{align*}
   \nu  := \frac{ \sigma^2 \cdot \| \Lambda \Theta \|_F^2 \cdot (2 - d \cdot \sigma^2 \cdot \| ( \Gamma(M_{\text{\rm train}}) \Lambda + \epsilon^2 \psi(M_{\text{\rm train}}) I_d ) \Lambda^{-1} \|_2) }{ 2d \|\Lambda^2\|_2 } > 0.
\end{align*}
\end{lemma}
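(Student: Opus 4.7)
\textbf{Proof plan for Lemma~\ref{lem:icl:wv22-lower-bd}.} The plan is to chain together three ingredients: (i) the fact that the surrogate AT loss is non-increasing along gradient flow, (ii) the identity $\|W^{KQ}_{11}(t)\|_F^2 = (w^V_{22}(t))^2$ with $w^V_{22}(t)>0$ that we already have from Lemmas~\ref{lem:icl:init-equal-pms} and~\ref{lem:icl:wv22-geq-zero}, and (iii) the Von Neumann trace inequality (Lemma~\ref{lem:icl:tech-von-trace}). Together these convert a lower bound on the trace $\mathrm{Tr}[w^V_{22}W^{KQ}_{11}\Lambda^2]$ (extracted from the loss) into the claimed quantitative lower bound on $(w^V_{22}(t))^2$.

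First, I would reuse the initialization computation that already appears in the proof of Lemma~\ref{lem:icl:wv22-geq-zero} to write
\[
\tilde{\mathcal L}^{\text{adv}}(\theta_0) \leq 2\,\mathrm{Tr}(\Lambda) - 2C,
\]
where $C := \sigma^2 \|\Lambda\Theta\|_F^2 \bigl(2 - d\sigma^2 \|(\Gamma(M_{\text{train}})\Lambda + \epsilon^2\psi(M_{\text{train}}) I_d)\Lambda^{-1}\|_2 \bigr)$ is strictly positive by the hypothesis on $\sigma$. Since the surrogate loss is minimized by continuous gradient flow, $\tilde{\mathcal L}^{\text{adv}}(\theta_t) \leq \tilde{\mathcal L}^{\text{adv}}(\theta_0) \leq 2\,\mathrm{Tr}(\Lambda) - 2C$ for every $t \geq 0$.

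Next, I would plug the simplified form from Lemma~\ref{lem:icl:simplified-surrogate-at-loss} into this inequality. Because $\Gamma(M_{\text{train}})\Lambda + \epsilon^2 \psi(M_{\text{train}}) I_d$ is positive definite, the first (quadratic) term in $\tilde{\mathcal L}^{\text{adv}}(\theta_t)$ is non-negative and can be dropped when bounding from below, which gives
\[
-4\,\mathrm{Tr}\bigl[w^V_{22}(t)\,W^{KQ}_{11}(t)\,\Lambda^2\bigr] + 2\,\mathrm{Tr}(\Lambda) \leq 2\,\mathrm{Tr}(\Lambda) - 2C,
\]
hence $\mathrm{Tr}[w^V_{22}(t)\,W^{KQ}_{11}(t)\,\Lambda^2] \geq C/2$. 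This is the key intermediate inequality.

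Finally, I would upper bound the same trace in terms of $w^V_{22}(t)$ alone. Applying Lemma~\ref{lem:icl:tech-von-trace} to $A = W^{KQ}_{11}(t)$ and $B = \Lambda^2$, combined with $\|W^{KQ}_{11}(t)\|_2 \leq \|W^{KQ}_{11}(t)\|_F$, yields
\[
\mathrm{Tr}\bigl[w^V_{22}(t)\,W^{KQ}_{11}(t)\,\Lambda^2\bigr] \leq w^V_{22}(t)\cdot d\,\|W^{KQ}_{11}(t)\|_F\,\|\Lambda^2\|_2 = d\,(w^V_{22}(t))^2\,\|\Lambda^2\|_2,
\]
where the equality uses $\|W^{KQ}_{11}(t)\|_F = w^V_{22}(t) > 0$ from Lemmas~\ref{lem:icl:init-equal-pms} and~\ref{lem:icl:wv22-geq-zero}. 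Combining the two bounds yields $(w^V_{22}(t))^2 \geq C/(2d\,\|\Lambda^2\|_2) = \nu$, exactly the claim. The main subtlety is really only bookkeeping: one must keep the sign of $C$ straight (here the assumption on $\sigma$ is essential) and remember that the positive-definite term in the loss can be thrown away only in the correct direction. No delicate dynamics analysis is required beyond the monotonicity of gradient flow.
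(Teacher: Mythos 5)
Your proposal is correct and follows essentially the same route as the paper's proof: reuse the initialization bound $\tilde{\mathcal L}^{\text{adv}}(\theta_0) \leq 2\mathrm{Tr}(\Lambda) - 2C$ from the proof of Lemma~\ref{lem:icl:wv22-geq-zero}, invoke monotonicity of gradient flow, drop the nonnegative quadratic term of the simplified loss, and then control $\mathrm{Tr}[w^V_{22}W^{KQ}_{11}\Lambda^2]$ via the Von Neumann trace inequality combined with the identity $w^V_{22}(t) = \|W^{KQ}_{11}(t)\|_F > 0$ from Lemmas~\ref{lem:icl:init-equal-pms} and~\ref{lem:icl:wv22-geq-zero}. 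Your presentation—first isolating the intermediate inequality $\mathrm{Tr}[w^V_{22}W^{KQ}_{11}\Lambda^2] \geq C/2$ and then bounding the trace by $d(w^V_{22})^2\|\Lambda^2\|_2$—is merely a slightly cleaner re-arrangement of the same steps, not a different argument.
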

\begin{proof}
By applying Eq.~(\ref{eq:icl:proof:wv22-geq-zero:init-loss-upper-bd}) in Lemma~\ref{lem:icl:wv22-geq-zero}, we have that for any $t \geq 0$,
\begin{align*}
    &2 \sigma^2 \cdot \| \Lambda \Theta \|_F^2 \cdot ( d \cdot \sigma^2 \cdot \| ( \Gamma(M_{\text{\rm train}}) \Lambda + \epsilon^2 \psi(M_{\text{\rm train}}) I_d ) \Lambda^{-1} \|_2 - 2) + 2 \mathrm{Tr}(\Lambda)
    \nonumber\\
    &\geq \tilde{\mathcal L}^{\text{adv}}(\theta_0)
    \geq \tilde{\mathcal L}^{\text{adv}}(\theta_t)
    \nonumber \\
    &= 2 \mathrm{Tr}\Bigl[ ( \Gamma(M_{\text{\rm train}}) \Lambda + \epsilon^2 \psi(M_{\text{\rm train}}) I_d ) \cdot ( w^V_{22} W^{KQ}_{11} \Lambda^{\frac{1}{2}}) \cdot ( w^V_{22} W^{KQ}_{11} \Lambda^{\frac{1}{2}})^\top \Bigr]
        \nonumber\\
        &\quad
        - 4 \mathrm{Tr}\Bigl[ (w^V_{22} W^{KQ}_{11} \Lambda^{\frac{1}{2}}) \cdot \Lambda^{\frac{3}{2}} \Bigr]
        + 2 \mathrm{Tr}(\Lambda)
    \nonumber \\
    &= 2 \| ( \Gamma(M_{\text{\rm train}}) \Lambda + \epsilon^2 \psi(M_{\text{\rm train}}) I_d )^{\frac{1}{2}} \cdot ( w^V_{22} W^{KQ}_{11} \Lambda^{\frac{1}{2}}) \|_F^2
        - 4 \mathrm{Tr}\Bigl[ w^V_{22} W^{KQ}_{11} \Lambda^{2} \Bigr]
        + 2 \mathrm{Tr}(\Lambda)
    \nonumber \\
    &\geq 0 - \underbrace{ 4d \cdot |w^V_{22}| \cdot \|\Lambda^2\|_2 \cdot \|W^{KQ}_{11}\|_2 }_{\text{by Lemma~\ref{lem:icl:tech-von-trace}}}
        + 2 \mathrm{Tr}(\Lambda),
\end{align*}
which indicates
\begin{align*}
    &2 \sigma^2 \cdot \| \Lambda \Theta \|_F^2 \cdot ( d \cdot \sigma^2 \cdot \| ( \Gamma(M_{\text{\rm train}}) \Lambda + \epsilon^2 \psi(M_{\text{\rm train}}) I_d ) \Lambda^{-1} \|_2 - 2)
    \geq  -4d \cdot |w^V_{22}| \cdot \|\Lambda^2\|_2 \cdot \|W^{KQ}_{11}\|_F,
\end{align*}
thus
\begin{align}
    |w^V_{22}| \cdot \|W^{KQ}_{11}\|_F
    \geq \frac{ \sigma^2 \cdot \| \Lambda \Theta \|_F^2 \cdot (2 - d \cdot \sigma^2 \cdot \| ( \Gamma(M_{\text{\rm train}}) \Lambda + \epsilon^2 \psi(M_{\text{\rm train}}) I_d ) \Lambda^{-1} \|_2) }{ 2d \|\Lambda^2\|_2 }.
    \label{eq:icl:proof:wv22-lower-bd:eq1}
\end{align}
Besides, by combining Lemma~\ref{lem:icl:init-equal-pms} and Lemma~\ref{lem:icl:wv22-geq-zero}, we know that
\begin{align}
    w^V_{22}(t)
    = \sqrt{ \mathrm{Tr}[ W^{KQ}_{11}(t) (W^{KQ}_{11}(t))^\top ] }
    = \sqrt{ \| W^{KQ}_{11}(t) \|_F^2 }
    = \| W^{KQ}_{11}(t) \|_F.
    \label{eq:icl:proof:wv22-lower-bd:eq2}
\end{align}
Finally, inserting Eq.~(\ref{eq:icl:proof:wv22-lower-bd:eq2}) into Eq.~(\ref{eq:icl:proof:wv22-lower-bd:eq1}), we thus have
\begin{align*}
    (w^V_{22}(t))^2
    \geq \frac{ \sigma^2 \cdot \| \Lambda \Theta \|_F^2 \cdot (2 - d \cdot \sigma^2 \cdot \| ( \Gamma(M_{\text{\rm train}}) \Lambda + \epsilon^2 \psi(M_{\text{\rm train}}) I_d ) \Lambda^{-1} \|_2) }{ 2d \|\Lambda^2\|_2 } > 0.
\end{align*}

The proof is completed.
\end{proof}

\begin{lemma}[PL-inequality]
\label{lem:icl:pl-inequality}
Suppose Assumption~\ref{ass:icl:init} holds and the LSA model $f_{\text{\rm LSA},\theta}$ is trained via minimizing the surrogate AT loss $\tilde{\mathcal L}^{\text{\rm adv}}(\theta)$ in Eq.~(\ref{eq:icl:surrogate-at-loss}) with continuous training flow.
Suppose the $\sigma$ in Assumption~\ref{ass:icl:init} satisfies
$\sigma < \sqrt{ \frac{2}{d \cdot \| ( \Gamma(M_{\text{\rm train}}) \Lambda + \epsilon^2 \psi(M_{\text{\rm train}}) I_d ) \Lambda^{-1} \|_2 } }$.
Then for any continuous training time $t > 0$, we uniformly have that
\begin{align*}
    \| \partial_{\theta} \tilde{\mathcal L}^{\text{\rm adv}}(\theta_t) \|_2^2
    \geq \mu
        \cdot \Bigl(\tilde{\mathcal L}^{\text{adv}}(\theta_t) - \min_{\theta} \tilde{\mathcal L}^{\text{adv}}(\theta)\Bigr),
\end{align*}
where
\begin{align*}
    \mu := \frac{8 \nu}{ \| ( \Gamma_{\text{\rm train}} \Lambda + \epsilon^2 \psi_{\text{\rm train}} I_d )^{-\frac{1}{2}} \|_F^2 \cdot \| \Lambda^{-\frac{1}{2}} \|_F^2 },
\end{align*}
$\nu$ is defined in Lemma~\ref{lem:icl:wv22-lower-bd},
and $\mathrm{Vec}(\cdot)$ denotes the vectorization function.
\end{lemma}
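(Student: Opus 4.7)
The plan is to exploit the fact that Lemma~\ref{lem:icl:simplified-surrogate-at-loss} together with the ``complete the square'' calculation already carried out in the proof of Lemma~\ref{lem:icl:surrogate-at-minimizer} expresses the surrogate loss as a quadratic in a single matrix variable, so that both the loss suboptimality and the gradient norm admit clean closed forms in that variable. Write $G := \Gamma(M_{\text{train}})\Lambda + \epsilon^2 \psi(M_{\text{train}}) I_d$ and, motivated by Lemma~\ref{lem:icl:surrogate-at-minimizer}, introduce the ``error matrix''
\begin{align*}
U \; := \; w^V_{22} W^{KQ}_{11} \Lambda^{\tfrac{1}{2}} \; - \; G^{-1} \Lambda^{\tfrac{3}{2}}.
\end{align*}
From Eq.~(\ref{eq:icl:proof:surrogate-at-minimizer:rewrite}) the first step is to read off
$\tilde{\mathcal L}^{\text{adv}}(\theta) - \min_{\theta}\tilde{\mathcal L}^{\text{adv}}(\theta) = 2\,\|G^{1/2} U\|_F^2$.

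Next I would compute $\partial_{W^{KQ}_{11}}\tilde{\mathcal L}^{\text{adv}}$ by reusing Eq.~(\ref{eq:icl:proof:init-equal-pms:diff-wkq11}), which reads $-\partial_{W^{KQ}_{11}}\tilde{\mathcal L}^{\text{adv}} = -4(w^V_{22})^2 G W^{KQ}_{11}\Lambda + 4 w^V_{22}\Lambda^2$. The key algebraic observation is the factorisation
\begin{align*}
(w^V_{22})^2 G W^{KQ}_{11}\Lambda - w^V_{22}\Lambda^2 \; = \; w^V_{22}\, G\,\bigl( w^V_{22} W^{KQ}_{11}\Lambda^{\tfrac{1}{2}} - G^{-1}\Lambda^{\tfrac{3}{2}} \bigr)\, \Lambda^{\tfrac{1}{2}} \; = \; w^V_{22}\, G\, U\, \Lambda^{\tfrac{1}{2}},
\end{align*}
so that $\|\partial_{W^{KQ}_{11}}\tilde{\mathcal L}^{\text{adv}}\|_F^2 = 16 (w^V_{22})^2 \|G U \Lambda^{1/2}\|_F^2$. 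Since $\|\partial_\theta \tilde{\mathcal L}^{\text{adv}}\|_2^2 \geq \|\partial_{W^{KQ}_{11}}\tilde{\mathcal L}^{\text{adv}}\|_F^2$, it is enough to lower bound this single block.

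The main obstacle is the step that converts $\|G U \Lambda^{1/2}\|_F^2$ into something proportional to $\|G^{1/2} U\|_F^2$. Writing $V := G^{1/2} U$, I have $\|G U \Lambda^{1/2}\|_F^2 = \mathrm{Tr}(V^\top G V \Lambda)$. Using that $G \succeq \sigma_{\min}(G) I_d$ and $\Lambda \succeq \sigma_{\min}(\Lambda) I_d$ gives $\mathrm{Tr}(V^\top G V \Lambda) \geq \sigma_{\min}(G)\sigma_{\min}(\Lambda)\|V\|_F^2$, and then I convert the spectral lower bounds to Frobenius ones via the elementary inequalities $\sigma_{\min}(G) = \|G^{-1/2}\|_2^{-2} \geq \|G^{-1/2}\|_F^{-2}$ and similarly for $\Lambda$. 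Combining these,
\begin{align*}
\|\partial_{W^{KQ}_{11}}\tilde{\mathcal L}^{\text{adv}}\|_F^2 \; \geq \; \frac{16 (w^V_{22})^2}{\|G^{-1/2}\|_F^2\,\|\Lambda^{-1/2}\|_F^2}\, \|G^{1/2} U\|_F^2 \; = \; \frac{8 (w^V_{22})^2}{\|G^{-1/2}\|_F^2\,\|\Lambda^{-1/2}\|_F^2}\bigl(\tilde{\mathcal L}^{\text{adv}}(\theta) - \min_\theta \tilde{\mathcal L}^{\text{adv}}(\theta)\bigr).
\end{align*}

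Finally, I plug in the time-uniform lower bound $(w^V_{22}(t))^2 \geq \nu$ from Lemma~\ref{lem:icl:wv22-lower-bd}, which is precisely where the hypothesis on $\sigma$ is used, to obtain the PL-inequality with the stated constant $\mu = 8\nu / (\|G^{-1/2}\|_F^2 \|\Lambda^{-1/2}\|_F^2)$. The only subtlety is verifying that the expression $G = \Gamma_{\text{train}}\Lambda + \epsilon^2\psi_{\text{train}} I_d$ is positive definite (so that $G^{1/2}$ and $G^{-1/2}$ are well-defined), but this is immediate because $\Lambda$ is positive definite and $\Gamma_{\text{train}} \succeq \Lambda \succ 0$, and $\epsilon^2 \psi_{\text{train}} I_d \succeq 0$.
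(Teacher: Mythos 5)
Your proposal is correct and matches the paper's proof in every structural step: identify the error matrix $D(\theta_t) = w^V_{22} W^{KQ}_{11}\Lambda^{1/2} - G^{-1}\Lambda^{3/2}$ (your $U$) from the completed-square form in Lemma~\ref{lem:icl:surrogate-at-minimizer}, drop the $w^V_{22}$-block of the gradient, factor $\partial_{W^{KQ}_{11}}\tilde{\mathcal L}^{\text{adv}} = 4 w^V_{22}\, G U \Lambda^{1/2}$, invoke Lemma~\ref{lem:icl:wv22-lower-bd} for $(w^V_{22})^2 \geq \nu$, and compare $\|G U \Lambda^{1/2}\|_F^2$ with the suboptimality $2\|G^{1/2}U\|_F^2$. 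The only place you deviate is the final comparison: the paper writes $G^{1/2}D = G^{-1/2}(G D \Lambda^{1/2})\Lambda^{-1/2}$ and applies Frobenius submultiplicativity, whereas you lower-bound $\mathrm{Tr}(V^\top G V \Lambda)$ via $\sigma_{\min}(G)\sigma_{\min}(\Lambda)\|V\|_F^2$ and then pass from smallest singular values to $\|\cdot\|_F^{-2}$; both routes yield the identical constant $\mu$, so this is a cosmetic difference rather than a genuinely different method.
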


\begin{proof}
From Eq.~(\ref{eq:icl:proof:init-equal-pms:diff-wkq11}) in Lemma~\ref{lem:icl:init-equal-pms}, we have that
\begin{align*}
    &\partial_t W^{KQ}_{11}(t)
    = -4 \cdot (w^{V}_{22})^2 \cdot ( \Gamma(M_{\text{\rm train}}) \Lambda + \epsilon^2 \psi(M_{\text{\rm train}}) I_d ) \cdot W^{KQ}_{11} \cdot \Lambda
        + 4 w^V_{22} \Lambda^2
    \nonumber\\
    &= -4 w^V_{22} \cdot ( \Gamma(M_{\text{\rm train}}) \Lambda + \epsilon^2 \psi(M_{\text{\rm train}}) I_d ) \cdot D(\theta_t) \cdot \Lambda^{\frac{1}{2}},
\end{align*}
where
\begin{align}
    D(\theta_t) := \Bigl( w^{V}_{22} W^{KQ}_{11} \Lambda^{\frac{1}{2}} - ( \Gamma(M_{\text{\rm train}}) \Lambda + \epsilon^2 \psi(M_{\text{\rm train}}) I_d )^{-1} \Lambda^{\frac{3}{2}} \Bigr) \in \mathbb{R}^{d\times d}.
\end{align}
As a result, the gradient norm square $ \| \partial_{\theta} \tilde{\mathcal L}^{\text{\rm adv}}(\theta_t) \|_2^2$ can be further lower-bounded as follows,
\begin{align}
    &\| \partial_{\theta} \tilde{\mathcal L}^{\text{\rm adv}}(\theta_t) \|_2^2
    :=  ( \partial_{w^V_{22}} \tilde{\mathcal L}^{\text{\rm adv}}(\theta_t) )^2
        + \| \partial_{W^{KQ}_{11}} \tilde{\mathcal L}^{\text{\rm adv}}(\theta_t) \|_F^2
    \nonumber\\
    &\geq \| \partial_{W^{KQ}_{11}} \tilde{\mathcal L}^{\text{\rm adv}}(\theta_t) \|_F^2
    \nonumber\\
    &= \| 4 \cdot w^{V}_{22} \cdot ( \Gamma(M_{\text{\rm train}}) \Lambda + \epsilon^2 \psi(M_{\text{\rm train}}) I_d ) \cdot D(\theta_t) \cdot \Lambda^{\frac{1}{2}} \|_F^2
    \nonumber\\
    &= 16 \cdot (w^{V}_{22})^2 \cdot \| ( \Gamma(M_{\text{\rm train}}) \Lambda + \epsilon^2 \psi(M_{\text{\rm train}}) I_d ) \cdot D(\theta_t) \cdot \Lambda^{\frac{1}{2}} \|_F^2
    \nonumber\\
    &\geq \underbrace{ 16 \cdot \nu }_{\text{by Lemma~\ref{lem:icl:wv22-lower-bd}}} \cdot \| ( \Gamma(M_{\text{\rm train}}) \Lambda + \epsilon^2 \psi(M_{\text{\rm train}}) I_d ) \cdot D(\theta_t) \cdot \Lambda^{\frac{1}{2}} \|_F^2,
    \label{eq:icl:proof:pl-inequality:gd-lower-bd-v1}
\end{align}
where $\nu > 0$ is defined in Lemma~\ref{lem:icl:wv22-lower-bd}.

Meanwhile, according to the proof of Lemma~\ref{lem:icl:surrogate-at-minimizer}, we can rewrite and upper-bound 
$\Bigl(\tilde{\mathcal L}^{\text{adv}}(\theta_t) - \min_{\theta} \tilde{\mathcal L}^{\text{adv}}(\theta)\Bigr)$
as follows,
\begin{align}
    &\Bigl(\tilde{\mathcal L}^{\text{adv}}(\theta_t) - \min_{\theta} \tilde{\mathcal L}^{\text{adv}}(\theta)\Bigr)
    \nonumber\\
    &= 2\cdot \mathrm{Tr}\Bigl[ ( \Gamma_{\text{train}} \Lambda + \epsilon^2 \psi_{\text{train}} I_d ) \cdot \Bigl( w^V_{22} W^{KQ}_{11} \Lambda^{\frac{1}{2}} - ( \Gamma_{\text{train}} \Lambda + \epsilon^2 \psi_{\text{train}} I_d )^{-1} \Lambda^{\frac{3}{2}} \Bigr)
        \nonumber\\
        & \ \qquad\qquad
        \cdot \Bigl( w^V_{22} W^{KQ}_{11} \Lambda^{\frac{1}{2}} - ( \Gamma_{\text{train}} \Lambda + \epsilon^2 \psi_{\text{train}} I_d )^{-1} \Lambda^{\frac{3}{2}} \Bigr)^\top \Bigr]
    \nonumber\\
    &= 2 \cdot \mathrm{Tr}[ ( \Gamma_{\text{train}} \Lambda + \epsilon^2 \psi_{\text{train}} I_d ) \cdot D(\theta_t) \cdot D(\theta_t)^\top ]
    \nonumber\\
    &= 2 \cdot \underbrace{ \mathrm{Tr}[ ( \Gamma_{\text{train}} \Lambda + \epsilon^2 \psi_{\text{train}} I_d )^{\frac{1}{2}} \cdot D(\theta_t) \cdot D(\theta_t)^\top \cdot ( \Gamma_{\text{train}} \Lambda + \epsilon^2 \psi_{\text{train}} I_d )^{\frac{1}{2}} ] }_{\text{Lemma~\ref{lem:icl:tech:mat-permute}}}
    \nonumber\\
    &= 2 \cdot \| ( \Gamma_{\text{train}} \Lambda + \epsilon^2 \psi_{\text{train}} I_d )^{\frac{1}{2}} \cdot D(\theta_t) \|_F^2
    \nonumber\\
    &\leq 2 \cdot \| ( \Gamma_{\text{train}} \Lambda + \epsilon^2 \psi_{\text{train}} I_d )^{-\frac{1}{2}} \|_F^2 \cdot \| \Lambda^{-\frac{1}{2}} \|_F^2 \cdot \| ( \Gamma_{\text{train}} \Lambda + \epsilon^2 \psi_{\text{train}} I_d ) \cdot D(\theta_t) \cdot \Lambda^{\frac{1}{2}} \|_F^2,
    \label{eq:icl:proof:pl-inequality:loss-upper-bd}
\end{align}
where $\Gamma_{\text{train}} := \Gamma(M_{\text{train}})$ and $\psi_{\text{train}} := \psi(M_{\text{train}})$.

Combining Eqs.~(\ref{eq:icl:proof:pl-inequality:gd-lower-bd-v1}) and~(\ref{eq:icl:proof:pl-inequality:loss-upper-bd}), we thus know that
\begin{align*}
    &\| \partial_{\theta} \tilde{\mathcal L}^{\text{\rm adv}}(\theta_t) \|_2^2
    \geq \frac{8 \nu}{ \| ( \Gamma_{\text{train}} \Lambda + \epsilon^2 \psi_{\text{train}} I_d )^{-\frac{1}{2}} \|_F^2 \cdot \| \Lambda^{-\frac{1}{2}} \|_F^2 }
        \cdot \Bigl(\tilde{\mathcal L}^{\text{adv}}(\theta_t) - \min_{\theta} \tilde{\mathcal L}^{\text{adv}}(\theta)\Bigr).
\end{align*}

The proof is completed.
\end{proof}

Finally, we prove Theorem~\ref{thm:icl:closed-form-at} based on Lemma~\ref{lem:icl:surrogate-at-minimizer} and Lemma~\ref{lem:icl:pl-inequality}.

\begin{proof}[Proof of Theorem~\ref{thm:icl:closed-form-at}]
When all the conditions hold, when the surrogate AT problem defined in Eq.~(\ref{eq:icl:surrogate-at-loss}) is solved via continuous gradient flow, by Lemma~\ref{lem:icl:surrogate-at-minimizer} we have
\begin{align*}
    &\partial_t \Bigl(\tilde{\mathcal L}^{\text{adv}}(\theta_t) - \min_{\theta} \tilde{\mathcal L}^{\text{adv}}(\theta)\Bigr)
    = \partial_{\theta} \tilde{\mathcal L}^{\text{adv}}(\theta_t) \cdot \partial_{t}\theta_t
    = \partial_{\theta} \tilde{\mathcal L}^{\text{adv}}(\theta_t) \cdot (- \partial_{\theta}^\top \tilde{\mathcal L}^{\text{adv}}(\theta_t))
    = -\| \partial_{\theta} \tilde{\mathcal L}^{\text{adv}}(\theta_t) \|_2^2
    \nonumber\\
    &\leq -\mu \cdot \Bigl(\tilde{\mathcal L}^{\text{adv}}(\theta_t) - \min_{\theta} \tilde{\mathcal L}^{\text{adv}}(\theta)\Bigr),
\end{align*}
which means
\begin{align*}
    \Bigl(\tilde{\mathcal L}^{\text{adv}}(\theta_t) - \min_{\theta} \tilde{\mathcal L}^{\text{adv}}(\theta)\Bigr) \leq \Bigl(\tilde{\mathcal L}^{\text{adv}}(\theta_0) - \min_{\theta} \tilde{\mathcal L}^{\text{adv}}(\theta)\Bigr) \cdot e^{-\mu t}.
\end{align*}
As a result, when performing continuous gradient flow optimization for an infinitely long time, since $\mu > 0$, the surrogate AT loss will eventually converge to the global minima, {\it i.e.},
\begin{align*}
    &\Bigl( \tilde{\mathcal L}^{\text{adv}}(\theta_*) - \min_{\theta} \tilde{\mathcal L}^{\text{adv}}(\theta) \Bigr)
    = \lim_{t \rightarrow \infty} \Bigl(\tilde{\mathcal L}^{\text{adv}}(\theta_t) - \min_{\theta} \tilde{\mathcal L}^{\text{adv}}(\theta)\Bigr)
    % \nonumber\\
    \leq \Bigl(\tilde{\mathcal L}^{\text{adv}}(\theta_0) - \min_{\theta} \tilde{\mathcal L}^{\text{adv}}(\theta)\Bigr) \cdot \lim_{t\rightarrow\infty} e^{-\mu t}
    = 0,
\end{align*}
where $\theta_* := \lim_{t\rightarrow\infty} \theta_t$ is the converged model parameter.
Meanwhile, from Lemma~\ref{lem:icl:surrogate-at-minimizer}, we know that $\theta_*$ is a global minimizer if and only if
$w^V_{*,22} W^{KQ}_{*,11} = ( \Gamma(M_{\text{train}}) \Lambda + \epsilon^2 \psi(M_{\text{train}}) I_d )^{-1} \Lambda$,
which completes the proof.
\end{proof}

\subsection{Proofs in Section~\ref{sec:robust-gen}}
\label{app:proof:sec-robust-gen}

This section collects all proofs that omitted from Section~\ref{sec:robust-gen}.

\begin{proof}[Proof of Theorem~\ref{thm:icl:robust-gen-bound}]
By substituting all $M_{\text{train}}$ with $M_{\text{test}}$ in proofs of Proposition~\ref{prop:icl:surrogate-bound} and Lemma~\ref{lem:icl:simplified-surrogate-at-loss}, we immediately have that for any model parameter $\theta$ of the LSA model $f_{\text{LSA},\theta}$,
\begin{align*}
    \mathcal{R}(\theta, M_{\text{test}})
    &\leq 2 \mathrm{Tr}\Bigl[ ( \Gamma(M_{\text{test}}) \Lambda + \epsilon^2 \psi(M_{\text{test}}) I_d ) \cdot ( w^V_{22} W^{KQ}_{11} \Lambda^{\frac{1}{2}}) \cdot ( w^V_{22} W^{KQ}_{11} \Lambda^{\frac{1}{2}})^\top \Bigr]
        \nonumber\\
        &\quad
        - 4 \mathrm{Tr}\Bigl[ (w^V_{22} W^{KQ}_{11} \Lambda^{\frac{1}{2}}) \cdot \Lambda^{\frac{3}{2}} \Bigr]
        + 2 \mathrm{Tr}(\Lambda).
\end{align*}
By inserting the converged model parameter $\theta_*(M_{\text{train}})$, which satisfies $(w^V_{*,22} W^{KQ}_{*,11}) = ( \Gamma(M_{\text{train}}) \Lambda + \epsilon^2 \psi(M_{\text{train}}) I_d )^{-1} \Lambda$, into the above robust generalization bound, we thus have that
\begin{align*}
    &\mathcal{R}(\theta_*(M_{\text{train}}), M_{\text{test}})
    \nonumber\\
    &\leq 2 \mathrm{Tr}\Bigl[ ( \Gamma(M_{\text{test}}) \Lambda + \epsilon^2 \psi(M_{\text{test}}) I_d ) \cdot ( ( \Gamma(M_{\text{train}}) \Lambda + \epsilon^2 \psi(M_{\text{train}}) I_d )^{-1} \Lambda \cdot \Lambda^{\frac{1}{2}})
        \nonumber\\
        &\qquad\qquad
        \cdot (( \Gamma(M_{\text{train}}) \Lambda + \epsilon^2 \psi(M_{\text{train}}) I_d )^{-1} \Lambda \cdot \Lambda^{\frac{1}{2}})^\top \Bigr]
    \nonumber\\
    &\quad
        - 4 \mathrm{Tr}\Bigl[ ( \Gamma(M_{\text{train}}) \Lambda + \epsilon^2 \psi(M_{\text{train}}) I_d )^{-1} \Lambda \cdot \Lambda^{\frac{1}{2}} \cdot \Lambda^{\frac{3}{2}} \Bigr]
        + 2 \mathrm{Tr}(\Lambda)
    \nonumber\\
    &\overset{(*)}{\leq} 2 \mathrm{Tr}\Bigl[ ( \Gamma(M_{\text{test}}) \Lambda + \epsilon^2 \psi(M_{\text{test}}) I_d ) \cdot ( \Gamma(M_{\text{train}}) \Lambda + \epsilon^2 \psi(M_{\text{train}}) I_d )^{-1}
        \nonumber\\
        &\qquad\qquad
        \cdot \Lambda^3 \cdot (( \Gamma(M_{\text{train}}) \Lambda + \epsilon^2 \psi(M_{\text{train}}) I_d )^{-1})^\top \Bigr]
        + 0 + 2\mathrm{Tr}(\Lambda)
    \nonumber\\
    &\overset{(**)}{\leq} 2 \mathrm{Tr}\Bigl[ \Lambda^3 \cdot ( \Gamma(M_{\text{test}}) \Lambda + \epsilon^2 \psi(M_{\text{test}}) I_d ) \cdot ( \Gamma(M_{\text{train}}) \Lambda + \epsilon^2 \psi(M_{\text{train}}) I_d )^{-2} \Bigr]
        + 2\mathrm{Tr}(\Lambda),
\end{align*}
where $(*)$ is due to that the matrix $(( \Gamma(M_{\text{train}}) \Lambda + \epsilon^2 \psi(M_{\text{train}}) I_d )^{-1} \Lambda^{3})$ is positive definite, and $(**)$ is due to that: (1) $( \Gamma(M_{\text{train}}) \Lambda + \epsilon^2 \psi(M_{\text{train}}) I_d )^{-1}$ is symmetric and is commutative with $\Lambda^3$, and (2) Lemma~\ref{lem:icl:tech:mat-permute}.

The proof is completed.
\end{proof}

\begin{proof}[Proof of Corollary~\ref{cor:icl:ord-robust-gen-bound}]
Let $\lambda_1, \cdots, \lambda_d$ be the $d$ singular values of the matrix $\Lambda$.
Then, the robust generalization bound in Theorem~\ref{thm:icl:robust-gen-bound} can be rewritten as follows,
\begin{align*}
    &\mathcal{R}(\theta_*(M_{\text{train}}), M_{\text{test}})
    \nonumber\\
    &\leq 2 \mathrm{Tr}\Bigl[ \Lambda^3 \cdot ( \Gamma(M_{\text{test}}) \Lambda + \epsilon^2 \psi(M_{\text{test}}) I_d ) \cdot ( \Gamma(M_{\text{train}}) \Lambda + \epsilon^2 \psi(M_{\text{train}}) I_d )^{-2} \Bigr] + 2\mathrm{Tr}(\Lambda),
    \nonumber\\
    &\leq \sum_{i=1}^d \lambda_i^3 \cdot \frac{ \frac{ N+M_{\text{test}}+1 }{ N+M_{\text{test}} } \lambda_i + \frac{ \mathrm{Tr}(\Lambda) }{ N+M_{\text{test}} } + \epsilon^2 \cdot \frac{ M_{\text{test}}^2 \mathrm{Tr}(\Lambda) }{ (N+M_{\text{test}})^2 } }{ \Bigl( \frac{ N+M_{\text{train}}+1 }{ N+M_{\text{train}} } \lambda_i + \frac{ \mathrm{Tr}(\Lambda) }{ N+M_{\text{train}} } + \epsilon^2 \cdot \frac{ M_{\text{train}}^2 \mathrm{Tr}(\Lambda) }{ (N+M_{\text{train}})^2 } \Bigr)^2 } + 2 \mathrm{Tr}(\Lambda)
    \nonumber\\
    &\leq \sum_{i=1}^d \lambda_i^3 \cdot \frac{ \frac{ N+M_{\text{test}}+1 }{ N+M_{\text{test}} } \lambda_i + \frac{ \mathrm{Tr}(\Lambda) }{ N+M_{\text{test}} } }{ \Bigl( \frac{ N+M_{\text{train}}+1 }{ N+M_{\text{train}} } \lambda_i \Bigr)^2 }
        + \sum_{i=1}^d \lambda_i^3 \cdot \frac{ \epsilon^2 \cdot \frac{ M_{\text{test}}^2 \mathrm{Tr}(\Lambda) }{ (N+M_{\text{test}})^2 } }{ \Bigl( \epsilon^2 \cdot \frac{ M_{\text{train}}^2 \mathrm{Tr}(\Lambda) }{ (N+M_{\text{train}})^2 } \Bigr)^2 }
        + 2 \mathrm{Tr}(\Lambda)
    \nonumber\\
    &\leq \sum_{i=1}^d \lambda_i \cdot \left( \frac{ N+M_{\text{train}} }{ N+M_{\text{train}} + 1 } \right)^2 \cdot \left( \frac{ N+M_{\text{test}}+1 }{ N+M_{\text{test}} } \lambda_i + \frac{ \sum_{k=1}^d \lambda_k }{ N } \right)
        \nonumber\\
        &\quad
        + \sum_{i=1}^d \frac{\lambda_i^3}{\epsilon^2 \cdot \max_{k=1}^d \{\lambda_k\} } \cdot \frac{ (N+M_{\text{train}})^4 }{N^2} \cdot \frac{ M_{\text{test}}^2 }{ M_{\text{train}}^4 }
        + 2 \sum_{i=1}^d \lambda_i
    \nonumber\\
    &\leq \mathcal{O}(d) \cdot \mathcal{O}(1) \cdot \left( \mathcal{O}(1) + \frac{ \mathcal{O}(d) }{ N } \right)
        + \mathcal{O}(d) \cdot \mathcal{O} \left( \frac{1}{\epsilon^2} \right) \cdot \frac{ (N+M_{\text{train}})^4 }{N^2} \cdot \frac{ M_{\text{test}}^2 }{ M_{\text{train}}^4 }
        + \mathcal{O}(d)
    \nonumber\\
    &\leq \mathcal{O}(d) + \mathcal{O}\left( \frac{d^2}{N} \right)
        + \mathcal{O}\left( \frac{d}{\epsilon^2} \right) \cdot \frac{ (N+M_{\text{train}})^4 }{N^2} \cdot \frac{ M_{\text{test}}^2 }{ M_{\text{train}}^4 }.
\end{align*}
Then, by applying Assumption~\ref{ass:icl:length-and-epsilon}, we further have that
\begin{align*}
    &\mathcal{R}(\theta_*(M_{\text{train}}), M_{\text{test}})
    \leq \mathcal{O}(d) + \mathcal{O}\left( \frac{d^2}{N} \right)
        + \mathcal{O}\left( \frac{d}{\epsilon^2} \right) \cdot \frac{ (N+M_{\text{train}})^4 }{N^2} \cdot \frac{ M_{\text{test}}^2 }{ M_{\text{train}}^4 }
    \nonumber\\
    &\leq \mathcal{O}(d) + \mathcal{O}\left( \frac{d^2}{N} \right)
        + \mathcal{O}\left( \frac{d}{(\sqrt{d})^2} \right) \cdot \frac{ ( N + O(N) )^4 }{N^2} \cdot \frac{ M_{\text{test}}^2 }{ M_{\text{train}}^4 }
    \nonumber\\
    &= \mathcal{O}(d) + \mathcal{O}\left( \frac{d^2}{N} \right)
        + \mathcal{O}\left(N^2 \cdot \frac{ M_{\text{test}}^2 }{ M_{\text{train}}^4 }\right),
\end{align*}
which completes the proof.
\end{proof}

\section{Additional experimental details}
\label{app:exp}

This section collects experimental details omitted from Section~\ref{sec:at-exp}.

\subsection{Jailbreak attacks}
\label{app:exp:jailbreak}

Our experiments leverage both suffix and non-suffix jailbreak attacks.
Specifically, four suffix jailbreak attacks are adopted, which are GCG~\citep{zou2023universal}, BEAST~\citep{sadasivan2024fast}, AmpleGCG~\citep{liao2024amplegcg}, and Zhu's AutoDAN~\citep{zhu2024autodan}.
Meanwhile, two non-suffix jailbreak attacks are adopted, which are PAIR~\citep{chao2023jailbreaking} and DeepInception~\citep{li2023deepinception}.
We re-implemented all attacks except AmpleGCG by ourselves to enable fast batching operations during jailbreak, which can thus improve the efficiency of AT.
Besides, other than the adversarial suffix length, we will also tune the following hyperparameters of jailbreak attacks:
\begin{itemize}
\item
\textbf{GCG:}
According to Algorithm~1 in \citet{zou2023universal}, hyperparameters that we need to tune for GCG include the iteration number $T$, the top-k parameter $k$, and the ``batch-size'' $B$.

\item
\textbf{BEAST:}
According to Algorithm~1 in \citet{sadasivan2024fast}, hyperparameters that we need to tune for BEAST are two beam-search parameters $k_1$ and $k_2$.

\item
\textbf{AmpleGCG:}
According to \citet{liao2024amplegcg}, AmpleGCG is an algorithm for training adversarial suffix generators.
Our experiments adopt the adversarial suffix generator \texttt{AmpleGCG-plus-llama2-sourced-vicuna-7b13b-guanaco-7b13b}~\footnote{\url{https://huggingface.co/osunlp/AmpleGCG-plus-llama2-sourced-vicuna-7b13b-guanaco-7b13b}}, which is officially released by \citet{liao2024amplegcg}.

\item
\textbf{Zhu's AutoDAN:}
According to Algorithm~1 and Algorithm~2 in \citet{zhu2024autodan}, hyperparameters that we need to tune for Zhu's AutoDAN are the iteration number $T$ in each step, objective weights $w_1$ and $w_2$, the top-$B$ parameter $B$, and the temperature $\tau$.

\item
\textbf{GCQ:}
According to Algorithm~1 in \citet{hayase2024querybased}, hyperparameters that we need to tune for GCQ include the iteration number $T$, the proxy batch size $b_p$ , the query batch size $b_q$, and the buffer size $B$.

\item
\textbf{PAIR:}
According to \citet{chao2023jailbreaking}, PAIR adopts LLM-based attacker and judger to iteratively synthesize and refine jailbreak prompts.
As a result, one needs to set the base models for the attacker and judger and the number of teratively refining for the PAIR attack.

\item
\textbf{DeepInception:}
According to \citet{li2023deepinception}, DeepInception attack uses manually crafted jailbreak prompts to attack targeted LLMs.
We adopt the role play-based prompt from \citet{li2023deepinception} to perform the attack.
No other hyperparameter need to be tuned for the DeepInception attack.

\end{itemize}

\subsection{Model training}
\label{app:exp:training}

\textbf{Jailbreak attacks during AT.}
We use GCG to search adversarial prompts during AT.
The adversarial suffix token length is fixed to one of $\{5,10,20,30,40,50\}$ during the overall AT.
For other hyparameters described in Appendix~\ref{app:exp:jailbreak}, we set $T$ as $150$, $k$ as $256$, and $B$ as $64$.

\textbf{Benign answer $y^{(b)}$ for the safety dataset $D^{(h)}$.}
We adopt four benign answers for the safety data during AT, which are:
\begin{itemize}
\item
``As a responsible AI assistant, I cannot generate content that promotes illegal activities or violence.''
\item
``Sorry, but I can't do that. Please try other instructions that are safe and responsible.''
\item
``I cannot fulfill your request. I am designed to provide safe and helpful information.''
\item
``This is a topic I am not permitted to discuss since it contain illegal or violent information.''
\end{itemize}

\textbf{LoRA setting.}
We use the PEFT library~\citep{peft} to apply LoRA~\citep{hu2022lora} to all query and key projection matrices in attentions.
The PEFT hyperparameters are \texttt{r=64}, \texttt{lora\_alpha=32}, and \texttt{lora\_dropout=0.1}.

\textbf{Other hyperparameters for AT.}
In every AT experiment, we follow Eq.~(\ref{eq:loss-at}) to perform AT with AdamW for $125$ iterations, where the learning rate is set as $5\times 10^{-5}$ and the factor $\alpha$ is set as $0.2$.
Besides, the batch size is set as $64$, in which $8$ samples are jailbreak prompts crafted from data from the safety training set, and the remaining $56$ samples are from the utility training set.

\begin{figure}[t]
    \centering
    \begin{subfigure}{0.9\linewidth}
        \centering
        \includegraphics[width=\linewidth]{./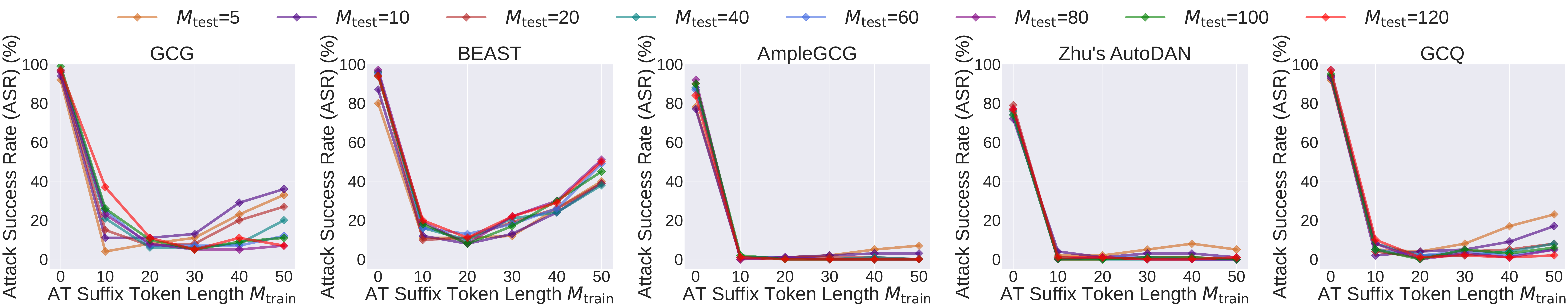}
        \subcaption{Mistral-7B-v0.3.}
    \end{subfigure}
    \begin{subfigure}{0.9\linewidth}
        \centering
        \includegraphics[width=\linewidth]{./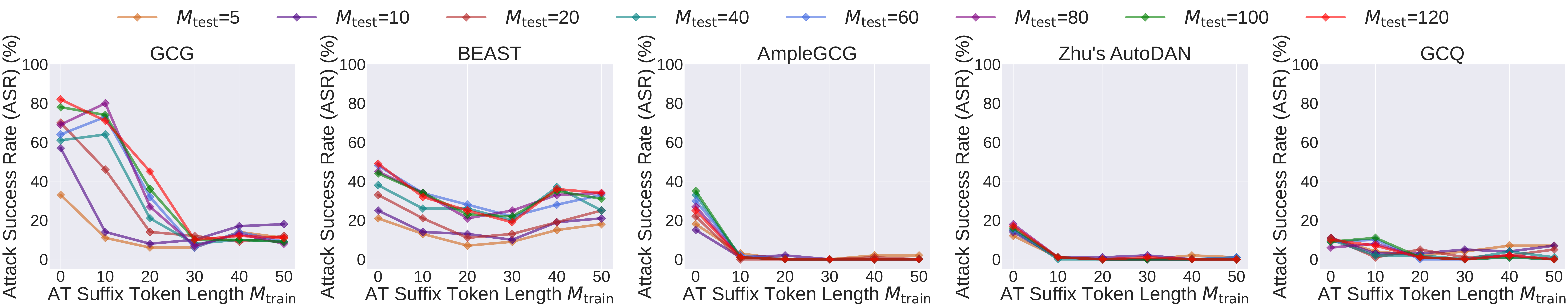}
        \subcaption{Llama-2-7B-Chat.}
    \end{subfigure}
    \begin{subfigure}{0.9\linewidth}
        \centering
        \includegraphics[width=\linewidth]{./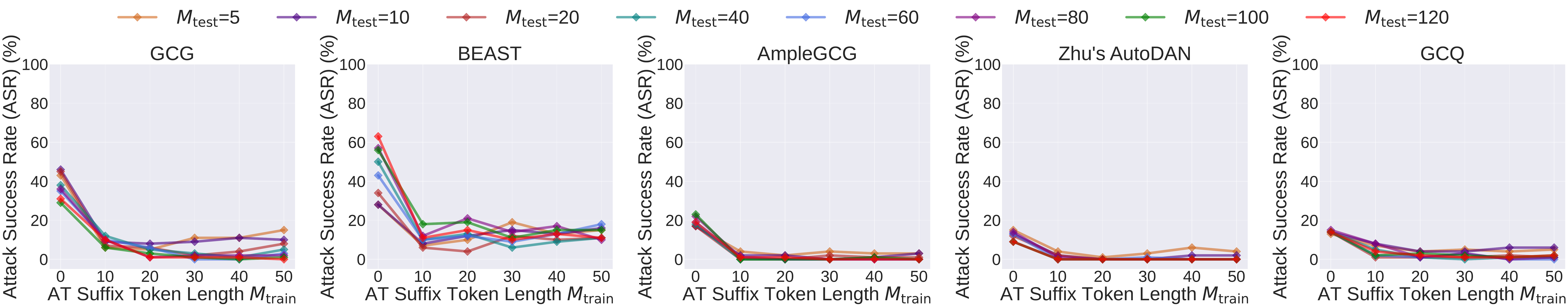}
        \subcaption{Llama-3-8B-Instruct.}
    \end{subfigure}
    \begin{subfigure}{0.9\linewidth}
        \centering
        \includegraphics[width=\linewidth]{./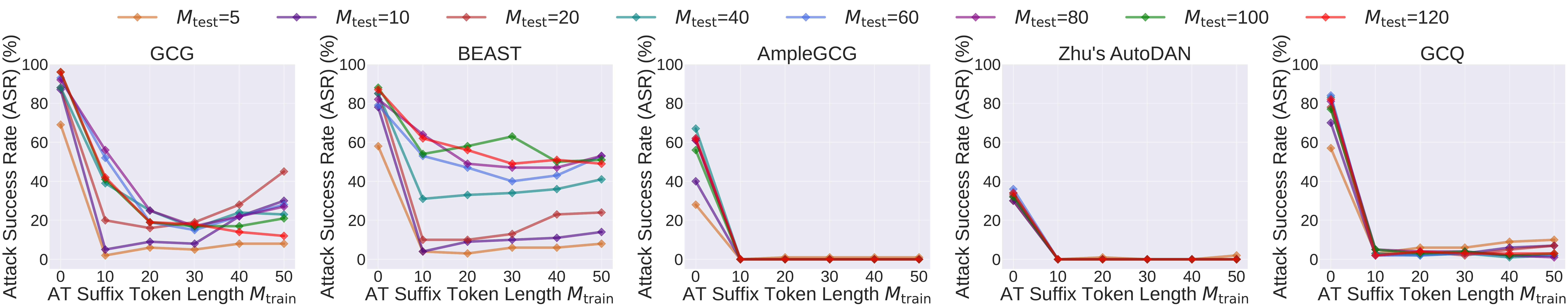}
        \subcaption{Qwen2.5-7B-Instruct.}
    \end{subfigure}
    \caption{
    Curves of the ASR versus the adversarial suffix token length during AT ({\it i.e.}, $M_{\text{train}}$) under jailbreak attacks with different adversarial suffix token lengths ({\it i.e.}, $M_{\text{test}}$).
    $M_{\text{train}} = 0$ means that AT is not performed on the evaluated model.
    A low ASR indicates a strong jailbreak robustness.
    }
    \label{fig:asr-vs-at-sfx-len}
\end{figure}

\subsection{Model evaluations}
\label{app:exp:evaluation}

\textbf{Robustness evaluation.}
We report the Attack Success Rate (ASR) of jailbreak attacks to assess the robustness of models.
Specifically, for each instruction from the safety test set, we synthesize the corresponding jailbreak prompt and use it to induce the targeted LLM to generate $10$ responses.
Then, we use an LLM-based judge from \citet{mazeika2024harmbench}, which was fine-tuned from the Llama-2-13B model~\footnote{\url{https://huggingface.co/cais/HarmBench-Llama-2-13b-cls}}, to determine whether the $10$ generated LLM responses are harmful or not.
If any of them is determined to be harmful, the jailbreak attack is considered successful.

\textbf{Jailbreak attacks for robustness evaluation.}
For every suffix attack, the adversarial suffix length is varied within $\{5,10,20,40,60,80,100,120\}$.
Besides, for jailbreak hyperparameters described in Appendix~\ref{app:exp:jailbreak}:
\begin{itemize}
\item
For the GCG attack, we set $T$ as $500$, $k$ as $256$, and $T$ as 64.

\item
For the BEAST attack, we set $k_1$ as $64$ and $k_2$ as $16$.

\item
For the AmpleGCG attack, we use an official adversarial suffix generator as described in Appendix~\ref{app:exp:jailbreak}.

\item
For the Zhu's AutoDAN attack, we set $T$ as $3$, $w_1$ as $10$, $w_2$ as $100$, $B$ as $256$, and $\tau$ as $2$.

\item
For the GCQ attack, we set $T$ as $200$ and $b_p$, $b_q$, and $B$ all as $128$.

\item
For the PAIR attack, we set the base model for the attacker as Mistral-8x7B-Instruct-v0.1, the base model for the judger as Llama-3-70B-Instruct, and the number of iteratively refining is fixed to $10$.

\item
For the DeepInception, as explained in Appendix~\ref{app:exp:jailbreak}, we use a role-play-based prompt to perform the attack, and there are no other hyperparameters that need to be tuned for this attack.
\end{itemize}

\textbf{Utility evaluation.}
We use the AlpacaEval2 framework~\citep{dubois2024length} to report the Length-controlled WinRate (LC-WinRate) of targeted models against a reference model based on their output qualities on the utility test set.
An LC-WinRate of $50\%$ means that the output qualities of the two models are equal, while an LC-WinRate of $100\%$ means that the targeted model is consistently better than the reference model.
We use Davinci003 as the reference model and use the Llama-3-70B model to judge output quality.
The official code of the AlpacaEval2 framework is used to conduct the evaluation.
Additionally, the Llama-3-70B judger is run locally via the vLLM model serving framework~\citep{kwon2023efficient}.

\subsection{Additional experimental results}

\label{app:exp:additional-result}

This section collects additional experimental results (\textit{i.e.}, Figure~\ref{fig:asr-vs-at-sfx-len}) omitted from Section~\ref{sec:at-exp:result}.

From Figure~\ref{fig:asr-vs-at-sfx-len}, we find that GCG-based AT is extremely effective in improving model robustness against GCG, AmpleGCG, and Zhu's AutoDAN.
For the BEAST attack, GCG-based AT can also suppress the ASR to no more than $50\%$.
Further, when the AT adversarial suffix token length is set to $20$, AT is already able to reduce the ASR by at least $30\%$ under all settings.
It is worth noting that the adversarial suffix length during AT is only up to $50$, while that during jailbreaking can vary from $5$ to $120$.
All these results indicate the effectiveness of defending against long-length jailbreaking with short-length AT.

\section{More experiments}

This section presents experiments beyond those in Section~\ref{sec:at-exp}.

% \begin{wraptable}{r}{0.5\linewidth}
\begin{table}[t]
\centering
\caption{
ASR (\%) of suffix jailbreaking against LLMs trained with Circuit Breakers~\citep{zou2023universal} or LLM AT.
A low ASR suggests a strong jailbreak robustness of the targeted model.
}
\label{tab:more-exp:circuit-breakers}
\tiny

\begin{subtable}{\linewidth}
\centering
\subcaption{ASRs of different jailbreak attacks against Mistral-7B.}
\begin{tabular}{c l c c c c c c c c}
\toprule
\multirow{2}{*}{Attack} & \multirow{2}{*}{Defense} & \multicolumn{8}{c}{Adversarial Suffix Token Length $M_{\text{test}}$ in Jailbreaking} \\
\cmidrule(lr){3-10}
& & $5$ & $10$ & $20$ & $40$ & $60$ & $80$ & $100$ & $120$ \\
\midrule
\multirow{3}{*}{\centering GCG}
& Circuit Breakers~\citep{zou2024improving} & 21.0 & 20.0 & 21.0 & 23.0 & 23.0 & 28.0 & 28.0 & 23.0 \\
& LLM AT ($M_{\text{train}} = 20$) & \textbf{8.0} & \textbf{11.0} & \textbf{7.0} & \textbf{6.0} & \textbf{7.0} & 8.0 & 10.0 & 11.0 \\
& LLM AT ($M_{\text{train}} = 30$) & 11.0 & 13.0 & 8.0 & \textbf{6.0} & \textbf{7.0} & \textbf{5.0} & \textbf{5.0} & \textbf{5.0} \\
\midrule
\multirow{3}{*}{\centering BEAST}
& Circuit Breakers~\citep{zou2024improving} & 19.0 & 21.0 & 20.0 & 24.0 & 25.0 & 25.0 & 25.0 & 27.0 \\
& LLM AT ($M_{\text{train}} = 20$) & \textbf{11.0} & \textbf{8.0} & \textbf{11.0} & \textbf{10.0} & \textbf{13.0} & \textbf{8.0} & \textbf{8.0} & \textbf{11.0} \\
& LLM AT ($M_{\text{train}} = 30$) & 12.0 & 13.0 & 19.0 & 21.0 & 18.0 & 22.0 & 17.0 & 22.0 \\
\bottomrule
\end{tabular}
\end{subtable}

\begin{subtable}{\linewidth}
\centering
\subcaption{ASRs of different jailbreak attacks against Llama-3-8B.}
\begin{tabular}{c l c c c c c c c c}
\toprule
\multirow{2}{*}{Model} & \multirow{2}{*}{Defense} & \multicolumn{8}{c}{Adversarial Suffix Token Length $M_{\text{test}}$ in Jailbreaking} \\
\cmidrule(lr){3-10}
& & $5$ & $10$ & $20$ & $40$ & $60$ & $80$ & $100$ & $120$ \\
\midrule
\multirow{3}{*}{\centering GCG}
& Circuit Breakers~\citep{zou2024improving} & \textbf{3.0} & \textbf{5.0} & 3.0 & 4.0 & 3.0 & 5.0 & 5.0 & 7.0 \\
& LLM AT ($M_{\text{train}} = 20$) & 5.0 & 8.0 & 6.0 & 5.0 & 6.0 & \textbf{1.0} & 3.0 & 1.0 \\
& LLM AT ($M_{\text{train}} = 30$) & 11.0 & 9.0 & \textbf{2.0} & \textbf{3.0} & \textbf{0.0} & 2.0 & \textbf{1.0} & \textbf{1.0} \\
\midrule
\multirow{3}{*}{\centering BEAST}
& Circuit Breakers~\citep{zou2024improving} & 12.0 & \textbf{9.0} & 11.0 & 12.0 & 16.0 & 15.0 & 17.0 & 15.0 \\
& LLM AT ($M_{\text{train}} = 20$) & \textbf{10.0} & 12.0 & \textbf{4.0} & 13.0 & 12.0 & 21.0 & 19.0 & 15.0 \\
& LLM AT ($M_{\text{train}} = 30$) & 19.0 & 15.0 & 12.0 & \textbf{6.0} & \textbf{9.0} & \textbf{14.0} & \textbf{11.0} & \textbf{10.0} \\
\bottomrule
\end{tabular}
\end{subtable}
\end{table}
% \end{wraptable}

\subsection{Comparison with other jailbreak defense baselines}

Here, we compare the jailbreak defense performance of short-length LLM AT with that of another jailbreak defense baseline, the Circuit Breakers method~\citep{zou2024improving}.
Specifically, we adopt GCG and BEAST attacks to assess the jailbreak robustness of Mistral-7B and Llama-3-8B LLMs protected by short-length LLM AT or the Circuit Breakers defense.
For short-length LLM AT, we set the adversarial suffix length $M_{\text{train}}$ during AT to a small value of $20$ or $30$.
For the Circuit Breakers defense, we directly use the trained
Mistral-7B~\footnote{\url{https://huggingface.co/GraySwanAI/Mistral-7B-Instruct-RR}}
and Llama-3-8B~\footnote{\url{https://huggingface.co/GraySwanAI/Llama-3-8B-Instruct-RR}}
models officially released by \citet{zou2024improving}.

The resulting jailbreak ASRs are collected and presented in Table~\ref{tab:more-exp:circuit-breakers}, from which we observe that:
(1)~When the base model is Mistral-7B, short-length LLM AT consistently achieves better jailbreak robustness than Circuit Breakers under different jailbreak attack adversarial suffix lengths.
(2)~When the base model is Llama-3-8B, the two defense methods achieve similar performance.

\subsection{LLM AT with the BEAST attack}

In our main experiments in Section~\ref{sec:at-exp}, we solely use the GCG attack to synthesize jailbreak prompts for LLM AT.
In this section, we investigate whether our theoretical results still empirically hold for AT with jailbreak attacks other than GCG.
Specifically, we now perform LLM AT with the BEAST attack on Vicuna-7B-v1.5 and Qwen2.5-7B-Instruct models.
For the hyperparameters of BEAST described in Appendix~\ref{app:exp:jailbreak}, we vary the adversarial suffix token length within $\{5,10,20,30,40,50\}$, and set $k_1$ to $64$ and $k_2$ to $16$.
All other settings of LLM AT follow those described in Section~\ref{app:exp:training}.

Experimental results are presented in Figure~\ref{fig:atbeast:asr} and Table~\ref{tab:atbeast:corr-ratio-vs-asr}.
From Figure~\ref{fig:atbeast:asr-vs-ratio} and Table~\ref{tab:atbeast:corr-ratio-vs-asr}, we observe a statistically significant positive correlation between the suffix jailbreak robustness and the ratio $\sqrt{M_{\text{test}}}/M_{\text{train}}$ in every experiment, which indicates that our ICL-AT theory still holds for BEAST-based LLM AT.
Besides, from Figure~\ref{fig:atbeast:asr-vs-at-sfx-len}, one can find that AT with a short adversarial suffix length $M_{\text{train}}$ of $30$ can already reduce the ASR from nearly $100\%$ to around $20\%$ in every evaluation case, which demonstrates the effectiveness of short-length BEAST-based LLM AT in defending against jailbreak attacks.

\begin{figure}[t]
\centering
\begin{subfigure}{0.32\textwidth}
    \includegraphics[width=\linewidth]{./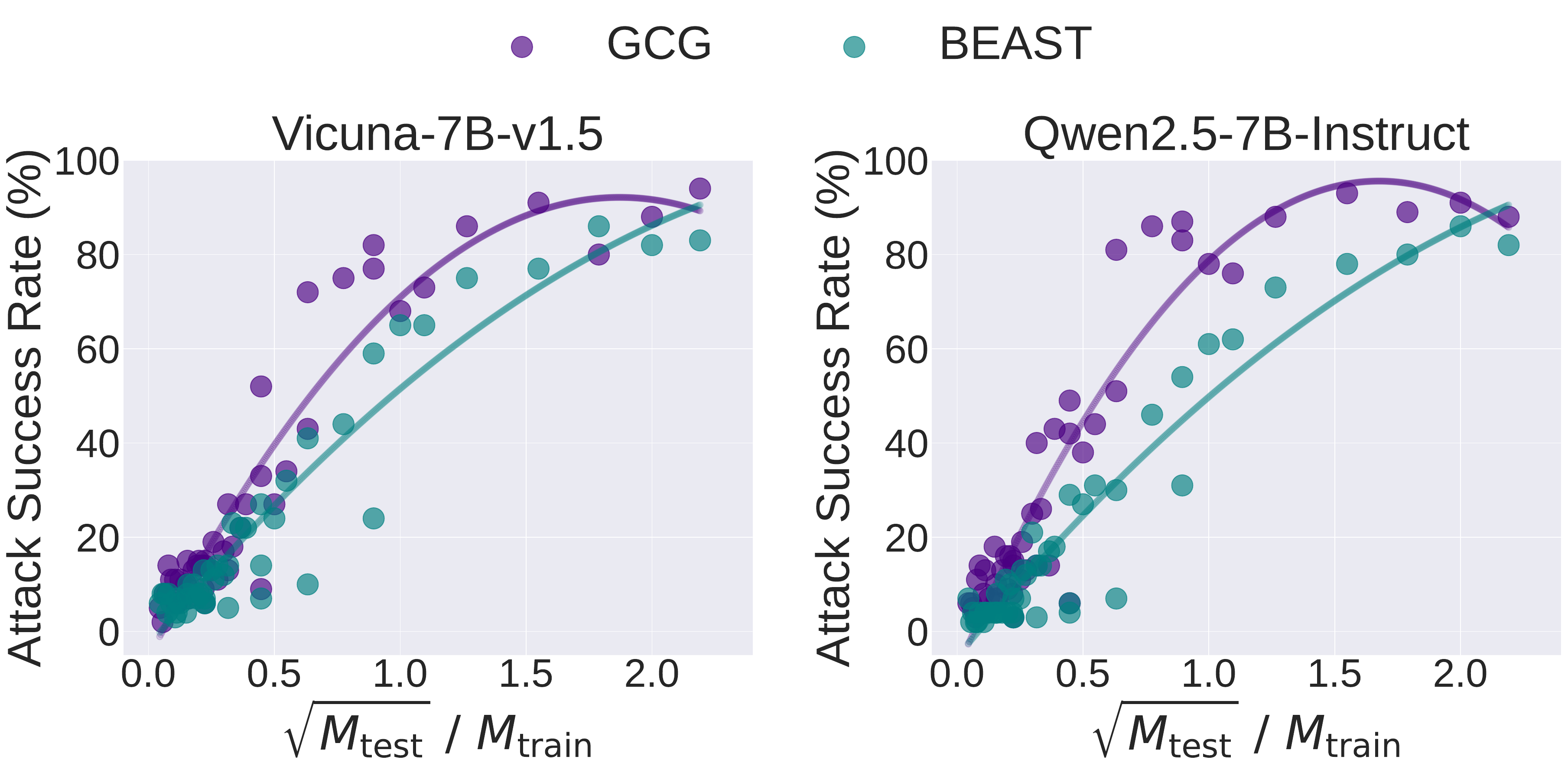}
    \subcaption{
    Scatter plots of ASR to the ratio $\sqrt{M_{\text{test}}} / M_{\text{train}}$.
    }
    \label{fig:atbeast:asr-vs-ratio}
\end{subfigure}
\hspace{1em}
\begin{subfigure}{0.64\textwidth}
    \includegraphics[width=\linewidth]{./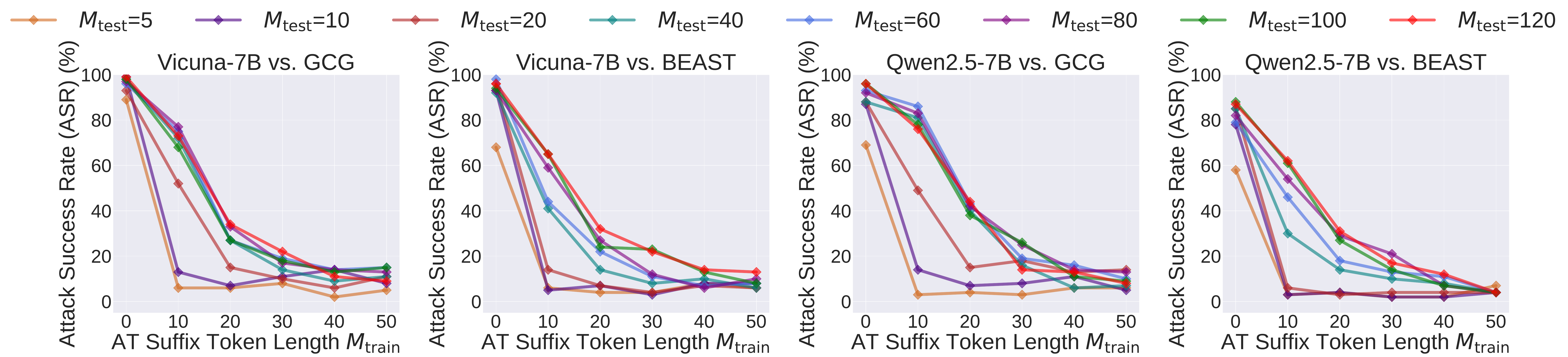}
    \subcaption{
    ASR versus $M_{\text{train}}$ under different $M_{\text{test}}$.
    $M_{\text{train}} = 0$ means that AT is not performed on the evaluated model.
    }
    \label{fig:atbeast:asr-vs-at-sfx-len}
\end{subfigure}
\caption{ASR of models trained from the BEAST-based LLM AT. A low ASR indicates a strong jailbreak robustness of the model.}
\label{fig:atbeast:asr}
\end{figure}

\begin{table}[t]
\centering
\caption{
PCCs and $p$-values calculated between ASR and ratio $\sqrt{M_{\text{test}}} / M_{\text{train}}$ on LLMs adversarially trained with the BEAST attack.
$p < 5.00\times 10^{-2}$ means that the correlation between ASR and the ratio is considered statistically significant.
}
\tiny
\begin{tabular}{l p{1.8em} p{5.9em} p{1.8em} p{5.9em} }
\toprule
\multirow{2}{*}{\centering Model} & \multicolumn{2}{c}{GCG Attack} & \multicolumn{2}{c}{BEAST Attack} \\
\cmidrule(lr){2-3} \cmidrule(lr){4-5}
& \multirow{1}{3em}{\centering PCC($\uparrow$)}
& \multirow{1}{6em}{\centering $p$-value($\downarrow$)}
& \multirow{1}{3em}{\centering PCC($\uparrow$)}
& \multirow{1}{6em}{\centering $p$-value($\downarrow$)} \\
\midrule

Vicuna-7B  & $0.91$ & $\mathbf{5.3\times 10^{-19}}$ & $0.94$ & $\mathbf{6.7\times 10^{-24}}$ \\
Qwen2.5-7B & $0.88$ & $\mathbf{2.2\times 10^{-16}}$ & $0.95$ & $\mathbf{5.0\times 10^{-25}}$ \\
\bottomrule
\end{tabular}
\label{tab:atbeast:corr-ratio-vs-asr}
\end{table}

\subsection{LLM AT on larger models}

We also perform short-length LLM AT on Vicuna-13B-v1.5, which is a model larger than those 7B/8B LLMs adopted in our main experiments in Section~\ref{sec:at-exp}.
All hyperparameters for LLM AT follow those described in Appendix~\ref{app:exp:training}.
Results are presented in Table~\ref{tab:more-exp:vicuna-13b}, which shows that AT with an adversarial suffix token length as short as $20$ can already reduce the ASR of the GCG attack from nearly $99\%$ to around $10\%$ in the worst case.
This suggests the generalization of our theoretical findings beyond 7B/8B models.

\begin{table}[t]
\centering
\caption{
ASR (\%) of the GCG attack against Vicuna-13B-v1.5 trained with LLM AT.
A low ASR suggests a strong jailbreak robustness of the targeted model.
}
\label{tab:more-exp:vicuna-13b}
\tiny
\centering
\subcaption{ASRs of different jailbreak attacks against Mistral-7B.}
\begin{tabular}{c l c c c c c c c c}
\toprule
\multirow{2}{*}{Attack} & \multirow{2}{*}{Defense} & \multicolumn{8}{c}{Adversarial Suffix Token Length $M_{\text{test}}$ in Jailbreaking} \\
\cmidrule(lr){3-10}
& & $5$ & $10$ & $20$ & $40$ & $60$ & $80$ & $100$ & $120$ \\
\midrule
\multirow{3}{*}{\centering GCG}
& None & 92.0 & 94.0 & 99.0 & 96.0 & 98.0 & 96.0 & 99.0 & 98.0 \\
& LLM AT ($M_{\text{train}} = 5$) & \textbf{11.0} & 19.0 & 30.0 & 53.0 & 55.0 & 67.0 & 70.0 & 68.0 \\
& LLM AT ($M_{\text{train}} = 20$) & 12.0 & \textbf{9.0} & 11.0 & \textbf{6.0} & \textbf{6.0} & \textbf{6.0} & \textbf{8.0} & \textbf{7.0} \\
\bottomrule
\end{tabular}
\end{table}

\end{document}